\def\eqref#1{equation~\ref{#1}}
\def\1{\bm{1}}
\DeclareMathAlphabet{\mathsfit}{\encodingdefault}{\sfdefault}{m}{sl}
\SetMathAlphabet{\mathsfit}{bold}{\encodingdefault}{\sfdefault}{bx}{n}
\newcommand{\methodname}{O\textsuperscript{3}\xspace}
\newtcolorbox{ourtodo}[1][]{
  colback=yellow!10,
  colframe=red!50!black,
  title=TODO,
  #1
}
\theoremstyle{plain}
\newtheorem{theorem}{Theorem}[section]
\newtheorem{lemma}[theorem]{Lemma}
\newtheorem{corollary}[theorem]{Corollary}
\theoremstyle{definition}
\theoremstyle{remark}
\title{
Sample-Efficient Optimisation over the Outputs of Generative Models
}
\author{
Samuel Willis\textsuperscript{*, 1} \And
Paul Duckworth\textsuperscript{6} \And
Jack Simons\textsuperscript{6} \And
Aleksandra Kalisz\textsuperscript{6} \And
Krisztina Sinkovics\textsuperscript{6} \And
Noam Ghenassia\textsuperscript{6} \And
Shikha Surana\textsuperscript{6} \And
Henry T. Oldroyd\textsuperscript{2} \And
Alexandru I. Stere\textsuperscript{4} \And
Dragos D. Margineantu\textsuperscript{5} \And
Carl Henrik Ek\textsuperscript{1,3} \And
Henry Moss\textsuperscript{1, 2} \And
Erik Bodin\textsuperscript{*, 1,7}
}
\begin{document}
\maketitle

\renewcommand{\thefootnote}{\fnsymbol{footnote}}
\footnotetext[1]{Equal contribution.}
\renewcommand{\thefootnote}{\arabic{footnote}}
\setcounter{footnote}{0}

\vspace{-1.75em}
\hspace{3.5em}
\textsuperscript{1}University of Cambridge
\hspace{1em}
\textsuperscript{2}Lancaster University
\hspace{1em}
\textsuperscript{3}Karolinska Institutet
\hspace{1em}
\vspace{0.75em}
\\
\makebox[\textwidth][l]{\hspace{2.2em} 
\textsuperscript{4}Boeing Commercial Airplanes
\hspace{2em}
\textsuperscript{5}Boeing AI
\hspace{2em}
\textsuperscript{6}InstaDeep
\hspace{2em}
\textsuperscript{7}Monumo}
\vspace{2.5em}

\begin{abstract}
Modern generative AI models, such as diffusion and flow matching models, can sample from rich data distributions. However, many applications, especially in science and engineering, require more than drawing samples from the model distribution: they require searching within this distribution for samples that optimise task-specific criteria. In this work, we propose \methodname{} (Optimisation Over the Outputs of Generative Models), a method for sample-efficient black-box optimisation over continuous-variable diffusion and flow-matching models. \methodname{} is built around \emph{surrogate latent spaces}: low-dimensional Euclidean embeddings that can be extracted from a generative model without additional training. The resulting representations have controllable dimensionality and support the direct application of standard optimisation algorithms. We show, on image and protein design tasks, that surrogate-space optimisation finds substantially higher-scoring samples than standard sampling or optimisation in the original latent space. Our method is model- and optimiser-agnostic, incurs negligible additional cost over standard generation, and requires no retraining or fine-tuning of the generative model.
\end{abstract}

\section{Introduction}
\label{sec:introduction}

Black-box optimisation algorithms aim to find an optimum of a function whose explicit form is unknown, but which can be queried through evaluations. Their performance is strongly affected by the dimensionality of the search space and the structure of the objective function. In high-dimensional settings, one way to simplify the problem is to optimise over an alternative representation of the data, for example one that is lower-dimensional or in which the objective is smoother or more stationary.

One route to such representations is to embed the data in a low-dimensional space using projections. Examples include REMBO~\citep{wang2016bayesian}, which uses random projections, methods that select axis-aligned subspaces~\citep{eriksson2021high}, and one-dimensional search sequences~\citep{ngo2025boids}. Another route is to use representations learned by generative models. While many latent variable models are applicable in this setting, variational autoencoders (VAEs)~\citep{Kingma:2014:aevb} have been especially influential. VAEs can learn low-dimensional continuous representations amenable to optimisation; however, the local structure of the variational distribution can lead to latent regions with low data support, making it difficult to faithfully unravel the data manifold. Several works address this limitation by fine-tuning the model during optimisation~\citep{grosnit2021high,tripp2020sample,maus2022local,chu2024inversion} or by constraining the generation process directly~\citep{boyar2024latent,moss2025return}.

In recent years, generative models have undergone a step-change in performance. Modern stochastic interpolation-based generative models~\cite{albergo2025stochastic}, such as diffusion and flow-matching models, provide a more general framework and substantially improved generative capabilities. However, unlike VAEs, these models do not directly provide a low-dimensional latent space suitable for optimisation. Existing optimisation approaches for such models therefore typically modify or condition the generative process itself. This includes training task-specific models from scratch~\citep{krishnamoorthy2023diffusion}, fine-tuning pre-trained models for a given objective~\citep{fan2023reinforcement,denker2025iterative}, requiring gradients, or learning auxiliary models of the objective function~\citep{gruver2023protein,yun2025posterior,oliveira2025generative,steinberg2024variational,yuan2024paretoflow,yao2024proud}. While these approaches demonstrate that stochastic interpolant models can be used for optimisation, they are typically task-specific and require either training the generative model or learning an additional auxiliary model, limiting their general applicability.

In this paper, we introduce a general framework for extracting optimisation-suitable latent representations from diffusion and flow-matching models. The approach is both model- and optimiser-agnostic, supports expensive gradient-free objectives, and incurs negligible additional cost. These search spaces are defined by example latents and mapped to valid model outputs, allowing standard optimisers to operate directly over the outputs of stochastic interpolation-based generative models. This enables highly sample-efficient optimisation without retraining the generative model.

\begin{figure*}[ht]
  \centering
  
  \begin{minipage}[b]{\textwidth}
    \centering
    {\footnotesize \textit{``A photo of two astronauts on the moon playing badminton while drinking tea''}}
    \vspace{0.4em}
  \end{minipage}
  
  \begin{minipage}[b]{\textwidth}
    \hspace{1.45em}
    \begin{minipage}[b]{0.66\textwidth}
      \centering
      {\tiny Seeds defining $\mathcal{U}^2$}\par\vspace{-0.8em}
      \rule{\linewidth}{0.4pt}
    \end{minipage}
    \begin{minipage}[b]{0.33\textwidth}\centering\end{minipage}
  \end{minipage}
  
  \vspace{0.2em}
  \begin{minipage}[b]{0.245\textwidth}
    \centering
    {\small $\bm{x}_1$}\\[0.2em]
    \begin{tikzpicture}[remember picture]
      \node[inner sep=0] (seed1)
        {\includegraphics[width=0.9\textwidth]{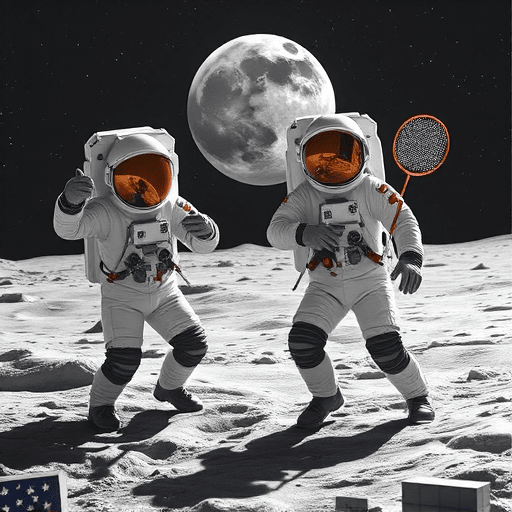}};
    \end{tikzpicture}\\[-0.2em]
    {\scriptsize Score: 1.31}
  \end{minipage}
  \begin{minipage}[b]{0.245\textwidth}
    \centering
    {\small $\bm{x}_2$}\\[0.2em]
    \begin{tikzpicture}[remember picture]
      \node[inner sep=0] (seed2)
        {\includegraphics[width=0.9\textwidth]{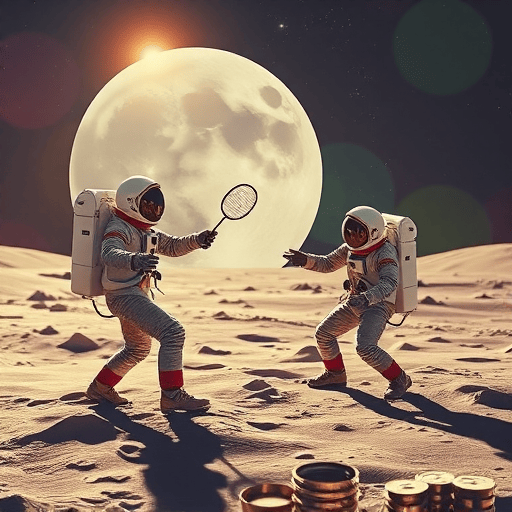}};
    \end{tikzpicture}\\[-0.2em]
    {\scriptsize Score: 1.61}
  \end{minipage}
  \begin{minipage}[b]{0.245\textwidth}
    \centering
    {\small $\bm{x}_3$}\\[0.2em]
    \begin{tikzpicture}[remember picture]
      \node[inner sep=0] (seed3)
        {\includegraphics[width=0.9\textwidth]{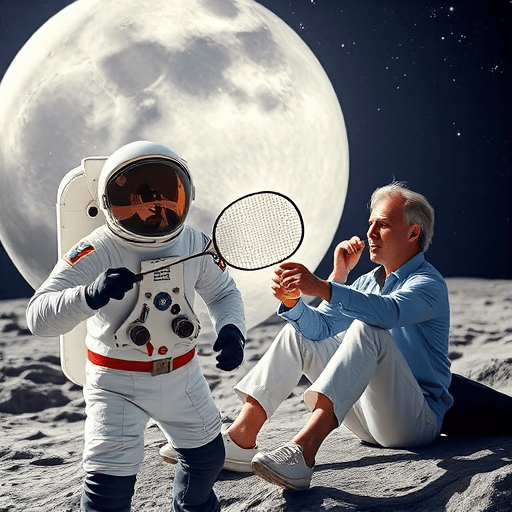}};
    \end{tikzpicture}\\[-0.2em]
    {\scriptsize Score: 1.64}
  \end{minipage}
  \begin{minipage}[b]{0.245\textwidth}
    \centering
    {\scriptsize Highest scoring image in $\mathcal{U}^2$}\\[0.2em]
    \begin{tikzpicture}[remember picture]
      \node[inner sep=0] (bestimg)
        {\includegraphics[width=0.9\textwidth]{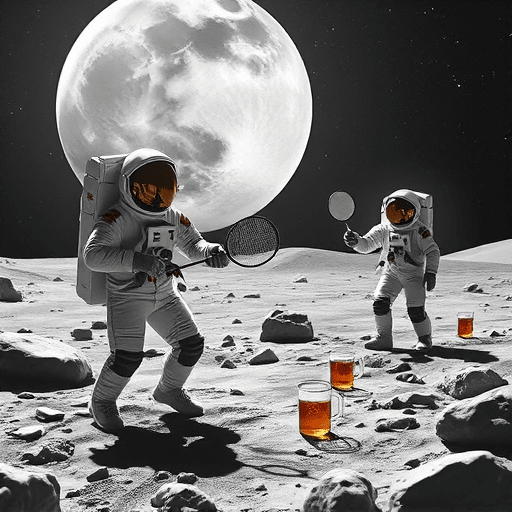}};
    \end{tikzpicture}\\[-0.2em]
    {\scriptsize Score: 1.96}
  \end{minipage}
  
  \vskip\baselineskip
  
  \def\axgap{4pt}        
  \def\axlblgapx{2pt}    
  \def\axlblgapy{6pt}    
  \def\arrowpad{2pt}
  \def\axlbl{\scriptsize}
  
  \hspace*{-2em}%
  \begin{minipage}[b]{0.395\textwidth}
    \centering
    \begin{tikzpicture}[baseline=(img.south)]
      \node[anchor=south west, inner sep=0, outer sep=0] (img) at (0,0)
        {\includegraphics[width=\textwidth]{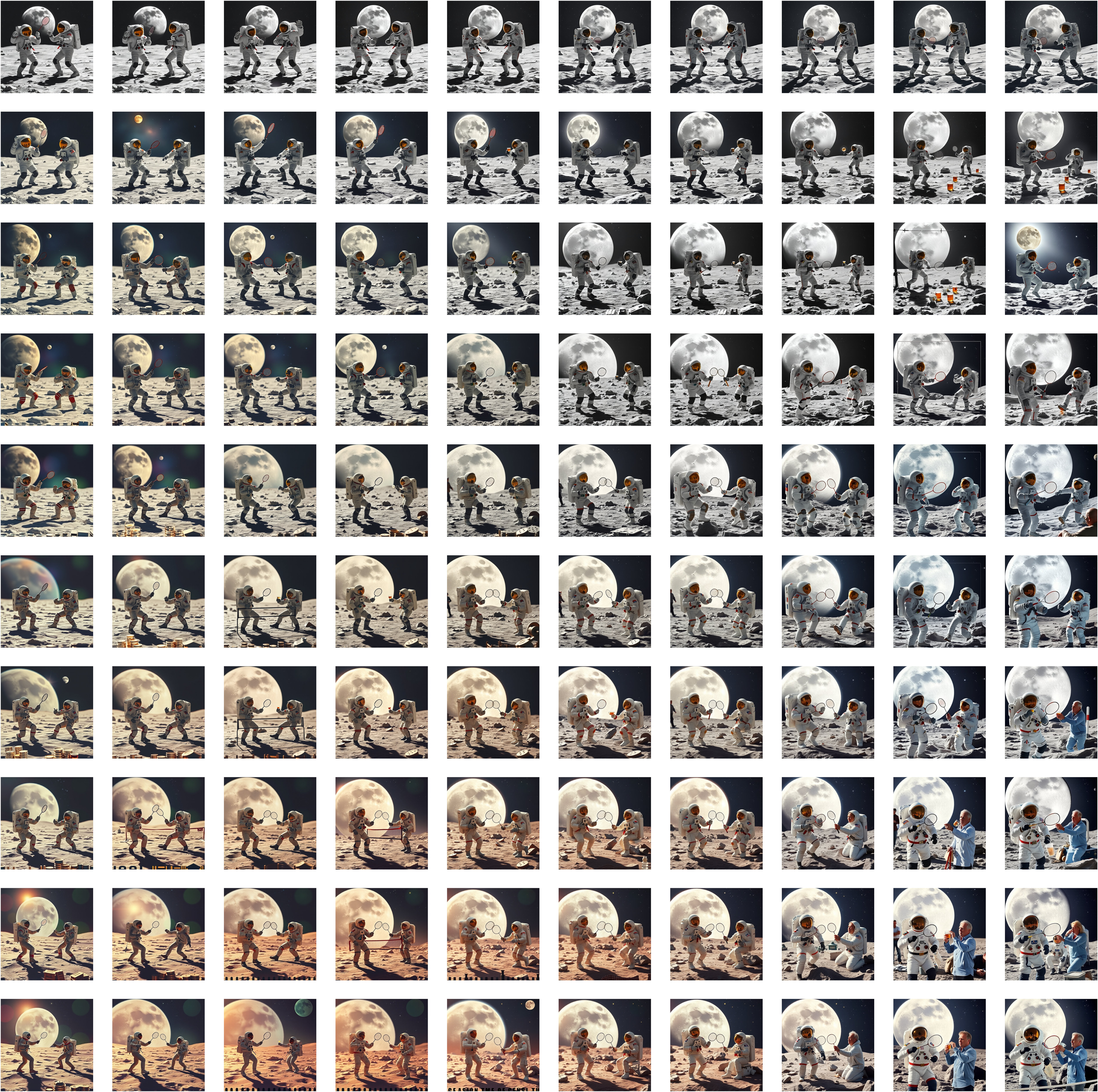}};
      \path let \p1=(img.south west), \p2=(img.north east) in
        coordinate (SW) at (\x1,\y1)
        coordinate (SE) at (\x2,\y1)
        coordinate (NW) at (\x1,\y2);
      \draw[-{Stealth}, line width=0.3pt]
        ($(SW)+(\arrowpad,-\axgap)$) -- ($(SE)+(-\arrowpad,-\axgap)$)
        node[midway, anchor=center, below=\axlblgapx] {\axlbl $u_1$};
      \draw[-{Stealth}, line width=0.3pt]
        ($(SW)+(-\axgap,\arrowpad)$) -- ($(NW)+(-\axgap,-\arrowpad)$)
        node[pos=0.55, anchor=center, left=\axlblgapy, rotate=90] {\axlbl $u_2$};
    \end{tikzpicture}
  \end{minipage}
  \hspace{1.5em}
  \begin{minipage}[b]{0.48\textwidth}
    \centering
    \begin{tikzpicture}[baseline=(img.south), remember picture]
      \node[anchor=south west, inner sep=0, outer sep=0] (img) at (0,0)
        {\includegraphics[width=\textwidth]{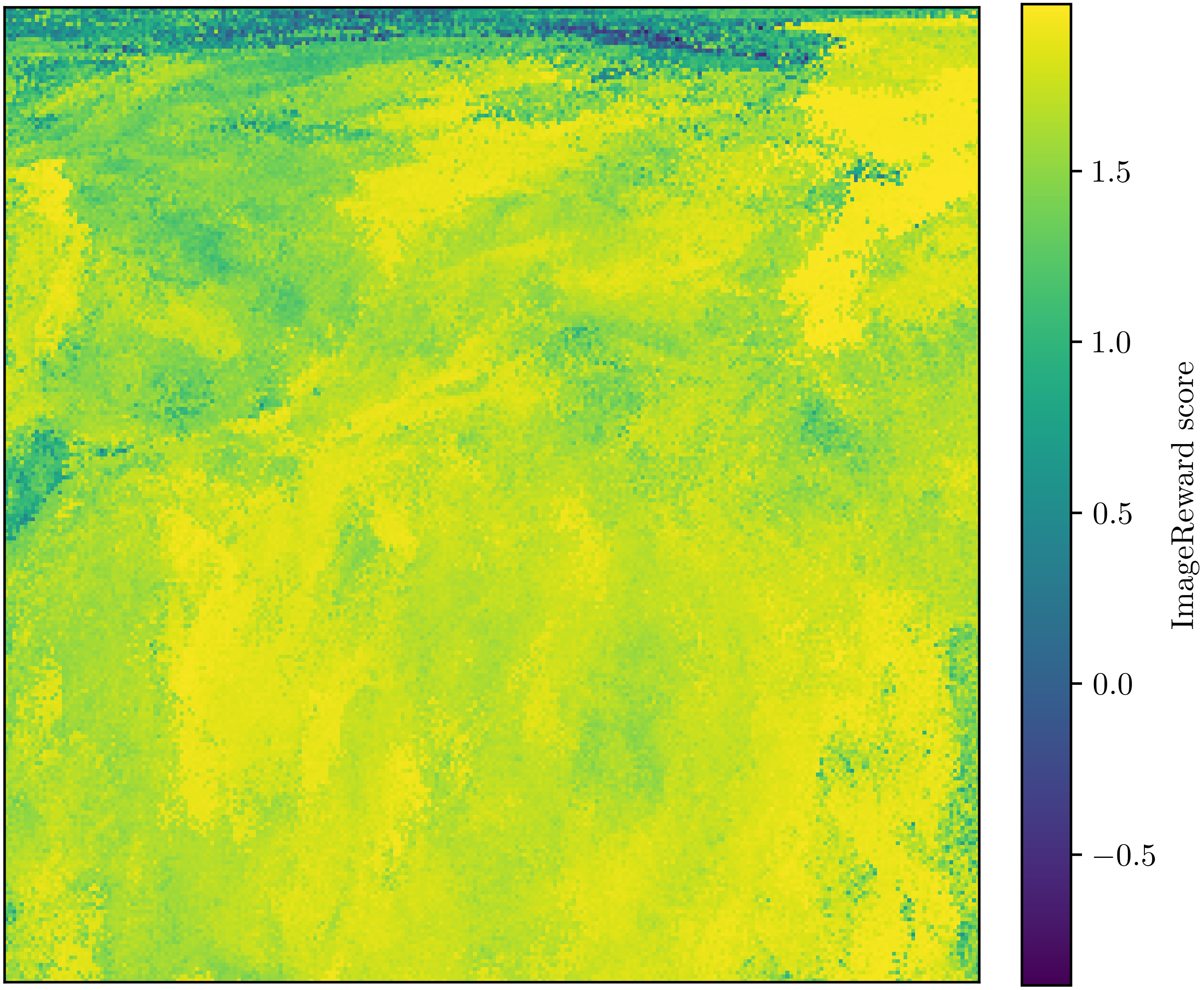}};
  
      \path let \p1=(img.south west), \p2=(img.north east) in
        coordinate (SW) at (\x1,\y1)
        coordinate (SE) at (\x2,\y1)
        coordinate (NW) at (\x1,\y2);
  
      \def\cbarfrac{0.18}
      \pgfmathsetmacro{\plotfrac}{1 - \cbarfrac}
      \coordinate (SEsq) at ($(SW)!\plotfrac!(SE)$);
  
      \draw[-{Stealth}, line width=0.3pt]
        ($(SW)+(\arrowpad,-\axgap)$) -- ($(SEsq)+(-\arrowpad,-\axgap)$)
        node[midway, below=\axlblgapx] {\axlbl $u_1$};
      \draw[-{Stealth}, line width=0.3pt]
        ($(SW)+(-\axgap,\arrowpad)$) -- ($(NW)+(-\axgap,-\arrowpad)$)
        node[pos=0.55, left=\axlblgapy, rotate=90] {\axlbl $u_2$};
  
       \newcommand{\PlaceUV}[3]{%
        \coordinate (#1@x) at ($(SW)!#2!(SEsq)$);
        \coordinate (#1)    at ($(#1@x)!#3!(#1@x|-NW)$);
      }
      \PlaceUV{u_seed1}{0.99}{0.995}  
      \PlaceUV{u_seed2}{0.002}{0.01}  
      \PlaceUV{u_seed3}{0.99}{0.01}  
      \PlaceUV{u_best}{0.93}{0.82}  
  
    \end{tikzpicture}
  \end{minipage}

  \begin{tikzpicture}[overlay, remember picture]
    \tikzset{seedarrow/.style={-{Stealth[length=4pt,width=5pt]}, line width=0.4pt}}
    \coordinate (seed1start) at ($ (seed1.south west)!0.9!(seed1.south east) + (0pt,-4pt) $);
    \coordinate (seed2start) at ($ (seed2.south west)!0.9!(seed2.south east) + (0pt,-4pt) $);
    \coordinate (seed3start) at ($ (seed3.south west)!0.9!(seed3.south east) + (0pt,-4pt) $);
    \coordinate (beststart)  at ($ (bestimg.south west)!0.9!(bestimg.south east) + (0pt,-4pt) $);
    \draw[seedarrow] (seed1start) .. controls +(-0.3,-1.0) and +(-1.0, 1.0) .. (u_seed1);
    \draw[seedarrow] (seed2start) .. controls +( 0.0,-4.0) and +(-0.6, 0.8) .. (u_seed2);
    \draw[seedarrow] (seed3start) .. controls +( 0.2,-1.0) and +(-0.8, 0.8) .. (u_seed3);
    \draw[seedarrow] (beststart)  .. controls +( 0.3,-1.0) and +(-0.4, 0.6) .. (u_best);
  \end{tikzpicture}
  
  \caption{
  \textbf{Surrogate latent spaces}. (\textit{bottom left}) Generations associated with a grid over a 2-dimensional surrogate latent space, formed using the latent vectors corresponding to the examples $\bm{x}_1$, $\bm{x}_2$, and $\bm{x}_3$ for the FLUX.1-schnell~\citep{flux2024} rectified flow model. (\textit{bottom right}) The ImageReward score for a target prompt (the objective function) over a dense grid ($256 \times 256$) of generations from our surrogate space show rich structure that can be exploited by standard optimisers. The example (`seed') images were obtained by sampling the model using the target prompt but they fail to follow it; by navigating our surrogate space, we can find images with better alignment. 
  }
  \label{fig:image_grid}
\end{figure*}

\section{Background}

Black-box optimisation targets problems where the objective function is explicitly unknown \textit{a-priori} but can be queried through function evaluations. These evaluations may be expensive, noisy, or derivative-free, and given that the function is not directly accessible such problems are referred to as ``black-box''. 
A typical black-box optimisation problem can be written as
$
\bm{x}^* = \arg\max_{\bm{x} \in \mathcal{X}} f(\bm{x}),
$
where \( f \colon \mathcal{X} \to \mathbb{R} \) is the objective function, and \( \mathcal{X} \) is the domain of interest. 

Many problems where we would like to apply black-box optimisation are characterised by high-dimensional and highly structured data, making it challenging to optimise \(f\) directly. One way to address this challenge is to optimise over a representation of the data that is better suited for search. This is commonly referred to as Latent Space Optimisation (LSO)~\citep{kusner2017grammar,gomez2018automatic,luo2018neural,lu2018structured}, and involves searching over a latent space $\mathcal{Z}$ instead of directly in $\mathcal{X}$:
\begin{equation}
    \label{eq:lso}
    \bm{z}^* = \arg\max_{\bm{z} \in \mathcal{Z}} f(g(\bm{z})),
\end{equation}
where \( g \colon \mathcal{Z} \to \mathcal{X} \) is a generative model. Traditionally, LSO has relied on low-dimensional generative latent-variable models, most prominently VAEs~\citep{Kingma:2014:aevb}.

Diffusion~\citep{ho2020denoising,song2020score,song2020denoising} and flow matching~\citep{lipman2022flow} models have been shown to learn latent representations capable of modelling highly structured, high-dimensional data. 
When sampled deterministically, a latent variable $\bm z \sim p(\bm z)$ fully specifies the generated sample $\bm x$, inducing a mapping $g:\mathcal Z\to\mathcal X$ that can in principle be used in Equation~\ref{eq:lso}.
However, the latent variables in these models typically have the same dimensionality as the data, making direct optimisation in $\mathcal Z$ challenging. A further challenge is that, until recently, it was unclear how to safely navigate these latent spaces beyond sampling.  
\citet{bodin2024linear} address this by showing that the latents are statistically structured: high-quality generations arise from latents whose statistics match typical draws from $p(\bm z)$, rather than from arbitrary points that may have high density under $p(\bm z)$ but are atypical in high dimensions. Consequently, optimisation algorithms querying the latent space must preserve these statistics to remain within the input regime where the generative model operates reliably. While~\citet{bodin2024linear} provide a framework for preserving these statistics, they do not address optimisation or how to construct effective search spaces. This is the focus of our paper.


\section{Surrogate latent spaces}
\label{sec:surrogate_latent_spaces}

Here we begin with a simplified derivation of our construction, restricted to cases where the latent distribution of the generative model is of the form $\bm{z} \sim \mathcal{N}(\bm{0}, \bm{I})$. Full details of how to apply to more general latent distributions are included in Appendix~\ref{appendix:non_gaussian}. We start from existing random subspace methods, then proceed to identify and address desirable properties required for valid latent manipulation and effective search in the black-box setting.

A standard approach in high-dimensional optimisation is to restrict the search to a low-dimensional subspace. REMBO~\citep{wang2016bayesian} follows this approach by introducing a random Gaussian embedding mapping low-dimensional ($K \ll D$) proposals $\bm{w}$ to the full space
\begin{equation}
    \bm{z}^* = \bm{Z}\bm{w}, \qquad \bm{w} \in \mathbb{R}^K,
\end{equation}
where $\bm{Z} \in \mathbb{R}^{D \times K}$ has i.i.d.\ Gaussian entries, $\bm{Z}_{ij} \sim \mathcal{N}(0,1)$, and $\bm{z}^* \in \mathbb{R}^D$ denotes the resulting proposal in the full $D$-dimensional space.

\textbf{Ensuring high-quality generations.}
For general $\bm w$, the REMBO embedding induces
\[
\bm z^*=\bm Z\bm w \sim \mathcal N(\bm 0,\|\bm w\|_2^2\bm I_D),
\]
which does not, in general, match the latent distribution $\bm z\sim\mathcal N(\bm 0,\bm I_D)$ expected by the generative model $g$ in Equation~\ref{eq:lso}. Consequently, the resulting latents may fall outside the typical input regime on which diffusion and flow-matching models operate reliably. As shown by~\citet{bodin2024linear}, such mismatched latent statistics can lead to poor or implausible generations. To avoid this, we use Latent Optimal Linear combinations (LOL)~\citep{bodin2024linear}. In the Gaussian setting, LOL reduces to,
\begin{equation}
    \bm z^*
    =
    \frac{\bm Z\bm w}{\|\bm w\|_2},
    \qquad
    \bm w\in\mathbb R^K.
\end{equation}
This construction ensures that $\bm z^*\sim\mathcal N(\bm 0,\bm I_D)$, matching the sampling statistics expected by the generative model and thereby enabling reliable generation~\citep{bodin2024linear}.

\begin{figure*}[t]
\centering

\newlength{\tile} \setlength{\tile}{0.036\textheight}
\newlength{\vgap} \setlength{\vgap}{0.002\textheight}
\newlength{\stackH}\setlength{\stackH}{\dimexpr 4\tile + 3\vgap\relax}

\newcommand{\colWide}{0.32\textwidth}   
\newcommand{\colNarrow}{0.16\textwidth} 

\def\imgGrid{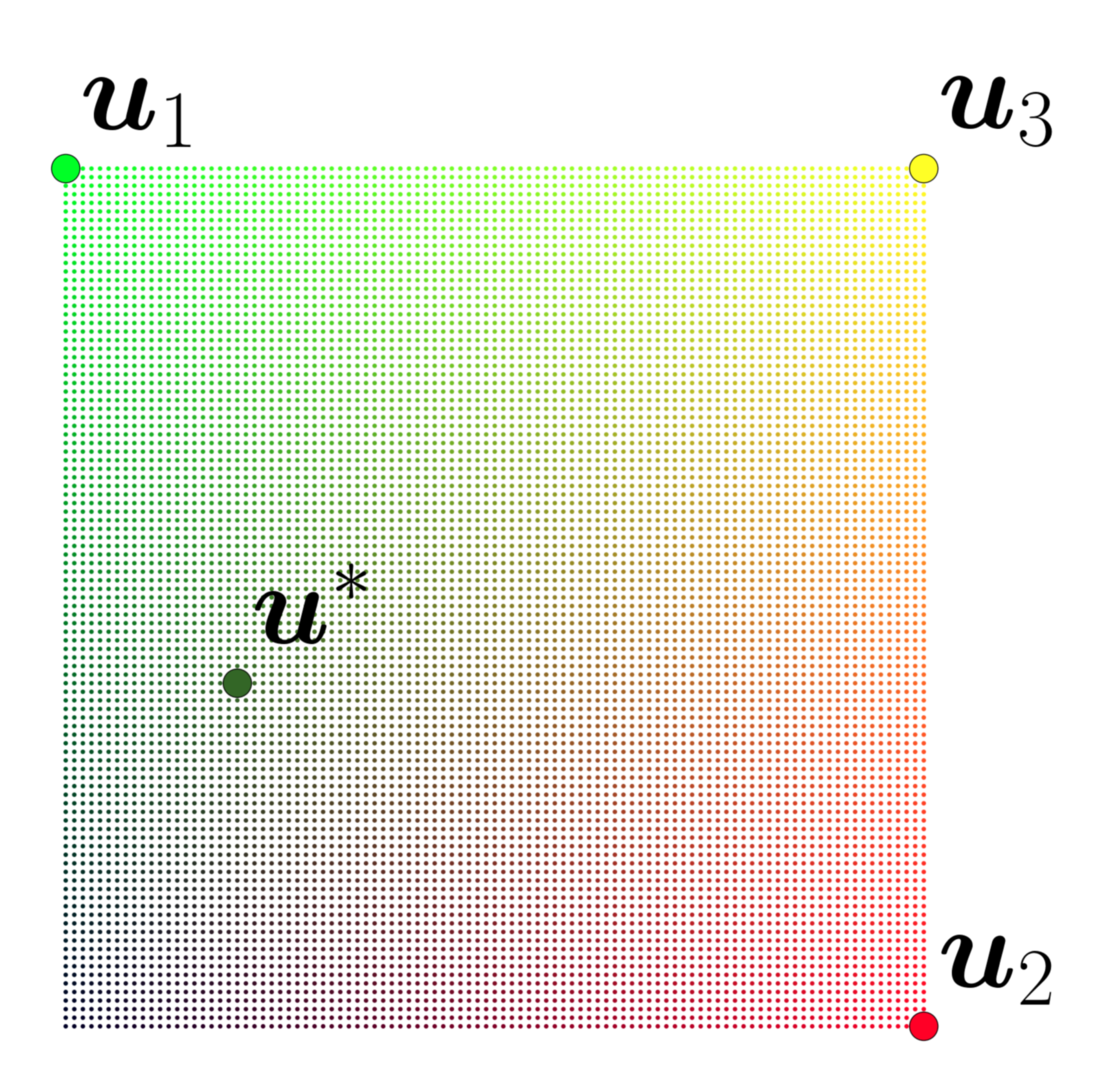}
\def\imgOrth{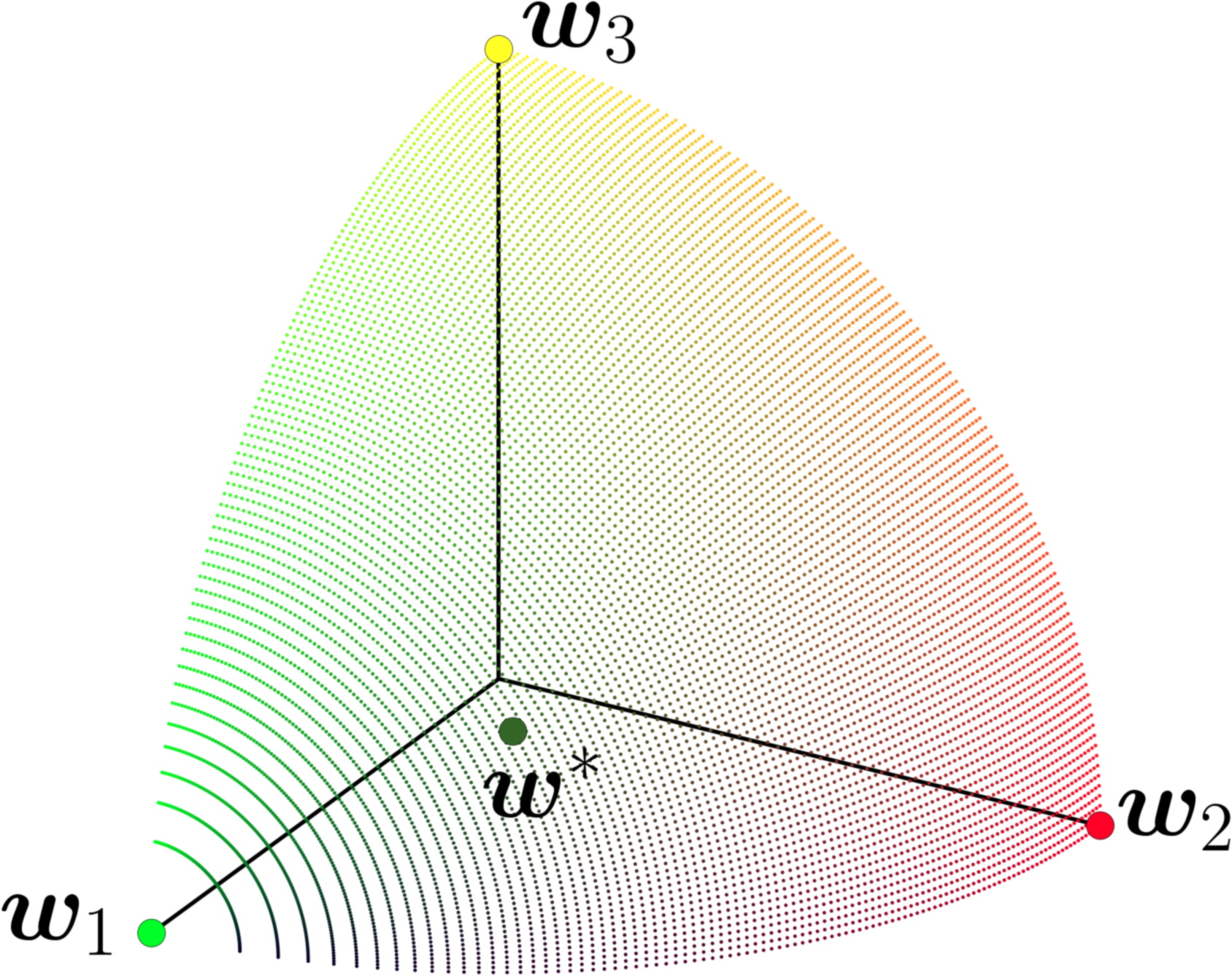}
\def\noiseA{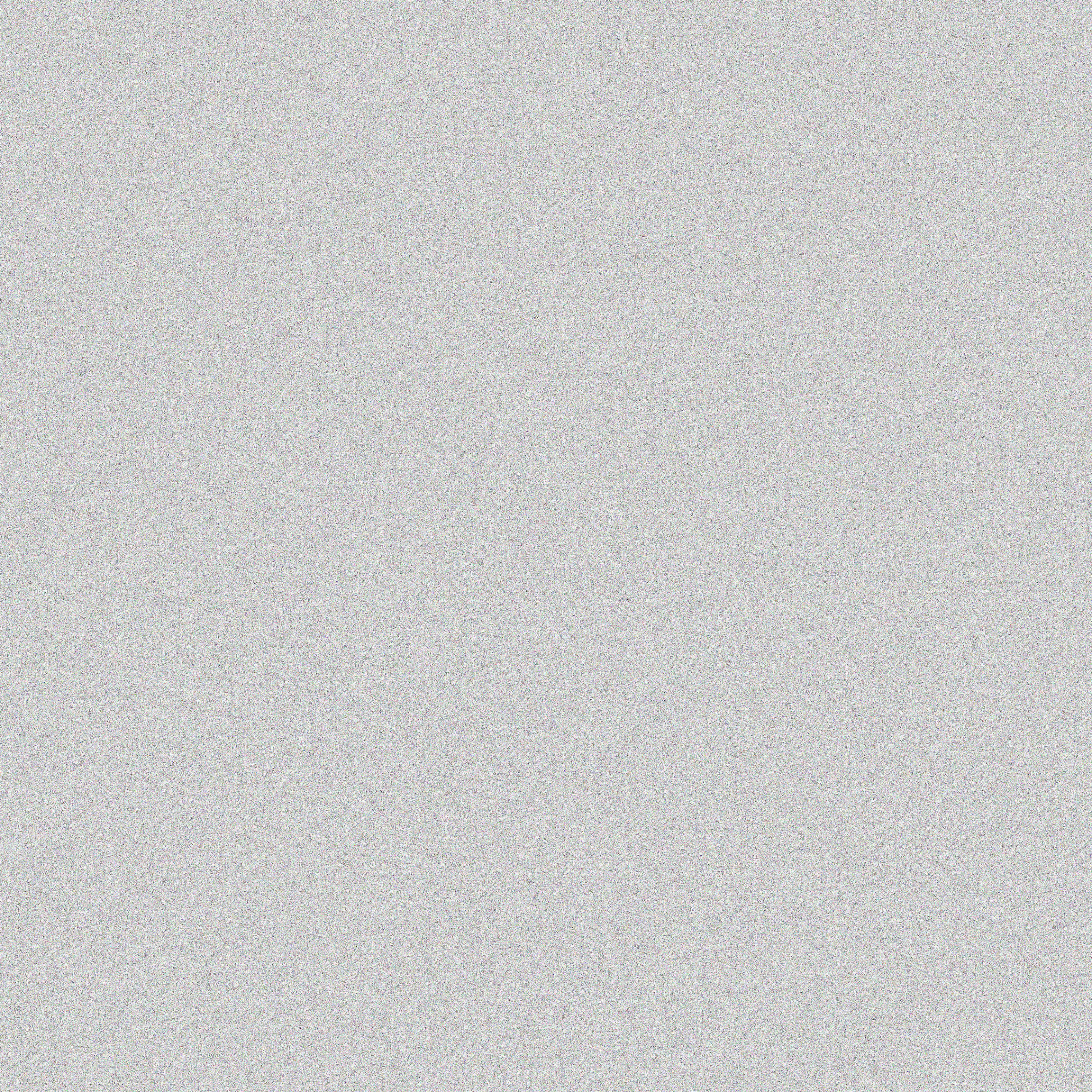}
\def\noiseB{compressed_figures/illustrative_figure/new_noise.jpg}
\def\noiseC{compressed_figures/illustrative_figure/new_noise.jpg}
\def\noiseD{compressed_figures/illustrative_figure/new_noise.jpg}
\def\astroA{compressed_figures/astronauts/seeds/0.png}
\def\astroB{compressed_figures/astronauts/seeds/1.png}
\def\astroC{compressed_figures/astronauts/seeds/2.png}
\def\astroD{compressed_figures/astronauts/best_of_10_by_10.png}

\tikzset{arr/.style={-stealth, line width=0.6pt}}
\tikzset{tight node/.style={inner sep=0,outer sep=0}}

\hspace{-0.5em}
\begin{minipage}[b]{\colWide}
  \centering
  \begin{tikzpicture}[remember picture,baseline=(img.south)]
    \node[tight node] (img) at (0,0)
      {\includegraphics[height=\stackH,width=\linewidth,keepaspectratio]{\imgGrid}};
    \coordinate (col1E) at ([xshift=-10pt]img.east);
  \end{tikzpicture}
  \\[0.25em]  
  {\hspace{-1em}$\mathcal{U}$}
\end{minipage}\hfill
%
\hspace{-1.5em}
\begin{minipage}[b]{\colWide}
  \centering
  \begin{tikzpicture}[remember picture,baseline=(img.south)]
    \node[tight node] (img) at (0,0)
      {\includegraphics[height=\stackH,width=\linewidth,keepaspectratio]{\imgOrth}};
    \coordinate (col2W) at ([xshift=13pt]img.west);
    \coordinate (col2E) at ([xshift=-13pt]img.east);
  \end{tikzpicture}
  \\[0.25em]  
  {\hspace{0.5em}$\mathbb{S}_{+}^{K-1}$}
\end{minipage}\hfill
%
\hspace{-0.5em}
\begin{minipage}[b]{\colNarrow}
  \centering
  \begin{tikzpicture}[remember picture,baseline=(n1.south)]
    \node[tight node] (n1) at (0,0)                              {\includegraphics[height=\tile]{\noiseA}};
    \node at (n1.south) [xshift=1pt, yshift=13pt] {\fontsize{11}{9}\selectfont $\bm{z}^*$};
    \node[tight node] (n2) at (0,\dimexpr \tile+\vgap\relax)      {\includegraphics[height=\tile]{\noiseB}};
    \node at (n2.south) [xshift=0pt, yshift=10pt] {\fontsize{11}{9}\selectfont $\bm{z}_3$};
    \node[tight node] (n3) at (0,\dimexpr 2\tile+2\vgap\relax)    {\includegraphics[height=\tile]{\noiseC}};
    \node at (n3.south) [xshift=0pt, yshift=10pt] {\fontsize{11}{9}\selectfont $\bm{z}_2$};
    \node[tight node] (n4) at (0,\dimexpr 3\tile+3\vgap\relax)    {\includegraphics[height=\tile]{\noiseD}};
    \node at (n4.south) [xshift=0pt, yshift=10pt] {\fontsize{11}{9}\selectfont $\bm{z}_1$};
    \coordinate (col3W) at ($ (n1.west)!0.5!(n4.west) $);
    \coordinate (col3E) at ($ (n1.east)!0.5!(n4.east) $);
    \coordinate (col3Woff) at ([xshift=-5pt]col3W);
    \coordinate (col3Eoff) at ([xshift= 5pt]col3E);
  \end{tikzpicture}
  \\[0.25em]  
  {$\mathcal{Z}$}
\end{minipage}\hfill
%
\hspace{0.5em}
\begin{minipage}[b]{\colNarrow}
  \centering
  \begin{tikzpicture}[remember picture,baseline=(a1.south)]
    \node[tight node] (a1) at (0,0)                              {\includegraphics[height=\tile]{\astroA}};
    \node at (a1.south) [xshift=24pt, yshift=13pt] {\fontsize{11}{9}\selectfont $\bm{x}^*$};
    \node[tight node] (a2) at (0,\dimexpr \tile+\vgap\relax)      {\includegraphics[height=\tile]{\astroB}};
    \node at (a2.south) [xshift=23pt, yshift=10pt] {\fontsize{11}{9}\selectfont $\bm{x}_3$};
    \node[tight node] (a3) at (0,\dimexpr 2\tile+2\vgap\relax)    {\includegraphics[height=\tile]{\astroC}};
    \node at (a3.south) [xshift=23pt, yshift=10pt] {\fontsize{11}{9}\selectfont $\bm{x}_2$};
    \node[tight node] (a4) at (0,\dimexpr 3\tile+3\vgap\relax)    {\includegraphics[height=\tile]{\astroD}};
    \node at (a4.south) [xshift=23pt, yshift=10pt] {\fontsize{11}{9}\selectfont $\bm{x}_1$};
    \coordinate (col4W) at ($ (a1.west)!0.5!(a4.west) $);
    \coordinate (col4E) at ($ (a1.east)!0.5!(a4.east) $);
    \coordinate (col4Woff) at ([xshift=-5pt]col4W);
    \coordinate (col4Eoff) at ([xshift= 5pt]col4E);
  \end{tikzpicture}
  \\[0.25em]  
  {\hspace{-2em}$\mathcal{X}$}
\end{minipage}

\begin{tikzpicture}[remember picture,overlay]
  \draw[arr] ([yshift= 6pt]col1E) -- node[midway,above] {$\phi_w(\bm{u})$}   ([yshift= 6pt]col2W);
  \draw[arr] ([yshift=-6pt]col2W) -- node[midway,below] {$\phi_w^{-1}(\bm{w})$} ([yshift=-6pt]col1E);

  \draw[arr] ([yshift= 6pt]col2E) -- node[midway,above] {$\ell(\bm{w})$}     ([yshift= 6pt]col3Woff);
  \draw[arr] ([yshift=-6pt]col3Woff) -- node[midway,below] {$\ell^{-1}(\bm{z})$} ([yshift=-6pt]col2E);

  \draw[arr] ([yshift= 6pt]col3Eoff) -- node[midway,above] {$g(\bm{z})$}      ([yshift= 6pt]col4Woff);
  \draw[arr] ([yshift=-6pt]col4Woff) -- node[midway,below] {$g^{-1}(\bm{x})$}   ([yshift=-6pt]col3Eoff);
\end{tikzpicture}

\caption{\textbf{Illustration of a surrogate latent space.} \textit{(left)} 
A low-dimensional surrogate space $\mathcal{U}$ is mapped via the surrogate chart $\phi_w$ onto the positive orthant of the unit hypersphere \textit{(mid left)}, which is then mapped to latent variables $\bm{z}$ via $\ell$ (in the Gaussian case $\bm{z} \sim \mathcal{N}(\bm{0}, \bm{I})$, $\ell(\bm{w}) = \bm{Z}\bm{w}$; see Appendix~\ref{appendix:non_gaussian} for the general definition of $\ell$). \textit{(mid right)}, before finally being decoded by the generative model $g$ into objects $\bm x$ \textit{(right)}.}
\label{fig:one_fig_to_rule_them_all}
\end{figure*}

\textbf{Avoiding redundancy}. While this construction ensures that the latent representation have the right statistical properties for the model, it introduces redundancy in the parametrisation. In particular, all weight vectors that differ by a positive scalar factor map to the same latent,
\begin{equation}
    \frac{\bm{Z}\bm{w}}{\|\bm{w}\|_2} = \frac{\bm{Z}(\alpha \bm{w})}{\|\alpha \bm{w}\|_2}, \qquad \alpha > 0.
\end{equation}
Thus, the mapping is many-to-one, and the search space contains multiple representations of the same solution. This redundancy is undesirable for optimisation, as it introduces flat directions along which the objective remains constant, 
causing the optimiser to waste evaluations exploring equivalent parametrisations. To remove this redundancy, we constrain the weights to have unit norm,
\begin{equation}
    \bm{z}^* = \bm{Z}\bm{w},
    \qquad \bm{w} \in \left\{ \bm{w} \in \mathbb{R}^K \,:\, \|\bm{w}\|_2 = 1 \right\}
    \qquad \big(\text{i.e. } \bm{w} \in \mathbb{S}^{K-1}\big).
\end{equation}
If $\bm Z$ has full column rank, as occurs almost surely with independent Gaussian draws and $K\leq D$, then its null space is trivial. Consequently, distinct weight vectors induce distinct latents. This unit-norm constraint therefore removes the scaling redundancy while preserving the latent distribution, yielding a one-to-one, distribution-preserving parametrisation.

\textbf{Removing curvature}. While restricting $\bm{w}$ to the unit sphere removes redundancy, it introduces a new challenge: the search space is now a curved manifold $\mathbb{S}^{K-1}$. This curvature complicates optimisation, as standard black-box methods
do not naturally handle manifold constraints. Optimising directly in this space would therefore require either constrained optimisation in $\mathbb{R}^K$ or methods adapted to spherical geometry.

We are thus motivated to construct a bounded, unconstrained Euclidean space $\mathcal{U}$ in which optimisation can be performed, together with a bijection,
\begin{equation}
    \phi_w : \mathcal{U} \rightarrow \mathbb{S}^{K-1}, \qquad \mathcal{U} = [0,1]^{K-1}.
\end{equation}
This allows standard optimisation algorithms to operate in $\mathcal U$, while $\phi_w$ maps their proposals to unit-norm weight vectors $\bm{w}$; bijectivity is discussed in Appendix~\ref{appendix:non_gaussian}.

An overview of the resulting pipeline is shown in Figure~\ref{fig:one_fig_to_rule_them_all}. Here, $\ell$ denotes the map from weights $\bm w$ to latent variables $\bm z$. In the Gaussian setting considered in this section, this map reduces to $\ell(\bm w)=\bm Z\bm w$. For more general latent distributions, $\ell$ may additionally involve transport maps; see Appendix~\ref{appendix:non_gaussian} for the full derivation.


The map $\phi_w$ can be interpreted as flattening the hypersphere, analogous to constructing a map of the globe. Such a mapping cannot preserve all geometric properties and
some distortion is unavoidable~\citep{gauss1827}. The key question is therefore: how do we design $\phi_w$ to minimise this distortion?

We address this challenge in two ways. First, we restrict the domain to the positive orthant of the hypersphere, $\mathbb S_+^{K-1} \subset \mathbb S^{K-1}$. By restricting the chart to this smaller region, the resulting Euclidean parametrisation incurs less severe distortion in practice. The second part of our approach, discussed in the next section, is to choose $\phi_w$ so that Euclidean distances in $\mathcal U$ approximately preserve the similarity structure induced by the hypersphere.

\newcommand{\scoretitle}[1]{%
    \makebox[\linewidth][c]{\hspace{0.15\linewidth}{\footnotesize\scshape #1}}\\[2pt]
}

\begin{figure*}[ht]
    \centering

    \begin{minipage}[t]{0.49\textwidth}
        \centering
        \textbf{Images (Flux)}\\[3pt]
        \begin{minipage}[t]{0.49\linewidth}
            \centering
            \scoretitle{ImageReward}
            \includegraphics[width=\linewidth]{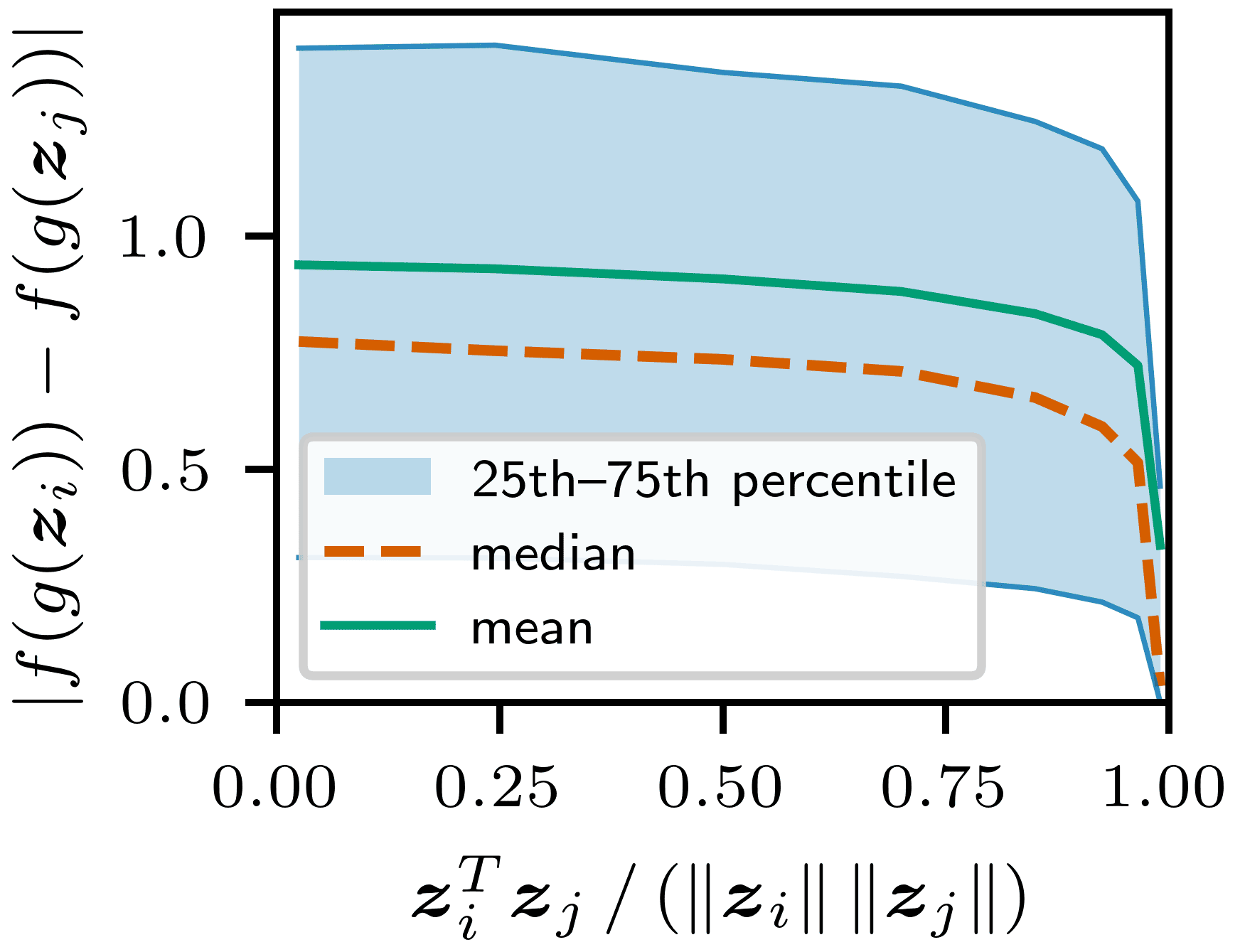}
        \end{minipage}
        \hfill
        \begin{minipage}[t]{0.49\linewidth}
            \centering
            \scoretitle{PickScore}
            \includegraphics[width=\linewidth]{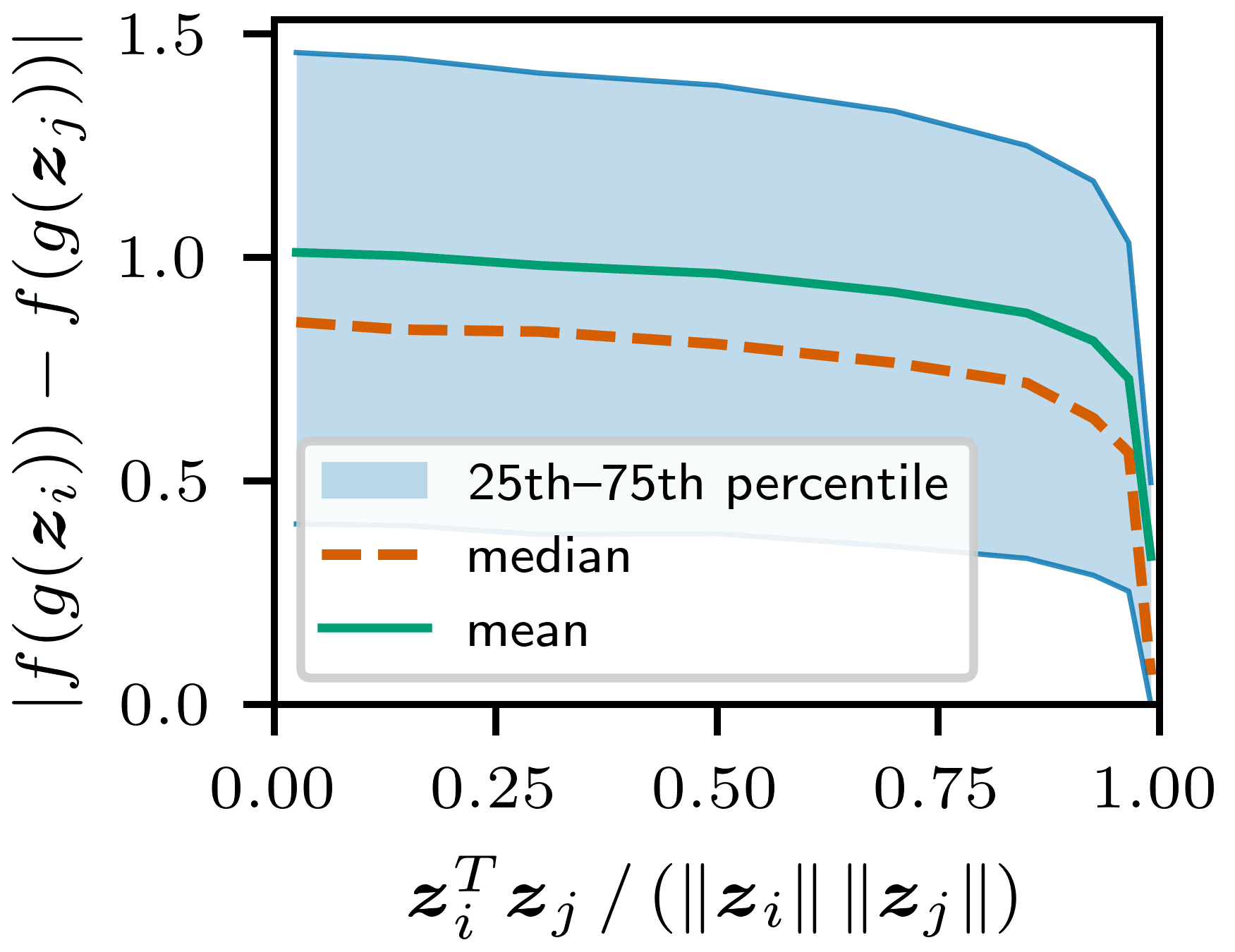}
        \end{minipage}
    \end{minipage}
    \hfill
    \begin{minipage}[t]{0.49\textwidth}
        \centering
        \textbf{Proteins (\textsc{RFdiffusion})}\\[3pt]
        \begin{minipage}[t]{0.49\linewidth}
            \centering
            \scoretitle{RMSE}
            \includegraphics[width=\linewidth]{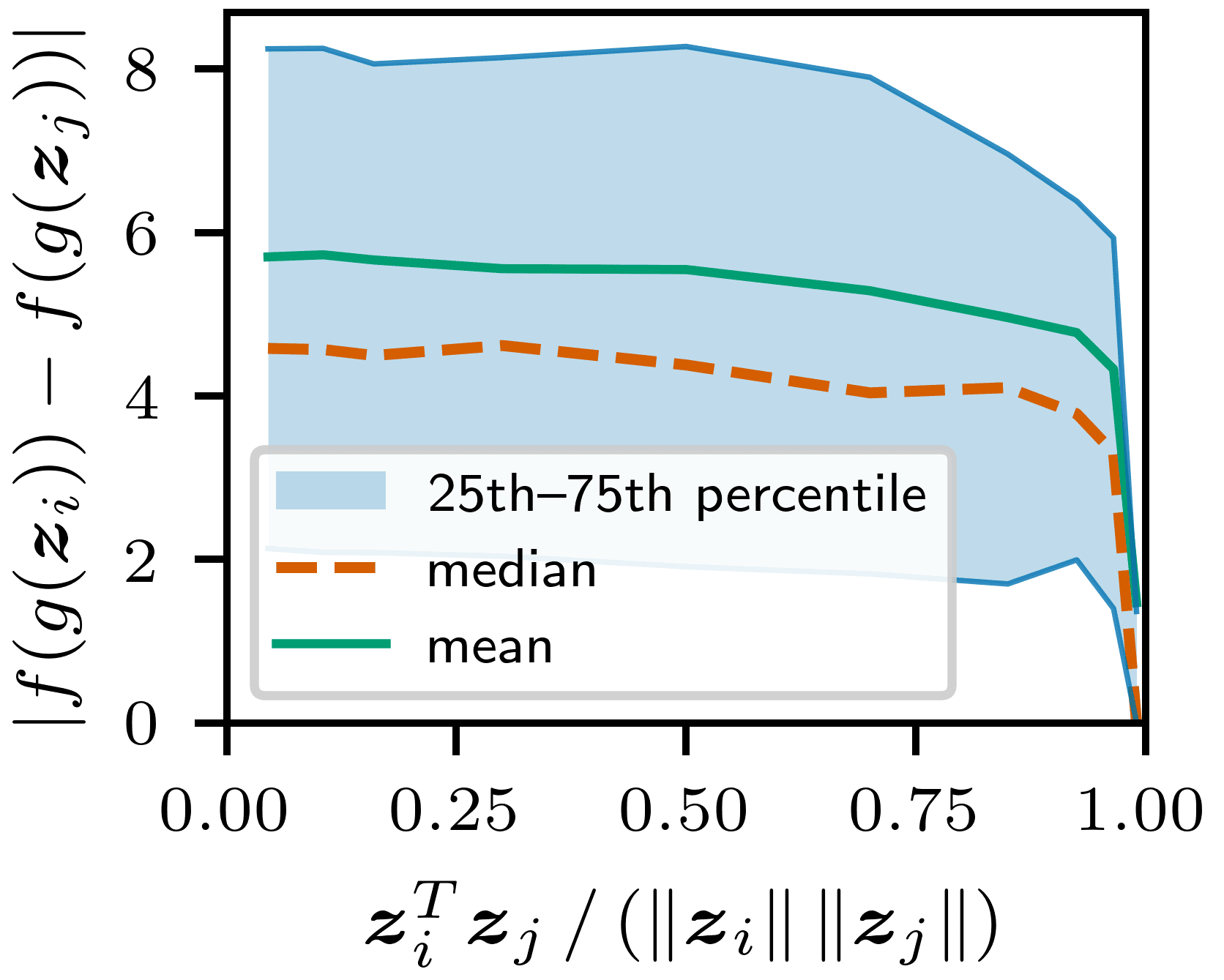}
        \end{minipage}
        \hfill
        \begin{minipage}[t]{0.49\linewidth}
            \centering
            \scoretitle{TM-Score}
            \includegraphics[width=\linewidth]{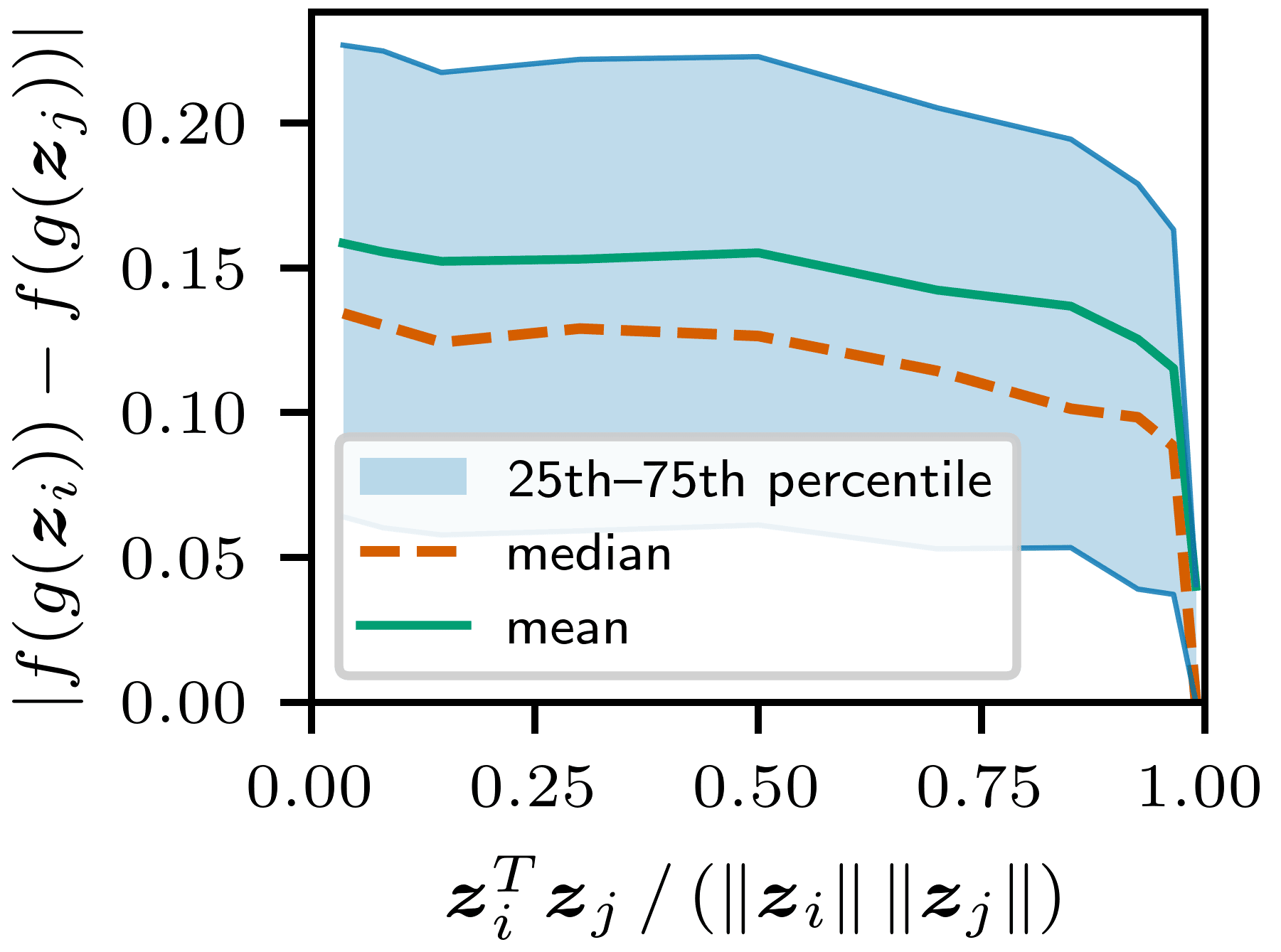}
        \end{minipage}
    \end{minipage}
    \caption{
Relationship between latent cosine similarity and objective similarity across image and protein generation tasks. The first two panels show Flux~\citep{flux2024} image generations evaluated with ImageReward and PickScore, while the last two panels show \textsc{RFdiffusion}~\citep{watson2023rfdiffusion} protein generations evaluated with RMSE and TM-score. For each metric, pairs of generated samples are grouped by the cosine similarity of their corresponding latents, and the plot shows the absolute difference in objective value between the paired samples. The black curve denotes the median, the green curve denotes the mean, and the blue shaded region denotes the 25th--75th percentile interval. Across modalities and metrics, higher latent cosine similarity corresponds to smaller differences in objective value, indicating that latent direction provides a useful notion of similarity for optimisation.
}
\label{fig:cosim_vs_objectives}
\end{figure*}

\subsection{
Hyperspherical geometry and cosine-preserving search spaces
}
The norm of samples from a high-dimensional unit Gaussian tightly concentrates around \(\sqrt{D}\).
Consequently, the variations in the latent representation is mainly encoded in the \emph{direction} of the sample.
Since the mapping \(g\) is a smooth neural network, it produces similar outputs for nearby directions. While smoothness guarantees this locally, it is not clear a priori that it extends to larger angles. However, as we show empirically in Figure~\ref{fig:cosim_vs_objectives}, cosine similarity remains smoothly related to different objectives even at larger angular separations. We demonstrate this for images and prompt-following via ImageReward~\citep{xu2023imagereward} and PickScore~\citep{kirstain2023pick} in Flux~\citep{flux2024}, and for proteins using RMSE to a reference~\citep{watson2023rfdiffusion}. 

Motivated by this, we seek mappings $\phi_w$ such that Euclidean distance in $\mathcal U$ has a monotonic relationship with the cosine similarity of the corresponding points in latent space $\mathcal Z$,
\begin{equation}
v\!\left(\|\bm u_i-\bm u_j\|_2\right)
\approx
\frac{\bm z_i^\top \bm z_j}{\|\bm z_i\|_2\|\bm z_j\|_2},
\qquad
\bm z_i=\bm Z\phi_w(\bm u_i),
\label{eq:uz_stationarity}
\end{equation}
where $v:\mathbb R_+\to[0,1]$ is a monotonically decreasing function. This induces a more regular and easier-to-optimise objective landscape over $\mathcal U$ when $f(\cdot)$ varies smoothly with cosine similarity.

In the Gaussian construction, there is a simple relationship between cosine similarity in latent space and inner products of the corresponding weight vectors. Let $\bm z_i=\bm Z\bm w_i$ and $\bm z_j=\bm Z\bm w_j$, where $\bm Z\in\mathbb R^{D\times K}$ has i.i.d.\ standard Gaussian entries and $\bm w_i,\bm w_j\in\mathbb S^{K-1}$. Then each marginal latent is distributed as $\mathcal N(\bm 0,\bm I_D)$,
\[
\mathbb E[\bm z_i^\top\bm z_j]
=
D\,\bm w_i^\top\bm w_j,
\quad\text{so}\quad
\mathbb E[D^{-1}\bm z_i^\top\bm z_j]
=
\bm w_i^\top\bm w_j.
\]
Moreover, since $D^{-1}\bm z_i^\top\bm z_j$ concentrates around its expectation and $\|\bm z_i\|_2,\|\bm z_j\|_2$ concentrate around $\sqrt D$ in high dimensions, the latent cosine similarity satisfies,
\[
\frac{\bm z_i^\top\bm z_j}{\|\bm z_i\|_2\|\bm z_j\|_2}
\approx
\bm w_i^\top\bm w_j.
\]
Thus, \eqref{eq:uz_stationarity} simplifies to
\begin{equation}
v\!\left(\|\bm u_i-\bm u_j\|_2\right)
\approx
\bm w_i^\top\bm w_j,
\qquad
\bm w_i=\phi_w(\bm u_i),
\end{equation}
for some monotonically decreasing function $v:\mathbb R_+\to[0,1]$.

In Appendix~\ref{appendix:charts}, we consider several choices for $\phi_w$ and empirically verify their suitability in Appendix~\ref{appendix:empirical_stationarity}. 
We adopt a variant of the Knothe--Rosenblatt (KR) chart, which gives a smooth bijection from $\mathcal U=[0,1]^{K-1}$ to $\mathbb S^{K-1}_+$, pushes $\bm u\sim\mathrm{Unif}(\mathcal U)$ to the uniform distribution on the hypersphere, and approximately preserves stationarity. By this, we mean that the induced weight-space similarity $\bm w_i^\top\bm w_j$ maintains an approximately monotonic relationship with the distance $\|\bm u_i-\bm u_j\|_2$.

\subsection{Seed latents}
\label{sec:seeds}

\begin{figure*}[ht]
\centering
\def\axgap{4pt}        
\def\axlblgapx{2pt}    
\def\axlblgapy{6pt}    
\def\arrowpad{2pt}
\def\axlbl{\scriptsize}

\hspace*{-2em}%
\begin{minipage}[b]{0.32\textwidth}
  \centering
  \begin{tikzpicture}[baseline=(img.south)]
    \node[anchor=south west, inner sep=0, outer sep=0] (img) at (0,0)
      {\includegraphics[width=\textwidth]{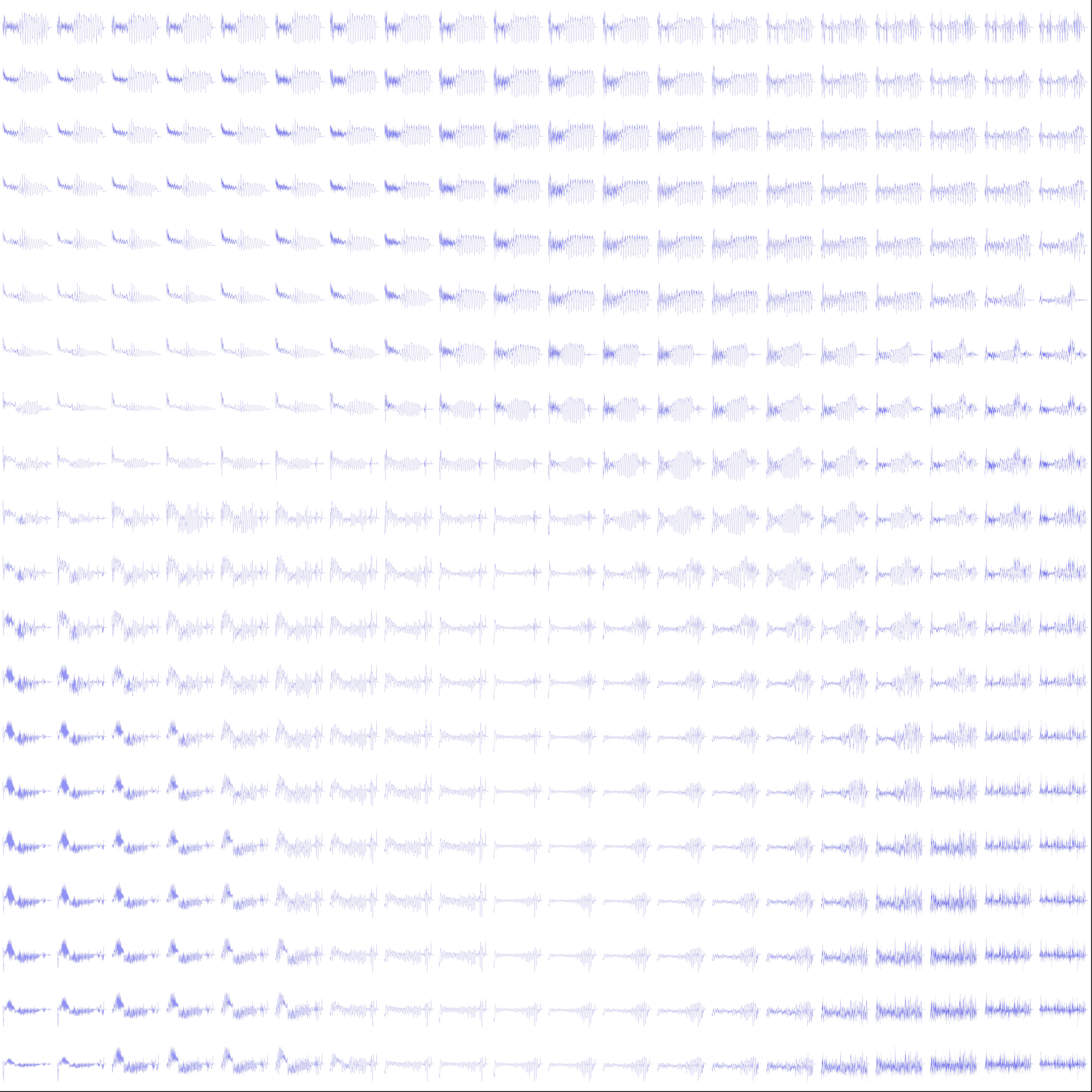}};
    \path let \p1=(img.south west), \p2=(img.north east) in
      coordinate (SW) at (\x1,\y1)
      coordinate (SE) at (\x2,\y1)
      coordinate (NW) at (\x1,\y2);
    \draw[-{Stealth}, line width=0.3pt]
      ($(SW)+(\arrowpad,-\axgap)$) -- ($(SE)+(-\arrowpad,-\axgap)$)
      node[midway, anchor=center, below=\axlblgapx] {\axlbl $u_1$};
    \draw[-{Stealth}, line width=0.3pt]
      ($(SW)+(-\axgap,\arrowpad)$) -- ($(NW)+(-\axgap,-\arrowpad)$)
      node[pos=0.55, anchor=center, left=\axlblgapy, rotate=90] {\axlbl $u_2$};
  \end{tikzpicture}
\end{minipage}
\hspace{2em}
\begin{minipage}[b]{0.18\textwidth}
  \centering
  \begin{tikzpicture}[baseline=(img.south), remember picture]
    \node[anchor=south west, inner sep=0, outer sep=0] (img) at (0,0)
      {\includegraphics[width=\textwidth]{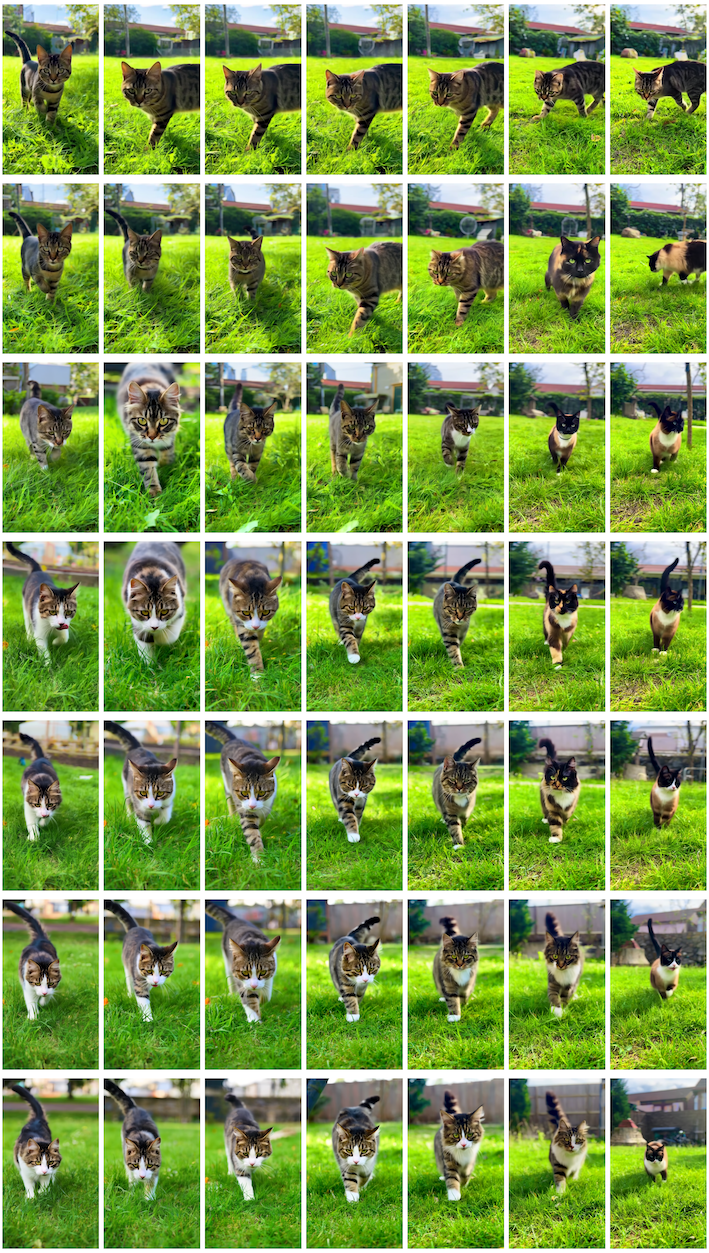}};
        \path let \p1=(img.south west), \p2=(img.north east) in
      coordinate (SW) at (\x1,\y1)
      coordinate (SE) at (\x2,\y1)
      coordinate (NW) at (\x1,\y2);
    \draw[-{Stealth}, line width=0.3pt]
      ($(SW)+(\arrowpad,-\axgap)$) -- ($(SE)+(-\arrowpad,-\axgap)$)
      node[midway, anchor=center, below=\axlblgapx] {\axlbl $u_1$};
    \draw[-{Stealth}, line width=0.3pt]
      ($(SW)+(-\axgap,\arrowpad)$) -- ($(NW)+(-\axgap,-\arrowpad)$)
      node[pos=0.55, anchor=center, left=\axlblgapy, rotate=90] {\axlbl $u_2$};
  \end{tikzpicture}
\end{minipage}
\hspace{2em}
\begin{minipage}[b]{0.32\textwidth}
  \centering
  \begin{tikzpicture}[baseline=(img.south), remember picture]
    \node[anchor=south west, inner sep=0, outer sep=0] (img) at (0,0)
      {\includegraphics[width=\textwidth]{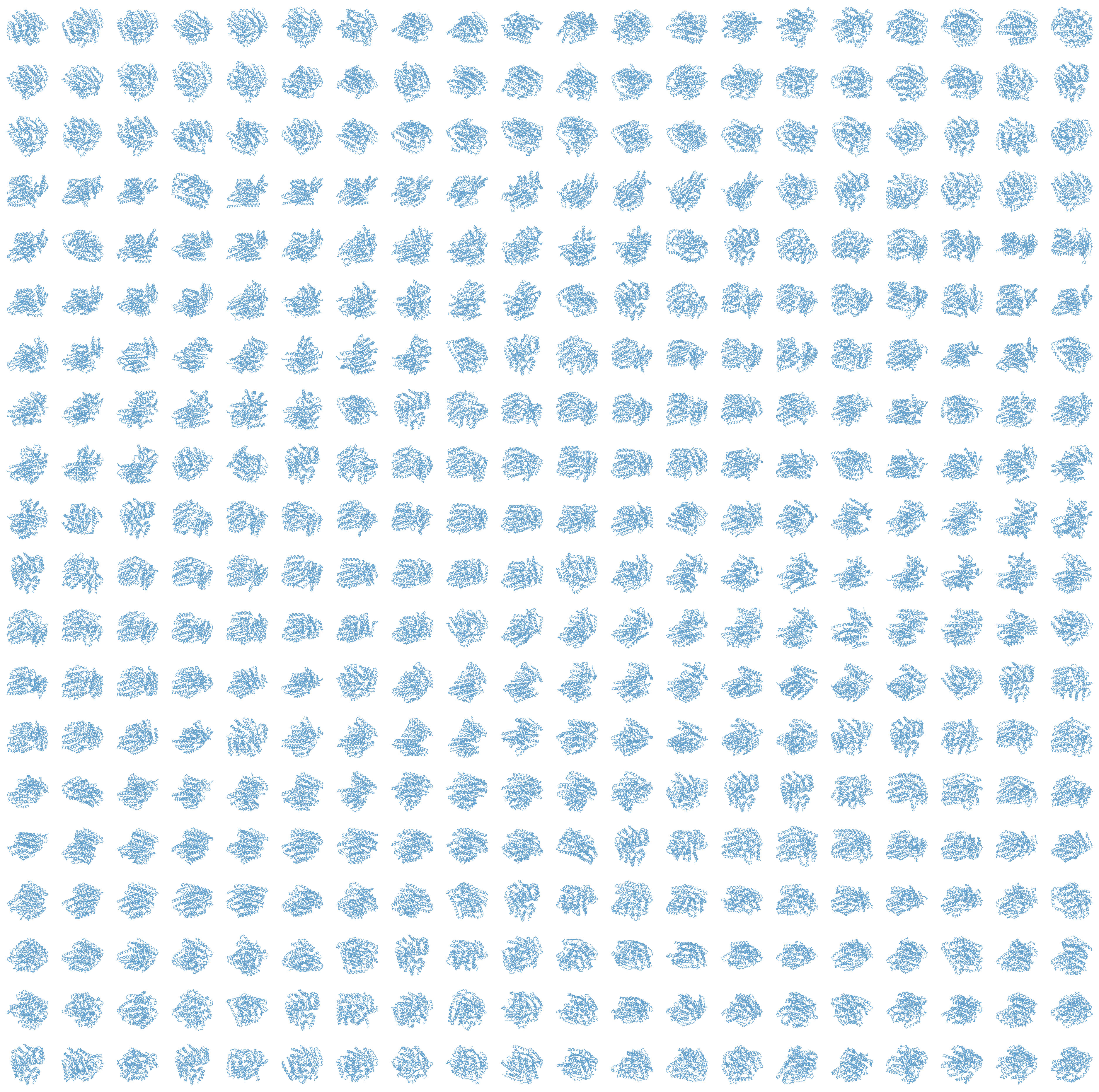}};
        \path let \p1=(img.south west), \p2=(img.north east) in
      coordinate (SW) at (\x1,\y1)
      coordinate (SE) at (\x2,\y1)
      coordinate (NW) at (\x1,\y2);
    \draw[-{Stealth}, line width=0.3pt]
      ($(SW)+(\arrowpad,-\axgap)$) -- ($(SE)+(-\arrowpad,-\axgap)$)
      node[midway, anchor=center, below=\axlblgapx] {\axlbl $u_1$};
    \draw[-{Stealth}, line width=0.3pt]
      ($(SW)+(-\axgap,\arrowpad)$) -- ($(NW)+(-\axgap,-\arrowpad)$)
      node[pos=0.55, anchor=center, left=\axlblgapy, rotate=90] {\axlbl $u_2$};
  \end{tikzpicture}
\end{minipage}

\caption{
\textbf{We can build smooth surrogate spaces for any generative model.}
(\textit{left}) Waveform generations over a grid of a 2D slice of a 7D surrogate space formed from 8 seed latents and the 8256-dimensional StableAudio2.0 text-to-audio generation model~\citep{evans2025stable}. (\textit{middle}) The first frames of a similarly constructed grid of videos from the 4,308,480-dimensional~HunyuanVideo text-to-video generation model~\citep{kong2024hunyuanvideo}. 
(\textit{right}) A grid of proteins over a 2D surrogate space formed from 3 seed latents corresponding to 3 proteins using \textsc{RFdiffusion}~\citep{watson2023rfdiffusion}.  
}
\label{fig:audio/video}
\end{figure*}

Building on the framework introduced by \cite{bodin2024linear}, we treat the columns of $\bm{Z}$ as \textit{seed latents}, where $\bm{Z} = [\bm{z}_1, \bm{z}_2, \dots, \bm{z}_K]$ are valid samples from the model. As the generative model is a deterministic map each $\bm{z}_k$ corresponds to a generated object, which means our choice of $\bm{Z}$ defines a subspace formed of $k$ generations from the model. New latent locations can now be formed as linear combinations of these seed latents.
Crucially, the framework does not restrict the choice of seed latents beyond requiring that they exhibit sampling statistics consistent with the latent distribution~\citep{bodin2024linear}. This flexibility has two important consequences. First, the number of seed latents determines the dimensionality of the optimisation problem, and thus the expressivity of the search space. Second, seed latents can be chosen to correspond to favourable samples, inducing a subspace that interpolates between desirable regions of the data distribution. 



We consider two practical strategies for selecting seed latents. The first is \textbf{filtering}, where we sample from the latent distribution and retain latents whose corresponding generations score highly under a given objective or scoring oracle. The second is \textbf{inversion}~\citep{song2020denoising,song2020score}, where we obtain latents corresponding to specific, known data points via the inverse of the generative model, $\bm{z}_i = g^{-1}(\bm{x}_i)$. Our method is agnostic to how these inversions are obtained, and can therefore make use of recent fast and accurate inversion methods, such as ExactDPM~\citep{hong2024exact} and LightningInversionEdit~\citep{samuel2023lightning}.

The choice of seed latents therefore provides a high degree of control over the induced latent span, and consequently over the surrogate search space $\mathcal{U}$.

\subsection{Summary}

In summary, for $\bm{Z} \in \mathbb{R}^{D \times K}$ formed from i.i.d.\ Gaussian entries, the constructions considered in this work are:
\[
\begin{array}{c@{\hspace{3em}}c@{\hspace{3em}}c}
\text{REMBO} & \text{LOL} & \text{\methodname} \\[4pt]
\bm{z}^* = \bm{Z}\bm{w},\; \bm{w}\in\mathbb{R}^K
&
\bm{z}^* = \frac{\bm{Z}\bm{w}}{\|\bm{w}\|_2},\; \bm{w}\in\mathbb{R}^K
&
\bm{z}^* = \bm{Z}\phi_w(\bm{u}),\; \bm{u}\in[0,1]^{K-1}
\end{array}
\]

All three differ only in how the weights $\bm{w}$ are constrained or normalised. REMBO uses unconstrained weights, leading to a mismatch with the latent distribution. LOL corrects this by normalising the linear combination weights $\bm{w}$, but induces a many-to-one parametrisation. In contrast, we directly parametrise unit-norm weights via $\phi_w$, yielding a one-to-one, distribution-preserving construction. Moreover, the map $\phi_w$ provides a bounded Euclidean search space $\mathcal{U} = [0,1]^{K-1}$, enabling the use of standard black-box optimisation algorithms. 
We refer to this framework as \methodname. 

\section{Experiments}
\label{sec:experiments}

\begin{figure*}[ht]
    \vspace{-1em}
    \centering
    \includegraphics[width=\linewidth]{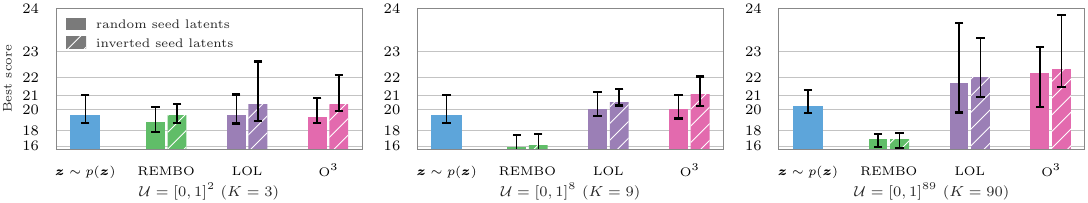}

    \caption{
    \textbf{Image optimisation with random and inversion-based seeds.}
    The three panels correspond to optimisation in surrogate spaces induced by 3, 9, and 90 seed latents, giving 2D, 8D, and 89D surrogate spaces, with optimisation budgets of $100$, $100$, and $1000$ function evaluations respectively. Each bar shows the median and 90\% confidence interval of the best PickScore achieved across $10$ target prompts, comparing Bayesian Optimisation in search spaces formed by REMBO, LOL, and \methodname{} against Best-of-$N$ --- denoted as $\bm{z} \sim p(\bm{z})$ --- from the base model.
    }
    \label{fig:ours_vs_random}
\end{figure*}

The experiments in Sections~\ref{sec:exp_data_modalities}~–~\ref{sec:exp_optimisation} are designed to verify basic properties: generality across modalities, sample quality and diversity within surrogate spaces, and the advantage of \methodname{}'s surrogate space over alternative search space constructions when navigated by standard optimisers. Sections~\ref{sec:instadeep}~–~\ref{sec:protein_opt} then benchmark \methodname{} in more realistic protein-design settings: against existing guidance methods under matched oracle budgets, and in a regime where prior optimisation approaches are infeasible.

\subsection{\methodname{} is model and data-modality agnostic}
\label{sec:exp_data_modalities}

Our methodology acts on latent variables only and applies to any continuous-variable generative model with a latent distribution amenable to interpolation (Appendix~\ref{appendix:non_gaussian}). We illustrate this on images and proteins (Figures~\ref{fig:image_grid},~\ref{fig:protein_optimisation}) and on audio and video in Figure~\ref{fig:audio/video} (see Appendix~\ref{appendix:modalities}). The formed spaces are smooth, low-dimensional, and retain full generative quality.


\subsection{Good examples define spaces with better solutions}
\label{sec:exp_dense_with_solutions}

\textbf{Goal.} This experiment isolates a key requirement of surrogate search spaces: they must preserve both sample quality and diversity in order to support effective optimisation. 

\textbf{Setup.}
We follow the benchmark of~\citet{denker2025iterative}, which evaluates generation from a diffusion model under an image prompt-following objective while maintaining sample diversity. This experiment tests whether high-scoring seed latents can define surrogate spaces that contain both high-scoring and diverse solutions (generations). We first sample a pool of random latents, select the highest-scoring latents as seeds, and construct surrogate spaces from these seed latents. We then evaluate the score and diversity of generations obtained from the resulting surrogate spaces, thereby assessing what regions of output space are made accessible by the chosen seeds. As reference points, we compare against state-of-the-art methods that explicitly train or fine-tune models for the same scoring function.

\textbf{Results.}
Surrogate spaces constructed from high-scoring seed latents consistently yield generations with high mean scores while preserving substantial diversity. Their performance is comparable to methods that explicitly train or fine-tune models on the target objective, despite requiring no task-specific generative-model training. This suggests that high-scoring seed latents can define search spaces that retain access to high-quality and diverse solutions. Quantitative results and confidence intervals, together with comparisons to Importance Fine-tuning~\citep{denker2025iterative}, DPOK~\citep{fan2023reinforcement}, and Adjoint Matching~\citep{domingo2024adjoint}, are reported in Table~\ref{table:sampling_seeds} together with experimental details in Appendix~\ref{appendix:good_seeds_better_solutions_details}.

\begin{figure*}[t]
      \centering
      \includegraphics[width=0.8\textwidth]{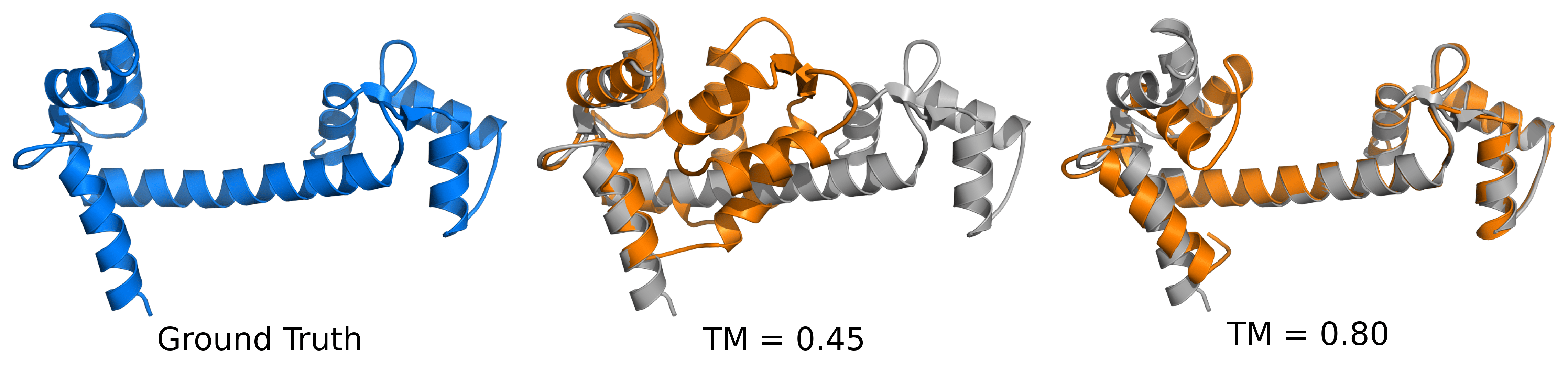}
      \includegraphics[width=\textwidth]{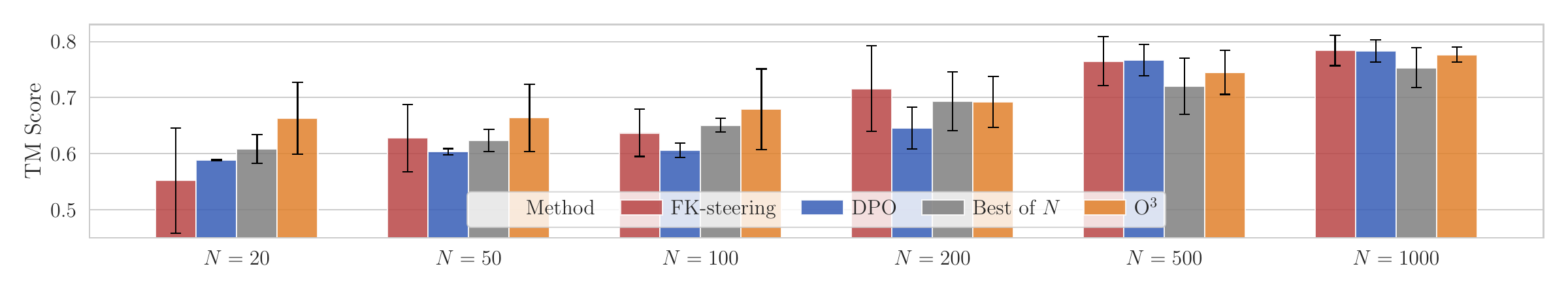}
      \caption{\textbf{Biasing protein structures of \textsc{Boltz-2} toward a desired conformation.} 
\emph{Top:} the desired 1CLL calmodulin conformation (left), a representative vanilla \textsc{Boltz-2} generation in a different conformation (middle), and a \methodname{}-guided generation better matching the desired conformation (right); ground truth shown in grey for reference in the middle and right panels. 
\emph{Bottom:} 
Comparison of \methodname{}+BO to FK-steering, DPO, and Best-of-$N$ across six budgets of $N$ oracle calls. 
Maximum TM-score to the 1CLL calmodulin structure is reported on the y-axis. Bars show the mean over $5$ repetitions, and error bars $\pm 1$ standard deviation. Higher is better. 
      }
      \label{fig:boltz_method_comparison}
\end{figure*}

\subsection{Optimisation with \methodname{} versus existing search spaces}
\label{sec:exp_optimisation}

\textbf{Goal.}
We construct a synthetic benchmark designed to be hard to solve by sampling alone: the model receives a generic prompt while being scored against a much more specific hidden target. The model is conditioned on a generic prompt (\texttt{`A vehicle'}), while the objective rewards prompt-following against a target sampled from a grammar of the form \texttt{`A <attribute> <vehicle type> <environment>'}. Target prompts are not provided to the methods directly; they are revealed only through the prompt-following score used as the objective.

\textbf{Setup.}
We measure prompt-following with PickScore~\citep{kirstain2023pick}. PickScore is prompt-dependent and on this task occupies a roughly 16--24 range, so absolute values are not directly meaningful; we therefore compare methods within a prompt and average across 10 target prompts. To aid interpretation, scores less than 19 typically correspond to monochrome or noise-like outputs, 20 to coherent images that miss the target, and 23 to images matching two of three target attributes. Small numeric differences in this range can correspond to substantive visual differences; see Figure~\ref{fig:image_optimisation_best_images} for examples, as well as Appendix~\ref{appendix:image_opt_details} for a full description of the experimental setup.

We compare Bayesian Optimisation in search spaces formed by REMBO, LOL, and \methodname{} against Best-of-$N$ sampling from the base model. We evaluate at three search space dimensionalities (2D, 8D, and 89D, corresponding to 3, 9, and 90 seed latents respectively). For each method, the search space is formed either from random seed latents or from seed latents obtained by inverting an image that matches exactly one property of the target prompt. We report filtered-seed results to isolate optimiser performance from seed-acquisition cost; see Section~\ref{sec:instadeep} for matched-budget comparisons.

\textbf{Results.}
Figure~\ref{fig:ours_vs_random} shows three main findings. \emph{First}, search spaces that produce samples matching the latent distribution of the model (LOL and \methodname{}) substantially outperform Best-of-$N$ sampling from the base model, while those that do not (REMBO) fail to do so --- an example of the best REMBO generation is included in Figure~\ref{fig:image_optimisation_best_images}. 
This supports the view that preserving the latent sampling statistics is what enables optimisation to succeed. We further corroborate this with CMA-ES in the native latent space $\mathcal Z$ (see Appendix~\ref{appendix:image_opt_details}), whose proposals violate the necessary statistical properties identified by~\citet{bodin2024linear}, and produce only black images. \emph{Second}, \methodname{} matches LOL in low-dimensional settings (2D, 8D) and pulls clearly ahead in the 89D search space. \emph{Third}, inversion-based seeds improve performance for both LOL and \methodname{} across all settings, though the gain shrinks as the search space grows. Since our benchmark is constructed such that suitable seeds are readily available, this represents a best case for inversion; the diminishing returns at higher dimensions suggest the search-space structure increasingly carries the optimisation, with seed latent quality contributing less.

We report several ablations in the appendices. Appendix~\ref{appendix:image_opt_details} extends the comparison to CMA-ES, both in a search space formed with \methodname{} and in the native space $\mathcal{Z}$. Appendix~\ref{appendix:weight_chart_comparison} reports results with an alternative choice of $\phi_{w}$, alongside all combinations of optimisers --- including random search in surrogate spaces. We find that in very low-dimensional surrogate spaces, random search in $\mathcal{U}$ is nearly as effective as BO and CMA-ES, since these spaces are simple to search but contain limited solutions. As dimensionality increases, BO and CMA-ES exploit the structure of the surrogate space and pull substantially ahead.

\subsection{\textsc{Boltz-2} guidance under matched oracle-call budgets}
\label{sec:instadeep}
\textbf{Goal.} Protein structure prediction models such as \textsc{Boltz-2}~\citep{passaro2025boltz} predict 3D atomic structures from amino-acid sequences, but their outputs are not always aligned with task-specific objectives, for example when a particular conformation is preferred among several stable ones. In this experiment, we bias generation toward a specific reference conformation, with TM-score~\citep{zhang2004tmscore}, a structural similarity score against a chosen reference, as our oracle.\footnote{TM-score is in range of $(0,1]$ with higher values indicating higher structural similarity. See Appendix~\ref{app:tm_score} for more details.} While TM-score is cheap to compute, the matched-budget protocol is intended as a proxy for the expensive, often non-differentiable oracles common in protein design.

\begin{figure*}[ht]
    \centering
    \vspace{-1.0em}
    \newcommand{\proteinimgwidth}{0.21\linewidth}
    
    \begin{minipage}[t]{0.98\textwidth}
    \centering
    \includegraphics[width=\proteinimgwidth]{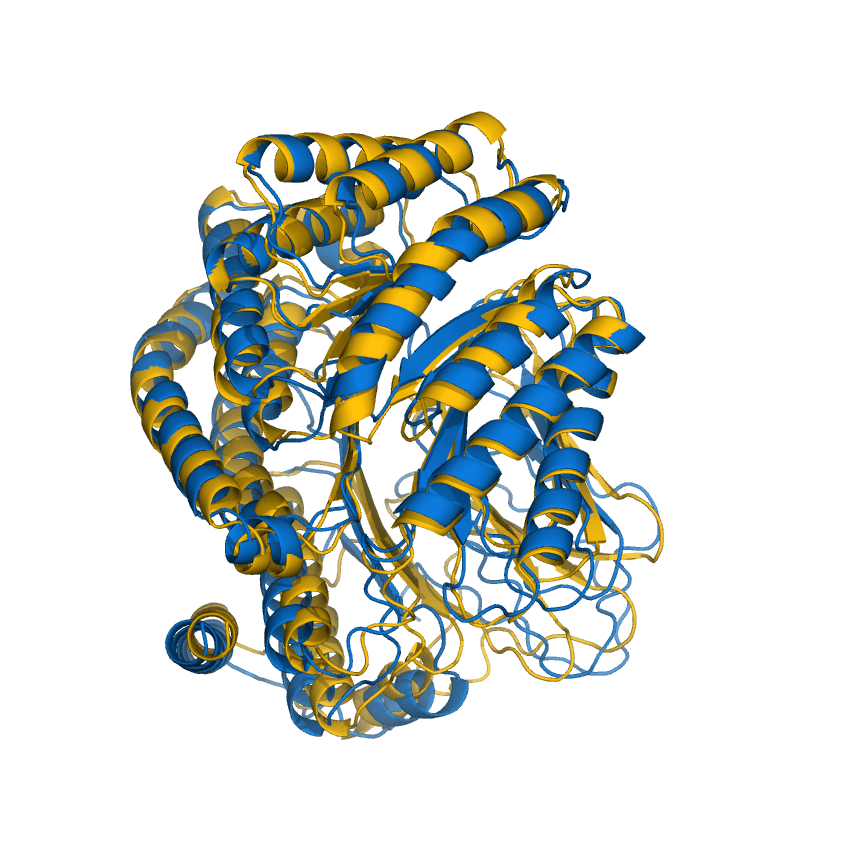}%
    \hfill
    \includegraphics[width=\proteinimgwidth]{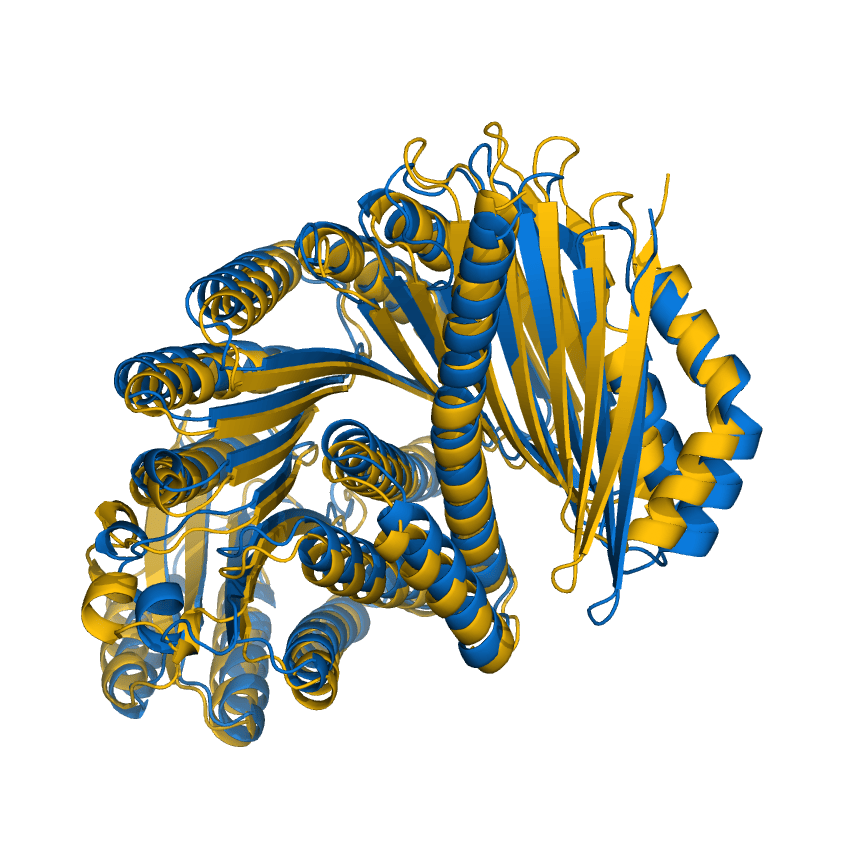}%
    \hfill
    \includegraphics[width=\proteinimgwidth]{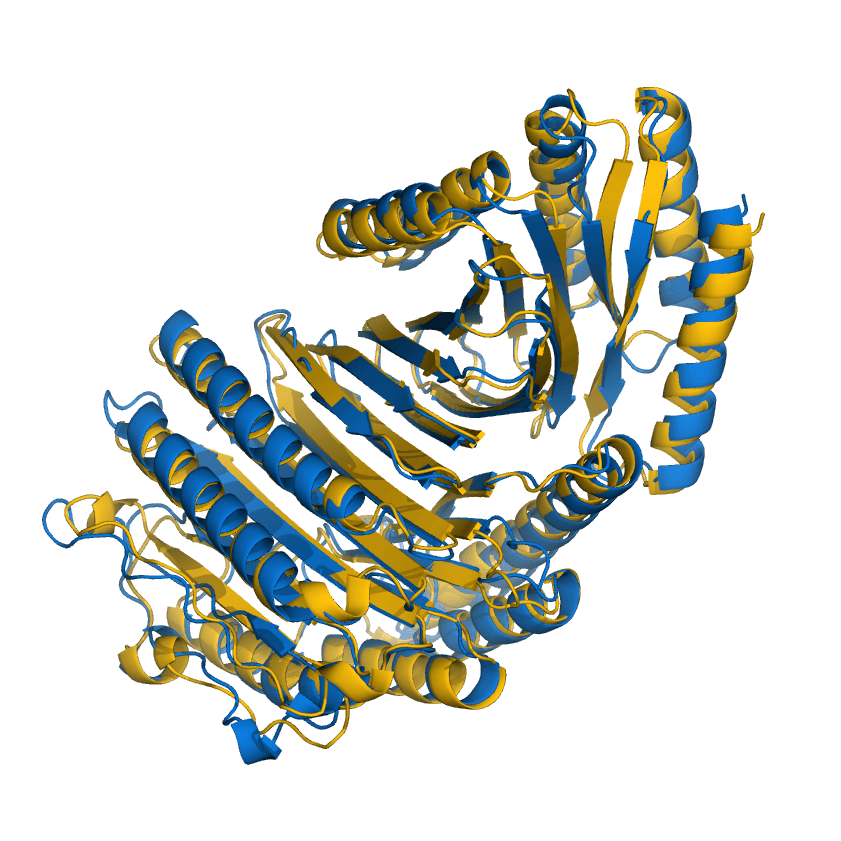}%
    \hfill
    \includegraphics[width=\proteinimgwidth]{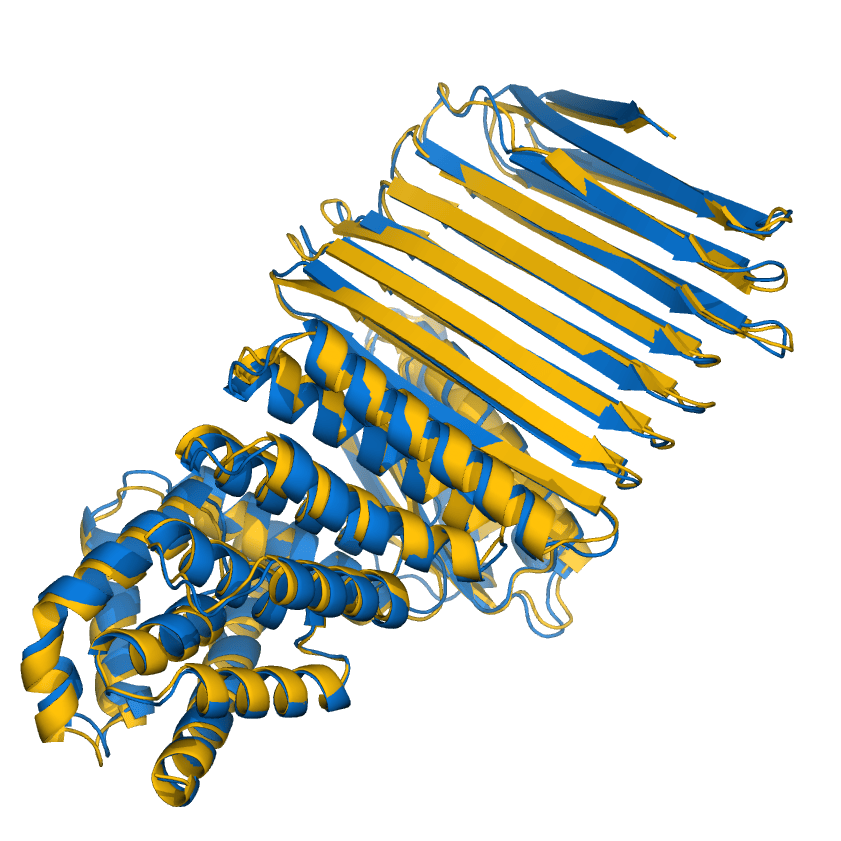}\\[-0.4em]
    
    \begin{minipage}[t]{\proteinimgwidth}\centering
        \scriptsize 5.00 RMSE
    \end{minipage}%
    \hfill
    \begin{minipage}[t]{\proteinimgwidth}\centering
        \scriptsize 2.54 RMSE
    \end{minipage}%
    \hfill
    \begin{minipage}[t]{\proteinimgwidth}\centering
        \scriptsize 1.83 RMSE
    \end{minipage}%
    \hfill
    \begin{minipage}[t]{\proteinimgwidth}\centering
        \scriptsize 1.10 RMSE
    \end{minipage}
    \end{minipage}
  \hfill
    {
     \begin{subfigure}{\textwidth}
        \centering
        \vspace{0.5em}
        \includegraphics[width=\linewidth]{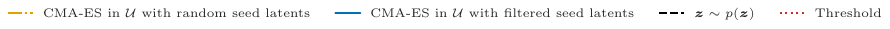}
     \end{subfigure}
    }
    \begin{subfigure}[t]{0.99\textwidth}
        \centering
        \includegraphics[width=\linewidth]{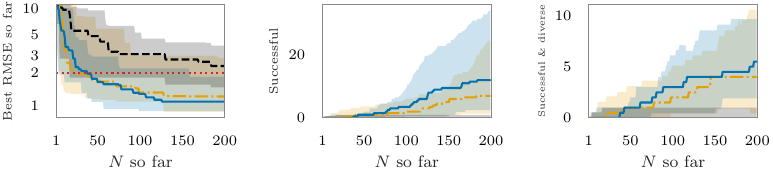}
        \label{fig:plot1}
    \end{subfigure}
    \vspace{-1em}
    \caption{
    \textbf{Protein design with surrogate latent spaces.} \textit{Top:} Representative generations showing the RMSE discrepancy between the \textsc{RFdiffusion} backbone \textit{(yellow)} and their \textsc{AlphaFold2} regeneration \textit{(blue)}. \textit{Bottom:} comparison of sample-and-filter in $\mathcal{Z}$ (Best of $N$) versus optimisation with \methodname{} using CMA–ES. 
    Plots report the median and 90\% confidence interval of the best RMSE per step as well as the number of successful and diverse designs. 
    }
    \label{fig:protein_optimisation}
\end{figure*}

\textbf{Setup.} Our reference is 1CLL~\citep{chattopadhyaya1992calmodulin}, a known conformation of calmodulin. 
We adapt \textsc{Boltz-2}'s stochastic sampler to its corresponding probability-flow ODE so that latents map deterministically to structures, as required by the \methodname{} methodology.
\methodname{} first samples a pool of $m$ structures from \textsc{Boltz-2}, scores them with the oracle, and keeps the top $K$ as seed latents defining the surrogate space $\mathcal{U}$. Bayesian optimisation then runs over $\mathcal{U}$ with the remaining $N - m$ oracle calls.
The three baselines we compare against are: Best-of-$N$, which draws $N$ samples and returns the highest-scoring one; FK-steering~\citep{singhal2025general}, which runs Sequential Monte Carlo and reweights particles by oracle scores during sampling for $N$ total oracle calls; and DPO~\citep{rafailov2023direct, wallace2023diffusion}, which fine-tunes \textsc{Boltz-2} iteratively on preference pairs formed from $N$ oracle-scored structures. Full implementation details, including the probability-flow ODE conversion, BO setup, baseline configurations, and ablations, are reported in Appendix~\ref{app:boltz}.

\textbf{Results.} \cref{fig:boltz_method_comparison} reports max TM-score against the 1CLL reference across budgets. At low budgets ($N \leq 100$), \methodname{} outperforms all baselines: at $N=100$ it achieves $\sim0.68$, against $\sim 0.65$ for Best-of-$N$ and lower for FK-steering and DPO, both of which underperform \methodname{} at these budgets and also fail to beat Best-of-$N$. At intermediate budgets, FK-steering performs best at $N=200$ and is on par with DPO at $N=500$; DPO, however, requires fine-tuning of the base generative model, which is costly. At $N=1000$, \methodname{}, FK-steering, and DPO all reach $\sim0.78$, with \methodname{} operating without any fine-tuning. Across the full budget range, \methodname{} is competitive with the strongest baseline at every $N$ and outperforms all other methods at low oracle budgets.

\subsection{Protein optimisation with \textsc{RFdiffusion}}
\label{sec:protein_opt}

\textbf{Goal.}
Finally, we apply our method to a challenging setting in protein design: generating $600$-residue proteins with \textsc{RFdiffusion}~\citep{watson2023rfdiffusion}. At this length, sampling directly from the model almost never produces designs meeting the RMSE recoverability criterion of~\citet{watson2023rfdiffusion}. The objective is expensive and non-differentiable, and the sample-and-filter strategy of~\citet{watson2023rfdiffusion} remains the only practical baseline.

\textbf{Setup.}
We compare sample-and-filter in $\mathcal{Z}$ with CMA-ES in our surrogate space $\mathcal{U}$, using 24 seeds, either random or selected by lowest RMSE from $100$ samples. The random-seed condition tests whether gains require informative seeds or follow from the search-space structure alone.

We optimise for RMSE only. The full criterion of~\citet{watson2023rfdiffusion} also requires PAE $<5$; we relax this to avoid multi-objective optimisation. Appendix~\ref{appendix:rfdiff} reports PAE, where \methodname{} substantially improves over baseline despite not optimising it. Throughout, ``successful'' means passing the RMSE filter.

\textbf{Results.}
Figure~\ref{fig:protein_optimisation} shows that after $200$ evaluations, sample-and-filter in $\mathcal Z$ often fails (median run: RMSE $=2.33$, no successful designs). CMA-ES in $\mathcal U$ reliably finds successful designs with random seed latents (median run: RMSE $=1.19$, $7$ successes across $4$ clusters), improving further with filtered seed latents (median run: RMSE $=1.08$, $12$ successes across $5.5$ clusters). Substantial gains even from random seed latents indicate that surrogate-space structure is the dominant factor.

\methodname{} also reduces PAE substantially relative to baseline (Appendix~\ref{appendix:rfdiff}), though without targeting it directly we do not reach the PAE $< 5$ threshold; multi-objective optimisation is a natural extension. 
Unlike the matched-budget protocol of Section~\ref{sec:instadeep}, here we do not charge seed-acquisition cost to the optimisation budget in the filtered-seed condition: this isolates the contribution of search-space structure from seed quality, and is often more representative of practice, where seeds are typically available from prior experiments or known designs; when they are not, the random-seed results show the search space alone delivers most of the benefit.
Full experimental details are in Appendix~\ref{appendix:rfdiff}.

\section{Conclusion}

In this paper, we introduced \emph{surrogate latent spaces}: a simple, general construction for expressive low-dimensional search spaces from high-dimensional generative models. These spaces define structured, deterministic manifolds over model outputs, making optimisation tractable for expensive or gradient-free black-box objectives. Our experiments show that surrogate latent spaces condition generative models through example-defined coordinates, with geometry suited to standard optimisation across modalities. Future work includes predicting seed-latent utility, adaptive seed selection, and human-in-the-loop objectives for creative and scientific applications.

\bibliographystyle{plainnat}
\bibliography{references}

\newpage
\appendix

\section*{Impact Statement}

\methodname{} is a general, model-agnostic optimisation layer that operates on top of pretrained continuous-variable diffusion and flow-matching models without modifying or fine-tuning them. Its societal impact is therefore largely inherited from the underlying generative models and from the objective functions that practitioners choose to optimise.

On the positive side, by enabling sample-efficient black-box optimisation without retraining, \methodname{} can lower the compute and data costs of steering generative models toward task-specific objectives. This is particularly relevant in scientific and engineering applications such as protein design, where evaluation is expensive and gradients are typically unavailable, and where reducing the number of oracle calls translates directly into reduced experimental cost. More broadly, decoupling search from model training broadens access to controllable generation for users without the resources to fine-tune large generative models.

On the negative side, any method that improves the controllability of generative models can in principle be misused to more reliably elicit harmful outputs, for example to amplify biases, generate disinformation or non-consensual content from image, audio, or video models, or to design biological sequences with harmful function. \methodname{} does not introduce new generative capabilities beyond those already present in the base model, and does not bypass safety filters or alignment mechanisms built into those models; the appropriate locus for safeguards therefore remains the base model and the choice of objective. We encourage practitioners to apply \methodname{} only on top of generative models whose intended use and safety properties are appropriate for the task, and, in high-stakes domains such as biological design, to pair it with task-specific review and screening procedures.

\section{Non-Gaussian latents and bijectivity of the surrogate chart}
\label{appendix:non_gaussian}

The construction in Section~\ref{sec:surrogate_latent_spaces} was presented for standard Gaussian latents, $\bm z \sim \mathcal N(\bm 0,\bm I)$, for which unit-norm linear combinations of independent latent samples remain distributed according to the same law. This property makes it possible to form valid proposals directly by combining seed latents with weights on the unit sphere. In this appendix, we extend the construction to latent distributions beyond the standard Gaussian case. The central requirement is not Gaussianity itself, but the existence of an ``inner'' latent representation in which unit-$\ell_2$ linear combinations can be transformed back into valid samples from the target latent distribution. When the original latent distribution does not satisfy this property directly, we introduce transport maps to and from such an inner representation, perform the linear combination there, and map the result back to the original latent space. This recovers the general Latent Optimal Linear combinations (LOL) construction of~\citet{bodin2024linear}, which we further extend with additional transport maps to broaden the class of supported latent distributions. In the Gaussian setting used throughout the main text, these transports reduce to the identity.

We also show that the resulting chart is bijective \emph{on its image}: every latent generated by the chart has a unique coordinate $\bm u \in \mathcal U$, and this coordinate can be recovered exactly. Equivalently, the surrogate space introduces no redundant parametrisations within the transported span induced by the chosen seed latents: distinct surrogate coordinates map to distinct latent proposals, and any proposal generated by the chart admits an exact inverse back to $\mathcal U$.

The key idea behind our method is to construct a search space $\mathcal U$ from a collection of $K$ seed latents, yielding a coordinate system whose generated objects inherit properties from these seeds. Let $\{\bm z_k\}_{k=1}^K$, with $\bm z_k \in \mathcal Z$, denote the seed latents. We define a surrogate chart
\[
\phi(\cdot;\{\bm z_k\}_{k=1}^K):\mathcal U \to \mathcal Z
\]
by
\begin{align}
\label{eq:surrogate_chart}
\phi(\bm u;\{\bm z_k\}_{k=1}^K) &= \bm z,
&
\bm z &:= \ell(\bm w;\{\bm z_k\}_{k=1}^K),
&
\bm w &:= \phi_w(\bm u),
\end{align}
where $\phi_w:\mathcal U\to\mathbb S_+^{K-1}$ is a bijection from the surrogate space to the positive orthant of the unit sphere, and
\[
\ell(\cdot;\{\bm z_k\}_{k=1}^K):\mathbb S_+^{K-1}\to\mathcal Z
\]
maps spherical weights to latent variables. Here,
\[
\mathbb{S}_{+}^{K - 1} := \big\{\, \bm{w} \in \mathbb{R}^K \;\big|\; \|\bm{w}\|_2 = 1,\; w_i \geq 0 \;\; \forall i \,\big\}.
\]
On its image, the chart admits the inverse
\begin{align}
\phi^{-1}(\bm z;\{\bm z_k\}_{k=1}^K)
&= \bm u,
&
\bm u
&:= \phi_w^{-1}(\bm w),
&
\bm w
&:= \ell^{-1}(\bm z;\{\bm z_k\}_{k=1}^K).
\end{align}

To ensure validity, each coordinate $\bm u\in\mathcal U$ must map to a latent realisation $\bm z\in\mathcal Z$ with the model's latent distribution $p$. We achieve this by defining $\ell$ using a transported aggregation. Let $\mathcal T_{\rightarrow}$ map model latents into an inner representation, and let $\mathcal T_{\leftarrow}$ map aggregated inner vectors back to valid samples from $p$. Given seed latents $\{\bm z_k\}_{k=1}^K$, define
\begin{align}
\label{eq:LOL}
\bm z = \ell(\bm w;\{\bm z_k\}_{k=1}^K)
&:= \mathcal T_{\leftarrow}(\bm\epsilon),
&
\bm\epsilon &:= \bm\xi\bm w,
&
\bm\xi &:= [\bm\epsilon_1,\dots,\bm\epsilon_K],
&
\bm\epsilon_k &:= \mathcal T_{\rightarrow}(\bm z_k).
\end{align}
The required validity condition is that, for independent seed latents $\bm z_k\sim p$ and any $\bm w\in\mathbb S^{K-1}$,
\[
\mathcal T_{\leftarrow}(\bm\xi\bm w)\sim p.
\]
In the Gaussian case, this holds because the inner aggregate itself has the target Gaussian law. In the hyperspherical case, the aggregate is not uniformly distributed on the sphere, but its normalised direction is.

For points generated by the forward map $\bm u\mapsto\bm w\mapsto\bm z$, the inverse can be computed exactly on the image of the chart under an additional directional-consistency condition on the transports. In particular, if $\mathcal T_{\rightarrow}(\mathcal T_{\leftarrow}(\bm\epsilon))$ is a positive scalar multiple of $\bm\epsilon$ and the seed matrix $\bm\xi$ has full column rank, then
\[
\bm w
=
\ell^{-1}(\bm z;\{\bm z_k\}_{k=1}^K)
=
\frac{\bm\xi^+ \mathcal T_{\rightarrow}(\bm z)}
{\|\bm\xi^+ \mathcal T_{\rightarrow}(\bm z)\|_2},
\]
where $\bm\xi^+$ denotes the Moore--Penrose inverse of $\bm\xi$; see Appendix~\ref{appendix:inverse}. Typically $\dim(\bm\epsilon)=\dim(\bm z)$, although this depends on the chosen transport map.

\paragraph{Inner aggregations preserving the target distribution.}
An inner representation is amenable to our methodology if unit-$\ell_2$ aggregation of independent transformed seed latents, followed by the backward transport, preserves the target latent distribution. Formally, if $\bm z_1,\dots,\bm z_K \stackrel{\mathrm{i.i.d.}}{\sim} p$, $\bm\epsilon_k=\mathcal T_{\rightarrow}(\bm z_k)$, and $\bm w\in\mathbb S^{K-1}$, then the requirement is
\[
\mathcal T_{\leftarrow}\!\left(\sum_{k=1}^K w_k\bm\epsilon_k\right)
\sim p.
\]
Equivalently, with $\bm\xi=[\bm\epsilon_1,\dots,\bm\epsilon_K]$ and $\bm\epsilon=\bm\xi\bm w$, we require
\[
\bm z=\mathcal T_{\leftarrow}(\bm\epsilon)\sim p.
\]
In some cases, such as zero-mean Gaussian inner latents, the aggregate $\bm\epsilon$ itself has a fixed inner distribution. In other cases, such as hyperspherical latents, $\bm\epsilon$ need only have a distribution whose image under $\mathcal T_{\leftarrow}$ is the desired target distribution. Specifying the maps $\mathcal T_{\rightarrow}$ and $\mathcal T_{\leftarrow}$ therefore gives a general framework for applying our method to different latent distributions:
\begin{itemize}
  \item \textbf{Gaussian latents.} If $p=\mathcal N(\bm 0,\bm\Sigma)$, then closure holds directly under $\|\bm w\|_2=1$, and we may set $\mathcal T_{\rightarrow}=\mathcal T_{\leftarrow}=\mathrm{id}$.
  If $p=\mathcal N(\bm\mu,\bm\Sigma)$ with $\bm\mu\neq\bm 0$, then
  \[
  \mathcal T_{\rightarrow}(\bm z)=\bm z-\bm\mu,
  \qquad
  \mathcal T_{\leftarrow}(\bm\epsilon)=\bm\epsilon+\bm\mu,
  \]
  which centre the distribution before aggregation and restore the mean afterwards.
  This case was treated in~\citet{bodin2024linear}.

  \item \textbf{Hyperspherical latents.} If $p=\mathrm{Unif}(\mathbb S^{D-1})$, where $\bm z\in\mathbb R^D$, then we may set $\mathcal T_{\rightarrow}=\mathrm{id}$ and let $\mathcal T_{\leftarrow}$ normalise the aggregated vector back onto the sphere,
  \[
  \mathcal T_{\leftarrow}(\bm\epsilon)=\frac{\bm\epsilon}{\|\bm\epsilon\|_2}.
  \]
  The aggregate $\bm\xi\bm w$ is not itself uniformly distributed on the sphere. However, for any fixed unit-norm $\bm w$, its distribution is rotation invariant, and therefore its normalised direction is uniformly distributed on $\mathbb S^{D-1}$.

  \item \textbf{Composite latents.} If the latent variable decomposes into $M$ statistically independent components, $\bm z=\{\bm z^{(1)},\dots,\bm z^{(M)}\}$, with
  \[
  p(\bm z)=\prod_{m=1}^M p_m(\bm z^{(m)}),
  \]
  then each component can be transported separately. This allows models with multiple latent variables, or latents with heterogeneous distributions, to be handled by applying component-wise transports and concatenating the resulting inner representations before computing linear combinations.

  \item \textbf{General scalar distributions.} If the dimensions of $\bm z$ are statistically independent scalar variables, each dimension can be transported to an amenable scalar distribution, such as $\mathcal N(0,1)$, using the probability integral transform. For example, if $F_i$ is the cumulative distribution function of $z_i$ and $\Phi$ is the standard Gaussian cumulative distribution function, one may use
  \[
  \epsilon_i=\Phi^{-1}(F_i(z_i)).
  \]
  This corresponds to the scalar transport construction proposed in~\citet{bodin2024linear}.
\end{itemize}

\section{Inverse of the $\ell$ map}
\label{appendix:inverse}

In this section, we derive the inverse of the map $\ell$ introduced in Section~\ref{appendix:non_gaussian}. The inverse is defined on the image of $\ell$, i.e.\ for latents generated by the forward map from a fixed set of seed latents.

\begin{lemma}
Let $\{\bm z_k\}_{k=1}^K \subset \mathcal Z$ be seed latents, and define their corresponding inner latents by
\[
\bm\epsilon_k = \mathcal T_{\rightarrow}(\bm z_k).
\]
Let
\[
\bm\xi = [\bm\epsilon_1,\dots,\bm\epsilon_K] \in \mathbb R^{D\times K}.
\]
Assume:
\begin{enumerate}[label=(A\arabic*)]
    \item $\bm\xi$ has full column rank, so that $\bm\xi^+\bm\xi = I_K$;
    \item for all relevant $\bm\epsilon \in \mathbb R^D$,
    \[
    \mathcal T_{\rightarrow}\!\big(\mathcal T_{\leftarrow}(\bm\epsilon)\big)
    =
    \alpha(\bm\epsilon)\,\bm\epsilon,
    \qquad
    \alpha(\bm\epsilon)>0.
    \]
\end{enumerate}
Define
\[
\ell(\bm w;\{\bm z_k\}_{k=1}^K)
=
\mathcal T_{\leftarrow}(\bm\xi\bm w),
\qquad
\bm w\in\mathbb S^{K-1}_+.
\]
Then $\ell$ is injective on $\mathbb S^{K-1}_+$ and admits the following inverse on its image:
\[
\ell^{-1}(\bm z;\{\bm z_k\}_{k=1}^K)
=
\frac{\bm\xi^+\,\mathcal T_{\rightarrow}(\bm z)}
{\|\bm\xi^+\,\mathcal T_{\rightarrow}(\bm z)\|_2}.
\]
\end{lemma}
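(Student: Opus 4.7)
The plan is to verify the claimed formula by a direct two-step chase through the chain $\bm w \xrightarrow{\bm\xi} \bm\epsilon \xrightarrow{\mathcal T_{\leftarrow}} \bm z$ and then invert it using the two assumptions in order. First I would start from $\bm z = l(\bm w,\{\bm z_k\}) = \mathcal T_{\leftarrow}(\bm\xi \bm w)$ and apply $\mathcal T_{\rightarrow}$ to both sides. By assumption (A2), this collapses the composition $\mathcal T_{\rightarrow}\circ \mathcal T_{\leftarrow}$ to multiplication by a positive scalar, yielding
\[
\mathcal T_{\rightarrow}(\bm z) \;=\; \alpha(\bm\xi \bm w)\,\bm\xi \bm w.
\]

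Next I would pre-multiply by the pseudoinverse $\bm\xi^{+}$ and use assumption (A1), $\bm\xi^{+}\bm\xi = I_K$, to peel off the $\bm\xi$ factor and recover $\bm w$ up to the positive scaling $\alpha(\bm\xi \bm w) > 0$:
\[
\bm\xi^{+}\mathcal T_{\rightarrow}(\bm z) \;=\; \alpha(\bm\xi \bm w)\,\bm w.
\]
Since $\bm w \in \mathbb S^{K-1}_{+}$ implies $\|\bm w\|=1$, taking Euclidean norms gives $\|\bm\xi^{+}\mathcal T_{\rightarrow}(\bm z)\| = \alpha(\bm\xi \bm w)$, and dividing yields the claimed formula. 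This also shows the denominator is nonzero, so the expression is well defined on the image of $l$.

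Finally, for invertibility, I would note that the calculation above establishes the left-inverse identity $l^{-1}(l(\bm w)) = \bm w$ for all $\bm w \in \mathbb S^{K-1}_{+}$; the right-inverse identity $l(l^{-1}(\bm z)) = \bm z$ for $\bm z$ in the image of $l$ then follows immediately by substitution. The only substantive issue — and the main thing to be careful about — is verifying that $\alpha(\bm\xi \bm w) > 0$ is exactly what allows the normalisation step to recover $\bm w$ rather than $\pm \bm w$; positivity of $\alpha$ from (A2) together with the restriction to the positive orthant $\mathbb S^{K-1}_{+}$ pins down the correct sign. The rest of the argument is two lines of linear algebra, so I do not anticipate any obstacle beyond tracking the scalar $\alpha$ through the computation.
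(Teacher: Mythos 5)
Your proof is correct and follows essentially the same route as the paper's: apply $\mathcal T_{\rightarrow}$ to invoke (A2), pre-multiply by $\bm\xi^{+}$ to invoke (A1), and normalise away the positive scalar $\alpha(\bm\xi\bm w)$. The extra observations you add (computing $\|\bm\xi^{+}\mathcal T_{\rightarrow}(\bm z)\|=\alpha(\bm\xi\bm w)$ explicitly, and noting the left/right-inverse identities) are fine elaborations of the same argument.
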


\begin{proof}
Let
\[
\bm z
=
\ell(\bm w;\{\bm z_k\}_{k=1}^K)
=
\mathcal T_{\leftarrow}(\bm\xi\bm w)
\]
for some $\bm w\in\mathbb S^{K-1}_+$. Applying $\mathcal T_{\rightarrow}$ and using (A2) gives
\[
\mathcal T_{\rightarrow}(\bm z)
=
\mathcal T_{\rightarrow}\!\big(\mathcal T_{\leftarrow}(\bm\xi\bm w)\big)
=
\alpha(\bm\xi\bm w)\,\bm\xi\bm w.
\]
Multiplying by $\bm\xi^+$ and using (A1), we obtain
\[
\bm\xi^+\mathcal T_{\rightarrow}(\bm z)
=
\alpha(\bm\xi\bm w)(\bm\xi^+\bm\xi)\bm w
=
\alpha(\bm\xi\bm w)\bm w.
\]
Thus $\bm\xi^+\mathcal T_{\rightarrow}(\bm z)$ is a positive scalar multiple of $\bm w$. Since $\|\bm w\|_2=1$, normalising removes the unknown positive scale factor:
\[
\frac{\bm\xi^+\mathcal T_{\rightarrow}(\bm z)}
{\|\bm\xi^+\mathcal T_{\rightarrow}(\bm z)\|_2}
=
\bm w.
\]
Therefore the stated expression recovers the unique weight vector $\bm w$ that generated $\bm z$, establishing injectivity of $\ell$ on $\mathbb S^{K-1}_+$ and giving the inverse on the image of $\ell$.
\end{proof}

\begin{corollary}[When normalisation is redundant]
For generated points $\bm z=\ell(\bm w;\{\bm z_k\}_{k=1}^K)$, the normalisation in the inverse formula is redundant whenever
\[
\mathcal T_{\rightarrow}\!\circ\mathcal T_{\leftarrow}
=
\mathrm{id}
\]
on the relevant inner latent subspace. In this case, $\alpha(\bm\epsilon)\equiv 1$, and
\[
\ell^{-1}(\bm z;\{\bm z_k\}_{k=1}^K)
=
\bm\xi^+\mathcal T_{\rightarrow}(\bm z).
\]

\begin{itemize}
  \item \textbf{Gaussian latents.}
  The maps $\mathcal T_{\rightarrow}$ and $\mathcal T_{\leftarrow}$ are exact inverses, so $\alpha(\bm\epsilon)=1$ and normalisation is not required.

  \item \textbf{Hyperspherical latents.}
  With $\mathcal T_{\leftarrow}(\bm\epsilon)=\bm\epsilon/\|\bm\epsilon\|_2$ and $\mathcal T_{\rightarrow}=\mathrm{id}$,
  \[
  \mathcal T_{\rightarrow}\!\big(\mathcal T_{\leftarrow}(\bm\epsilon)\big)
  =
  \frac{\bm\epsilon}{\|\bm\epsilon\|_2},
  \]
  so $\alpha(\bm\epsilon)=1/\|\bm\epsilon\|_2$. Normalisation is therefore required to recover $\bm w$.

  \item \textbf{Independent scalar latents mapped via CDF transports.}
  When the scalar CDF transports and their inverses are exact, $\mathcal T_{\rightarrow}$ and $\mathcal T_{\leftarrow}$ are exact inverses, so $\alpha(\bm\epsilon)=1$ and normalisation is redundant.
\end{itemize}
\end{corollary}

\section{The unit hypersphere is a sufficient index set for Latent Optimal Linear combinations}
\label{sec:hypersphere_sufficient}

In this section, we show that Latent Optimal Linear combinations~\citep{bodin2024linear} (LOL) can be indexed by weights on the unit hypersphere. The essential point is that valid transported aggregation requires only a unit-norm direction among the seed latents. In the zero-mean Gaussian case, this can also be viewed as removing radial redundancy in an arbitrary nonzero weight vector: scaling all weights by a positive constant does not change the corrected latent. We first describe this in the Gaussian case and then extend the argument to the transported, non-Gaussian setting.

\paragraph{Gaussian latents.}
Let
\[
\bm y = \bm Z\bm a,
\qquad
\bm Z=[\bm z_1,\dots,\bm z_K],
\qquad
\bm a\in\mathbb R^K,
\]
where the seed latents are independent draws $\bm z_k\sim p$ with
\[
p=\mathcal N(\bm\mu,\bm\Sigma).
\]
Then
\[
\bm y \sim \mathcal N(\alpha\bm\mu,\beta\bm\Sigma),
\qquad
\alpha=\sum_{k=1}^K a_k,
\qquad
\beta=\sum_{k=1}^K a_k^2=\|\bm a\|_2^2.
\]
Unless $\alpha=1$ (or $\bm\mu=\bm 0$) and $\beta=1$, the linear combination $\bm y$ is not guaranteed to follow the target latent distribution $p$. In the Gaussian case, Latent Optimal Linear combinations correct this by applying the Monge optimal transport map from $\mathcal N(\alpha\bm\mu,\beta\bm\Sigma)$ to $\mathcal N(\bm\mu,\bm\Sigma)$,
\[
\mathcal T(\bm y)
=
\bm\mu+\frac{\bm y-\alpha\bm\mu}{\sqrt{\beta}}
=
\left(1-\frac{\alpha}{\|\bm a\|_2}\right)\bm\mu
+
\frac{\bm y}{\|\bm a\|_2}.
\]
For zero-mean Gaussian latents, this simplifies to
\[
\mathcal T(\bm y)
=
\frac{\bm Z\bm a}{\|\bm a\|_2}.
\]
Thus, for any nonzero $\bm a\in\mathbb R^K$, defining
\[
\bm w=\frac{\bm a}{\|\bm a\|_2}\in\mathbb S^{K-1}
\]
gives
\[
\mathcal T(\bm Z\bm a)
=
\bm Z\bm w.
\]
Therefore the norm of $\bm a$ is redundant in the zero-mean Gaussian case: all positive rescalings of $\bm a$ yield the same corrected latent. Consequently, weights on the unit hypersphere $\mathbb S^{K-1}$ are sufficient to index all zero-mean Gaussian Latent Optimal Linear combinations.

For nonzero-mean Gaussian latents, the same conclusion holds after centring. Writing
\[
\bm\epsilon_k=\bm z_k-\bm\mu,
\]
we form the centred combination
\[
\bm\epsilon=\sum_{k=1}^K w_k\bm\epsilon_k,
\qquad
\bm w\in\mathbb S^{K-1},
\]
and map back via
\[
\bm z=\bm\epsilon+\bm\mu.
\]
Since $\|\bm w\|_2=1$, the centred combination satisfies $\bm\epsilon\sim\mathcal N(\bm 0,\bm\Sigma)$, and therefore $\bm z\sim\mathcal N(\bm\mu,\bm\Sigma)$. Thus, centring removes the dependence on the mean, and unit-norm weights again provide a sufficient index set.

\paragraph{General transported latents.}
The same principle applies in the non-Gaussian setting whenever the latent distribution admits an inner representation whose unit-$\ell_2$ aggregations can be mapped back to valid samples from the target distribution. Let $\mathcal T_{\rightarrow}$ map model latents into an inner representation, and let $\mathcal T_{\leftarrow}$ map aggregated inner vectors back to samples from $p$. Given seed latents $\{\bm z_k\}_{k=1}^K$, define
\[
\bm\epsilon_k=\mathcal T_{\rightarrow}(\bm z_k),
\qquad
\bm\xi=[\bm\epsilon_1,\dots,\bm\epsilon_K].
\]
The required condition is that, for independent seed latents $\bm z_k\sim p$ and any
\[
\bm w\in\mathbb S^{K-1},
\qquad
\bm\epsilon=\bm\xi\bm w,
\]
the transported aggregate satisfies
\[
\bm z=\mathcal T_{\leftarrow}(\bm\epsilon)\sim p.
\]
Thus, the unit hypersphere provides a sufficient index set for valid Latent Optimal Linear combinations: the weights specify a unit-norm aggregation direction among the transformed seed latents, while the backward transport ensures that the resulting latent has the correct target distribution.

\paragraph{Summary.}
Both the Gaussian and transported non-Gaussian constructions rely on the same mechanism: aggregation is performed in an inner representation using unit-$\ell_2$ weights, and the resulting aggregate is mapped back to the target latent distribution. Therefore, valid Latent Optimal Linear combinations can be indexed by weights on the unit hypersphere $\mathbb S^{K-1}$. In the zero-mean Gaussian case, this additionally removes the radial redundancy of arbitrary nonzero weights in $\mathbb R^K$, since the corrected latent depends only on the normalised direction of the weight vector. 

\section{Weight charts $\phi_w$}
\label{appendix:charts}

Let $K$ denote the number of seed latents. A \emph{weight chart} is a map
\[
\phi_w:[0,1]^{K-1}\to\mathbb S^{K-1}_+,
\]
where
\[
\mathbb S^{K-1}_+
=
\left\{
\bm w\in\mathbb R^K
\;\middle|\;
\|\bm w\|_2=1,\; w_i\geq 0 \;\; \forall i
\right\}
\]
is the positive orthant of the unit hypersphere. The charts below define smooth bijections from the open cube $(0,1)^{K-1}$ to the interior of $\mathbb S^{K-1}_+$, and extend continuously to the boundary.

\subsection{Angular-coordinate chart}

The standard spherical-coordinate parametrisation gives a simple chart for $\mathbb S^{K-1}_+$. For $\bm u\in(0,1)^{K-1}$, set
\[
\theta_i=\frac{\pi}{2}u_i,
\qquad
i=1,\dots,K-1,
\]
and define
\begin{align}
w_1
&=
\cos\theta_1,
\\
w_k
&=
\left(\prod_{i=1}^{k-1}\sin\theta_i\right)\cos\theta_k,
\qquad
k=2,\dots,K-1,
\\
w_K
&=
\prod_{i=1}^{K-1}\sin\theta_i.
\end{align}
This gives $\bm w\in\mathbb S^{K-1}_+$.

The inverse is obtained recursively. First,
\[
\theta_1=\arccos(w_1),
\qquad
u_1=\frac{2}{\pi}\theta_1.
\]
For $k=2,\dots,K-1$,
\[
\theta_k
=
\arccos\left(
\frac{w_k}{\prod_{i=1}^{k-1}\sin\theta_i}
\right),
\qquad
u_k=\frac{2}{\pi}\theta_k.
\]
This chart is simple and smooth on the interior, but it is not measure-preserving: if $\bm u\sim\mathrm{Unif}([0,1]^{K-1})$, the induced distribution on $\mathbb S^{K-1}_+$ is not the uniform surface measure. The inverse can also be numerically unstable near the boundary, where some factors $\sin\theta_i$ are close to zero.

\subsection{Knothe--Rosenblatt stick-breaking chart}

We next define a measure-preserving chart based on the Knothe--Rosenblatt rearrangement for the Dirichlet distribution. Let $\bm u\in(0,1)^{K-1}$ and define independent stick-breaking variables
\[
v_k
=
I^{-1}_{u_k}\!\left(\frac12,\frac{K-k}{2}\right),
\qquad
k=1,\dots,K-1,
\]
where $I^{-1}_{u}(a,b)$ denotes the inverse regularised incomplete beta function. Define
\begin{align}
q_1
&=
v_1,
\\
q_k
&=
v_k\prod_{i=1}^{k-1}(1-v_i),
\qquad
k=2,\dots,K-1,
\\
q_K
&=
\prod_{i=1}^{K-1}(1-v_i),
\end{align}
and set
\[
w_i=\sqrt{q_i},
\qquad
i=1,\dots,K.
\]
Since $q_i\geq 0$ and $\sum_{i=1}^K q_i=1$, this gives $\bm w\in\mathbb S^{K-1}_+$.

The inverse is obtained by reversing the stick-breaking transform. Let
\[
q_i=w_i^2,
\qquad
s_k=\sum_{j=k}^K q_j.
\]
Then, for $k=1,\dots,K-1$,
\[
v_k=\frac{q_k}{s_k},
\qquad
u_k
=
I_{v_k}\!\left(\frac12,\frac{K-k}{2}\right),
\]
where $I_v(a,b)$ is the regularised incomplete beta function.

This chart pushes $\mathrm{Unif}([0,1]^{K-1})$ forward to the uniform surface measure on $\mathbb S^{K-1}_+$. In this measure-preserving sense, it is equal-area. In practice, stable implementations of \texttt{betainc} and \texttt{betaincinv} should be used, and inputs may be clipped away from exactly $0$ and $1$ to avoid numerical singularities at the boundary.

\paragraph{Uniformity of the Knothe--Rosenblatt chart.}
Let $\bm u\sim\mathrm{Unif}((0,1)^{K-1})$. By the inverse-CDF construction,
\[
v_k
=
I^{-1}_{u_k}\!\left(\frac12,\frac{K-k}{2}\right)
\quad\Longrightarrow\quad
v_k
\sim
\mathrm{Beta}\!\left(\frac12,\frac{K-k}{2}\right),
\]
independently for $k=1,\dots,K-1$. The stick-breaking transform then gives
\[
\bm q=(q_1,\dots,q_K)
\sim
\mathrm{Dirichlet}\!\left(\frac12,\dots,\frac12\right).
\]
For a point $\bm w$ uniformly distributed on $\mathbb S^{K-1}_+$, the squared coordinates satisfy
\[
\bm w\odot\bm w
\sim
\mathrm{Dirichlet}\!\left(\frac12,\dots,\frac12\right).
\]
Conversely, if
\[
\bm q\sim\mathrm{Dirichlet}\!\left(\frac12,\dots,\frac12\right)
\]
and $w_i=\sqrt{q_i}$, then $\bm w$ is uniformly distributed on $\mathbb S^{K-1}_+$. Therefore, the Knothe--Rosenblatt stick-breaking chart pushes the uniform distribution on $[0,1]^{K-1}$ to the uniform surface measure on the positive orthant of the hypersphere.

\section{The weight chart $\phi_w$ sets the similarity structure}
\label{appendix:weights_dominating}
In Figure~\ref{fig:dominating_weights} we demonstrate numerical evidence for the claim in Section~\ref{sec:surrogate_latent_spaces} that the dot product $\bm{w}_i^\top\bm{w}_j$ is the dominant factor in determining the cosine similarity between two latent variables $\bm{\epsilon}_i, \bm{\epsilon}_j \in \mathbb{R}^D$ indexed by a surrogate latent space $\mathcal{U}$. 
We see that already $D \approx 100, K \leq 10$ yields a dominating $\bm{w}_i^\top\bm{w}_j$, as shown by Pearson correlations of more than $0.95$, and correlations very close to $1$ for higher dimensionalities of $D$ (at a rate dependent on $K$). 
For reference, typical diffusion and flow matching  models~\citep{rombach2022high,flux2024,lipman2022flow} have a $D$ of \emph{tens of thousands} to \emph{hundreds of thousands}, and \cite{kong2024hunyuanvideo} has a dimensionality of several \emph{million}. 

\newcommand{\imgscaletwo}{0.35}

\begin{figure*}[htbp]
    \centering

    \begin{subfigure}{0.8\textwidth}
        \centering
        \includegraphics[width=\linewidth]{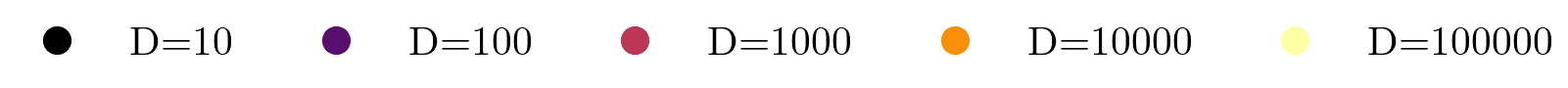}
    \end{subfigure}
    
    \begin{subfigure}{\imgscaletwo\textwidth}
        \centering
        \includegraphics[width=\linewidth]{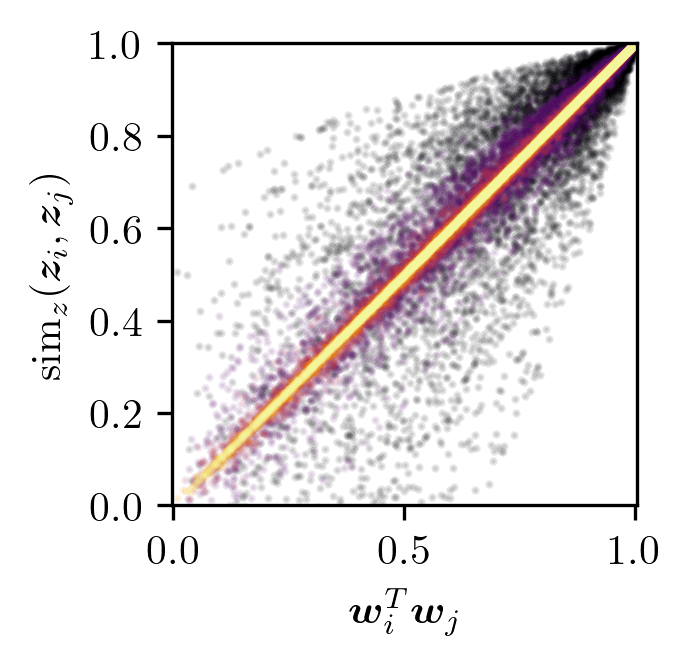}
        \caption{Number of seeds $K = 2$}
    \end{subfigure}
    \begin{subfigure}{\imgscaletwo\textwidth}
        \centering
        \includegraphics[width=\linewidth]{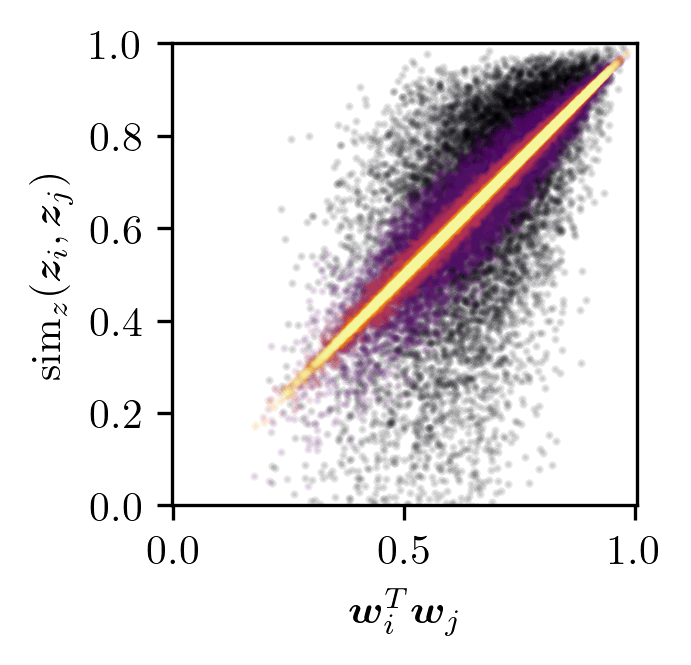}
        \caption{Number of seeds $K = 10$}
    \end{subfigure}
    
    \begin{subfigure}{\imgscaletwo\textwidth}
        \centering
        \includegraphics[width=\linewidth]{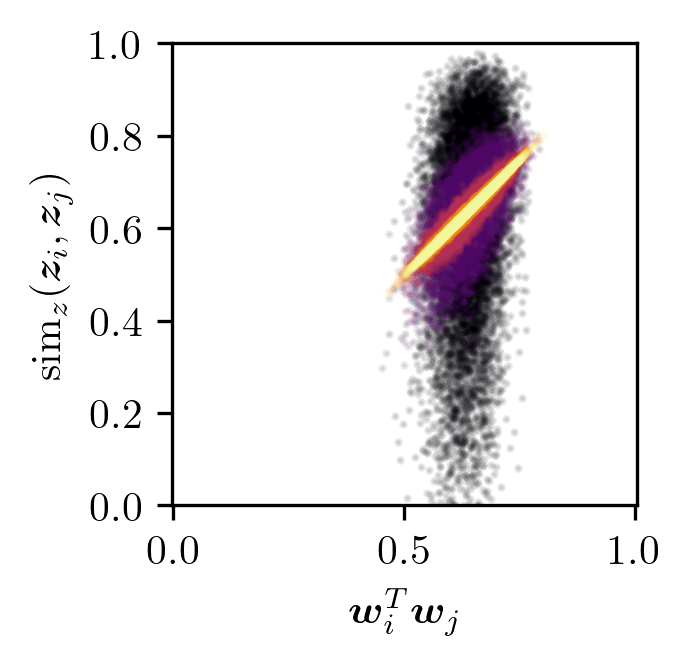}
        \caption{Number of seeds $K = 100$}
    \end{subfigure}
    \begin{subfigure}{\imgscaletwo\textwidth}
        \centering
        \includegraphics[width=\linewidth]{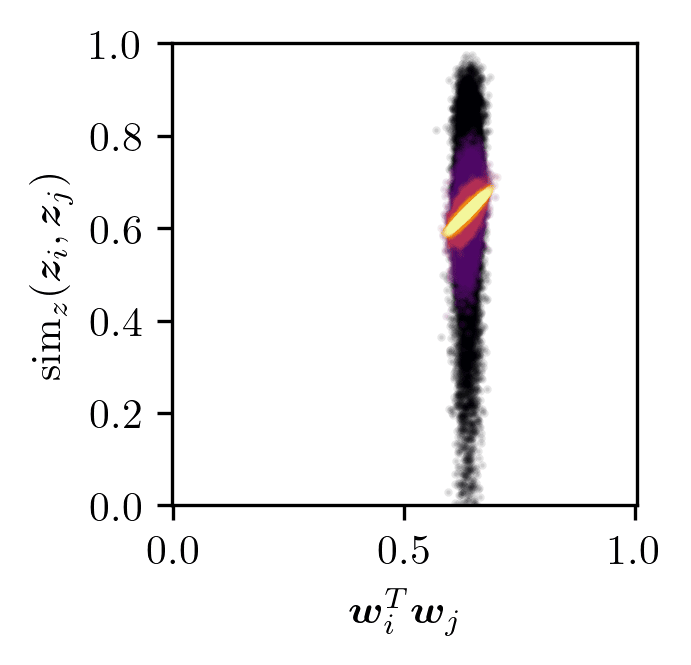}
        \caption{Number of seeds $K = 1000$}
    \end{subfigure}

    \vspace{1em}

    \begin{subfigure}{0.5\textwidth}
        \centering
        \includegraphics[width=\linewidth]{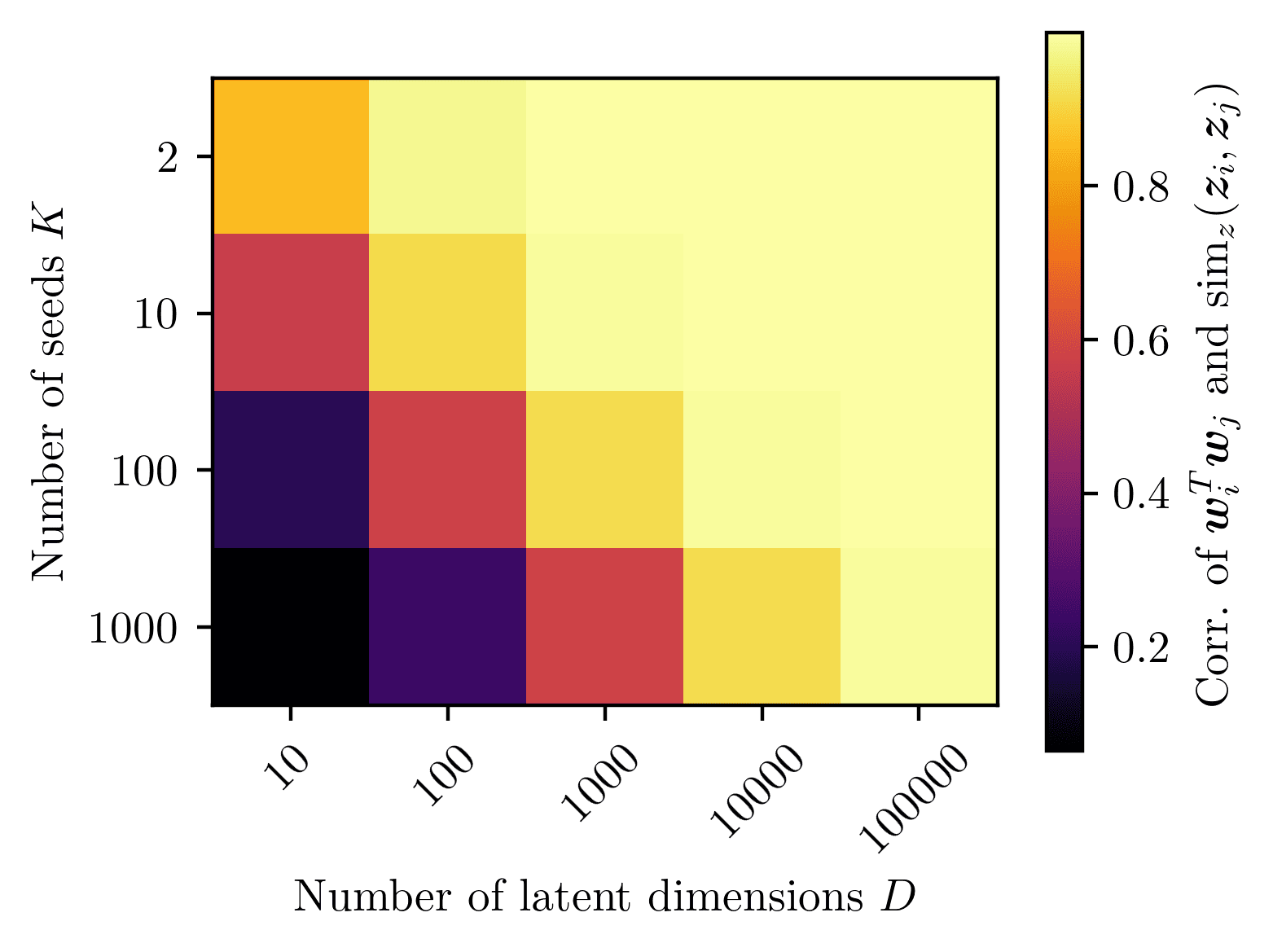}
        \caption{Top image}
    \end{subfigure}
    
    \caption{
    (\textit{top}) Dot products of the weights $\bm{w}_i^\top\bm{w}_j$ and the cosine similarity  $\text{sim}_{z}(\bm{z}_i, \bm{z}_j)$ for uniformly drawn samples in $\mathcal{U}$ for $K = 2, 10, 100, 1000$, respectively for various dimensions $D$, where $\bm{z} \in \mathbb{R}^D$. The number of samples per setting is $10,000$, with $100$ realisations of the seeds drawn from $\mathcal{N}(\bm{0}, \bm{I})$ and $100$ uniformly sampled $\bm{u}$ per sampled seeds realisation. (\textit{bottom}) Estimated correlations between $\bm{w}_i^\top\bm{w}_j$ and $\text{sim}_{z}(\bm{z}_i, \bm{z}_j)$ uses all the samples per setting.  
    }
    \label{fig:dominating_weights}
\end{figure*}

\section{Empirical stationarity assessments of $\phi_w$}
\label{appendix:empirical_stationarity}

In Section~\ref{sec:surrogate_latent_spaces}, we discuss the notion of stationarity and explain why, in practice, it can only be achieved approximately. In our setting, the relevant similarity structure is induced by the weight chart $\phi_w$, which maps surrogate coordinates $\bm u\in\mathcal U$ to weights $\bm w\in\mathbb S^{K-1}_+$. We would therefore like the chosen chart to preserve, as far as possible, a simple relationship between Euclidean distances in $\mathcal U$ and similarities between the corresponding weights. In this section, we make this notion precise and empirically evaluate the candidate charts introduced in Section~\ref{appendix:charts}. These candidates are not intended to be exhaustive; rather, our goal is to identify a sufficiently well-behaved chart for use in our methodology.

Define the chart-induced similarity kernel
\[
k_{\phi}(\bm u,\bm u')
=
\phi_w(\bm u)^\top \phi_w(\bm u').
\]
A stationary, or translation-invariant, kernel depends only on the displacement between its inputs,
\[
k_{\phi}(\bm u,\bm u')=\kappa(\bm u-\bm u').
\]
In this work, we consider the stronger isotropic form, where the dependence is only through Euclidean distance:
\[
k_{\phi}(\bm u,\bm u')
=
v\!\left(\|\bm u-\bm u'\|_2\right).
\]
Equivalently, for any two pairs of points $(\bm u_1,\bm u_2)$ and $(\bm u_3,\bm u_4)$ in $\mathcal U$ with the same Euclidean distance,
\[
\|\bm u_1-\bm u_2\|_2
=
\|\bm u_3-\bm u_4\|_2,
\]
isotropic stationarity would require
\[
\phi_w(\bm u_1)^\top \phi_w(\bm u_2)
=
\phi_w(\bm u_3)^\top \phi_w(\bm u_4).
\]
Since such an isotropic stationary relationship cannot hold exactly for a low-distortion chart from a Euclidean cube to the hypersphere, we instead assess \emph{approximate stationarity}: the induced dot products need not be identical for equal-distance pairs, but should vary predictably and monotonically with Euclidean distance across most of $\mathcal U$.

Figure~\ref{fig:approx_stationarity_paths} evaluates this property using one million pairs of points in $\mathcal U$ for both the Knothe--Rosenblatt (KR) chart and the angular-coordinate chart. Each pair is obtained by sampling two points uniformly along a randomly chosen line through $\mathcal U$. For comparison, Figure~\ref{fig:approx_stationarity} presents the same analysis when both points are sampled independently and uniformly from $\mathcal U$. These two sampling schemes behave differently in high dimensions. Under independent sampling, pairwise distances in the hypercube concentrate sharply, making it difficult to assess the dependence of induced dot products across a broad range of distances. In contrast, line-based sampling produces a wider and more informative range of separations, and is more relevant for optimisation because it probes behaviour along local search directions.

Across the dimensionalities tested, the KR chart exhibits a strong relationship between Euclidean distance in $\mathcal U$ and the corresponding weight-space dot product, with this relationship becoming clearer as the dimension increases. By contrast, the angular-coordinate chart shows substantially weaker distance--similarity alignment in higher dimensions, particularly for pairs that are farther apart. These empirical results motivate our use of the KR chart as the default choice for $\phi_w$.
\newcommand{\imgscale}{0.4}

\begin{figure*}[htbp]
    \centering

    \begin{minipage}{0.95\textwidth}
        \centering
        \makebox[\imgscale\linewidth]{\hspace{4em}Knothe--Rosenblatt chart}
        \makebox[\imgscale\linewidth]{Angular coordinates chart}
    \end{minipage}
    \vspace{0.5em}
    
    \begin{subfigure}{0.85\textwidth}
        \centering
        \includegraphics[width=\imgscale\linewidth]{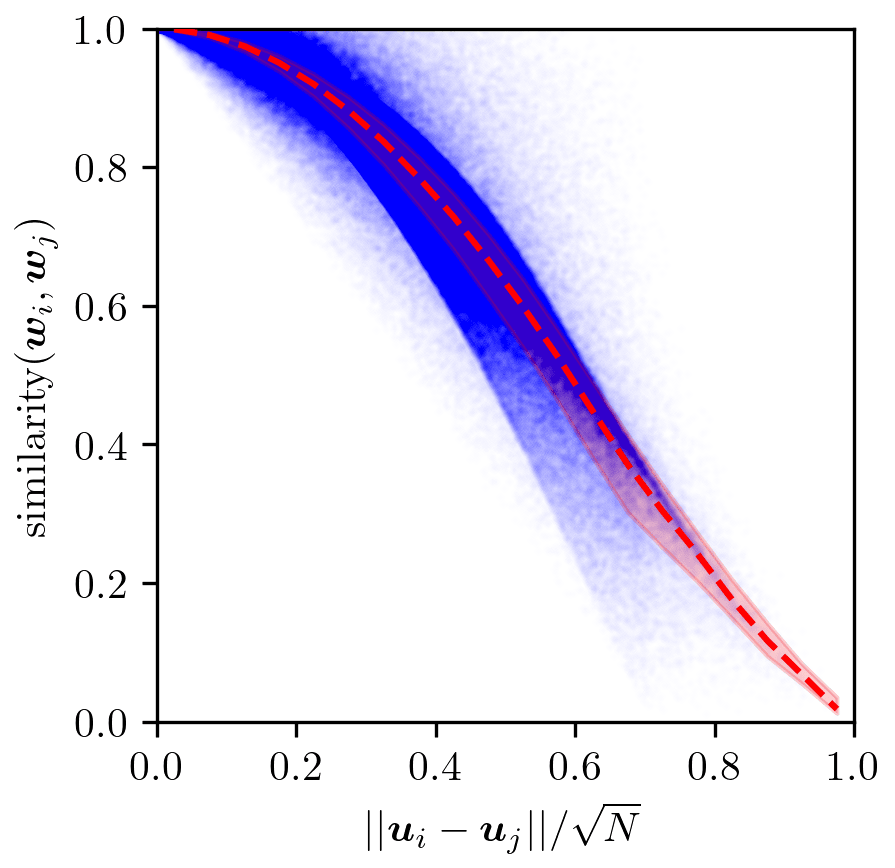}
        \includegraphics[width=\imgscale\linewidth]{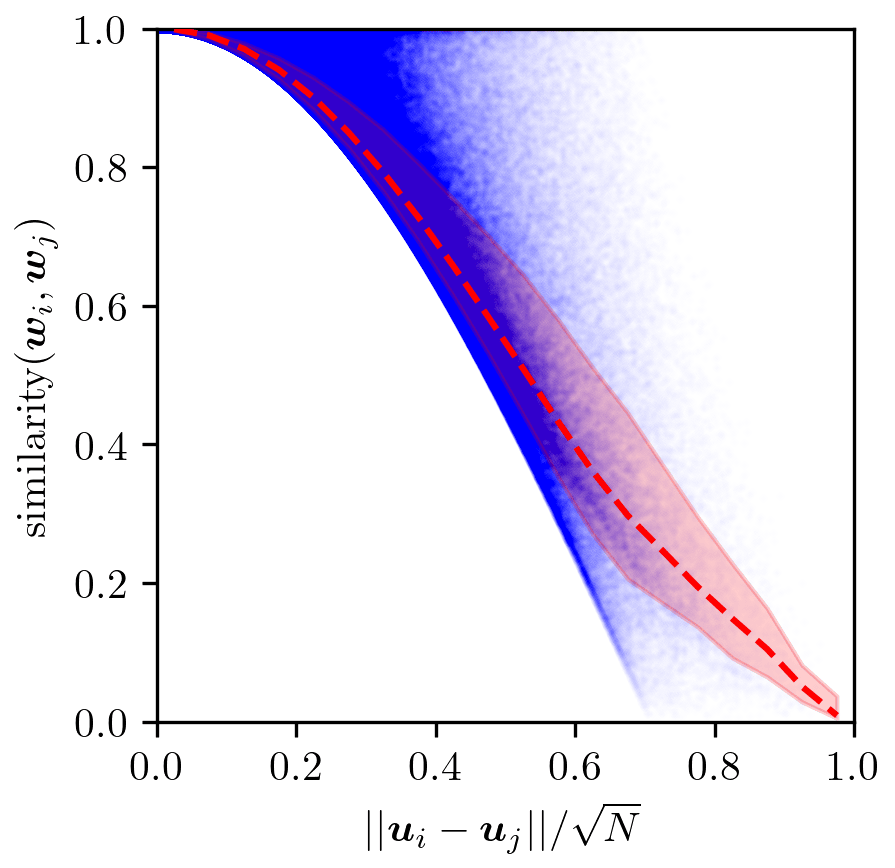}
        \caption{Surrogate space dimensionality: $2$}
    \end{subfigure}
    
    \begin{subfigure}{0.85\textwidth}
        \centering
        \includegraphics[width=\imgscale\linewidth]{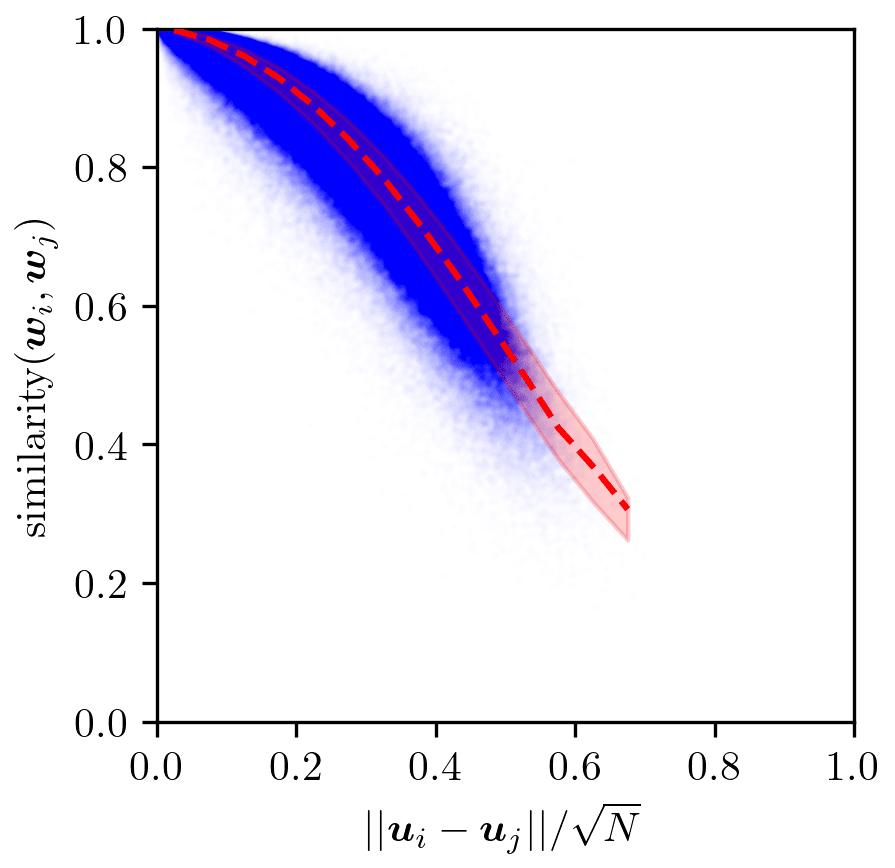}
        \includegraphics[width=\imgscale\linewidth]{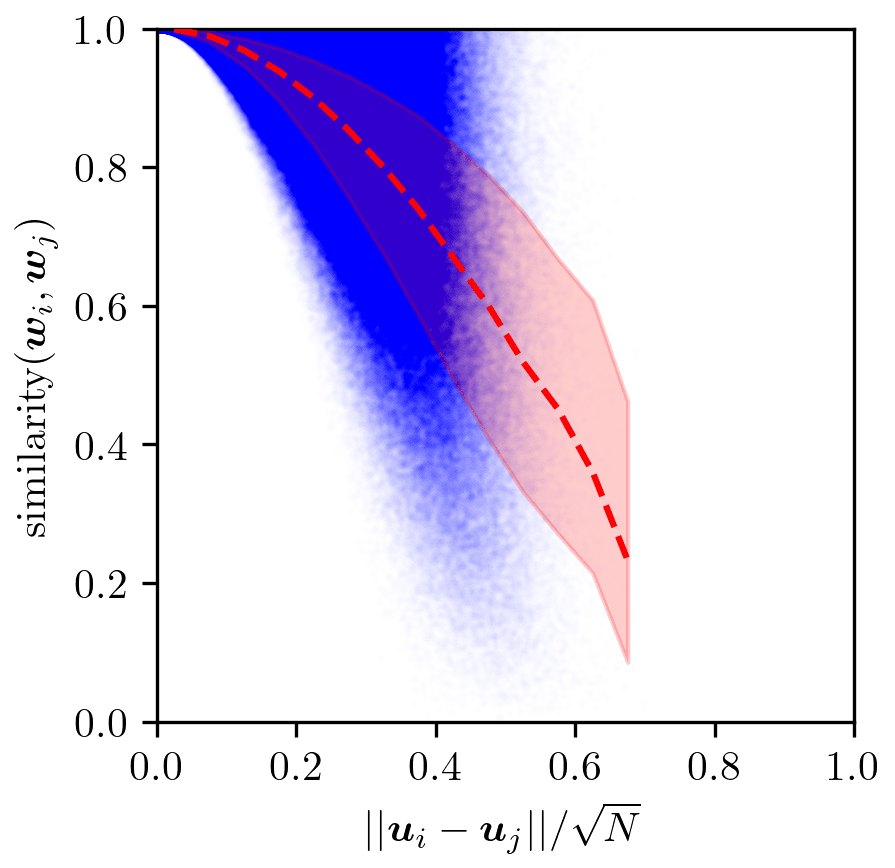}
        \caption{Surrogate space dimensionality: $10$}
    \end{subfigure}

    \begin{subfigure}{0.85\textwidth}
        \centering
        \includegraphics[width=\imgscale\linewidth]{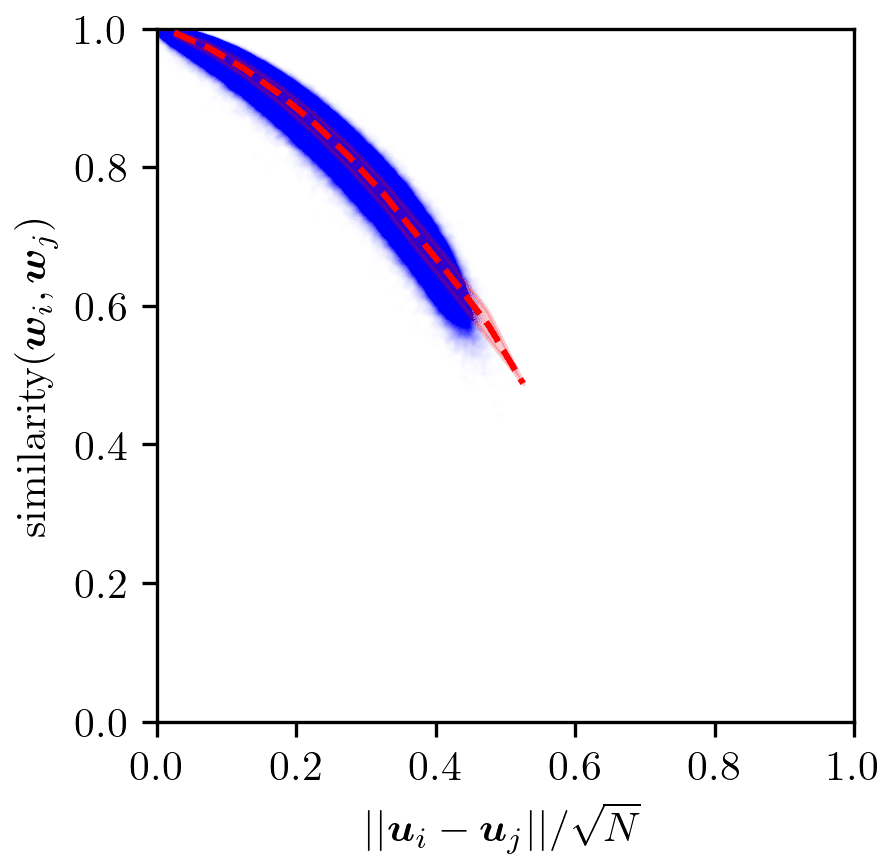}
        \includegraphics[width=\imgscale\linewidth]{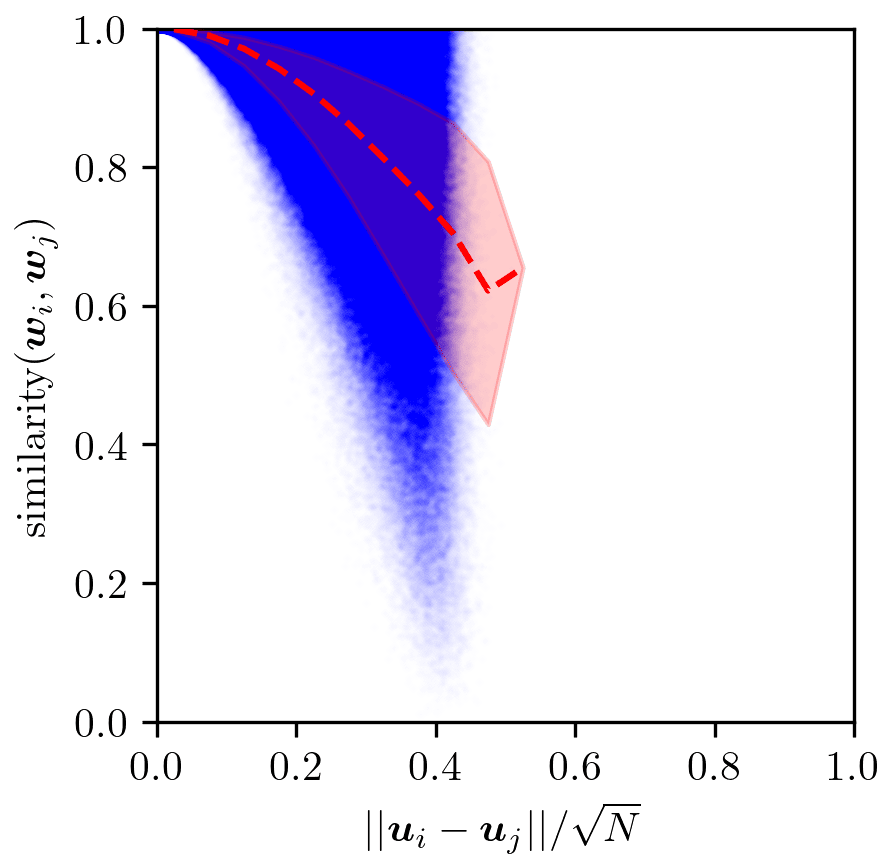}
        \caption{Surrogate space dimensionality: $100$}
    \end{subfigure}

    \caption{
    Empirical assessment of approximate stationarity for the Knothe--Rosenblatt chart and the angular-coordinate chart. Blue points show one million sampled pairs of surrogate coordinates in $\mathcal U$, plotting their normalised Euclidean distance against the dot product of their corresponding weight vectors. The left column uses the Knothe--Rosenblatt chart and the right column uses the angular-coordinate chart; see Appendix~\ref{appendix:charts}. The red dashed line shows the mean similarity, and the red shaded region shows the central 50\% interval. The top, middle, and bottom rows correspond to surrogate dimensions $2$, $10$, and $100$, respectively, or equivalently to $3$, $11$, and $101$ seed latents.
    Pairs are generated by repeating the following procedure one million times: (1) sample a point $\bm p_1$ uniformly in $\mathcal U$; (2) form a line segment from $\bm p_1$ to a point sampled uniformly on the boundary of $\mathcal U$; and (3) sample $\bm p_2$ uniformly along this line segment.
    }
    \label{fig:approx_stationarity_paths}
\end{figure*}

\begin{figure*}[htbp]
    \centering

    \begin{minipage}{0.95\textwidth}
        \centering
        \makebox[\imgscale\linewidth]{\hspace{4em}Knothe--Rosenblatt chart}
        \makebox[\imgscale\linewidth]{Angular coordinates chart}
    \end{minipage}
    \vspace{0.5em}
    
    \begin{subfigure}{0.85\textwidth}
        \centering
        \includegraphics[width=\imgscale\linewidth]{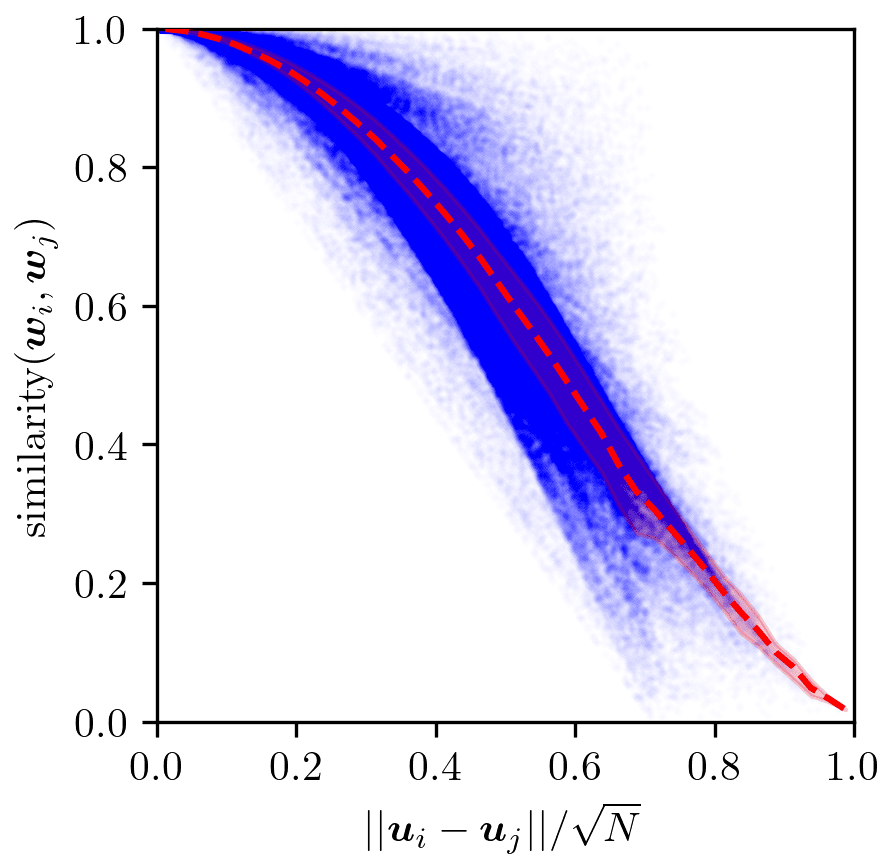}
        \includegraphics[width=\imgscale\linewidth]{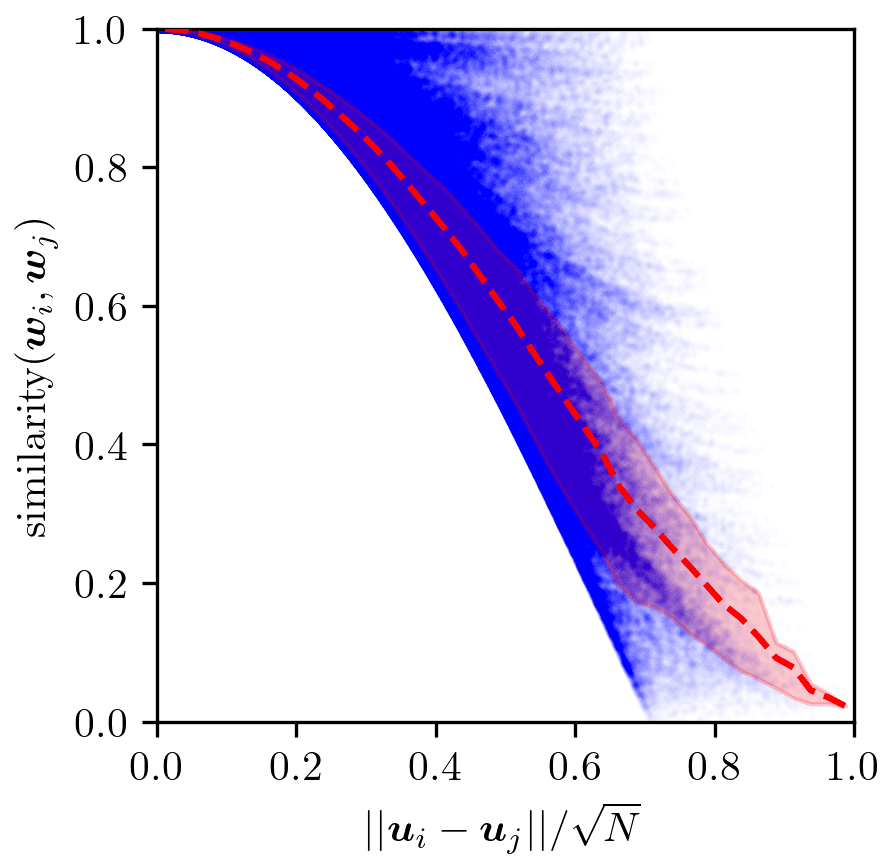}
        \caption{Surrogate space dimensionality: $2$}
    \end{subfigure}
    
    \begin{subfigure}{0.85\textwidth}
        \centering
        \includegraphics[width=\imgscale\linewidth]{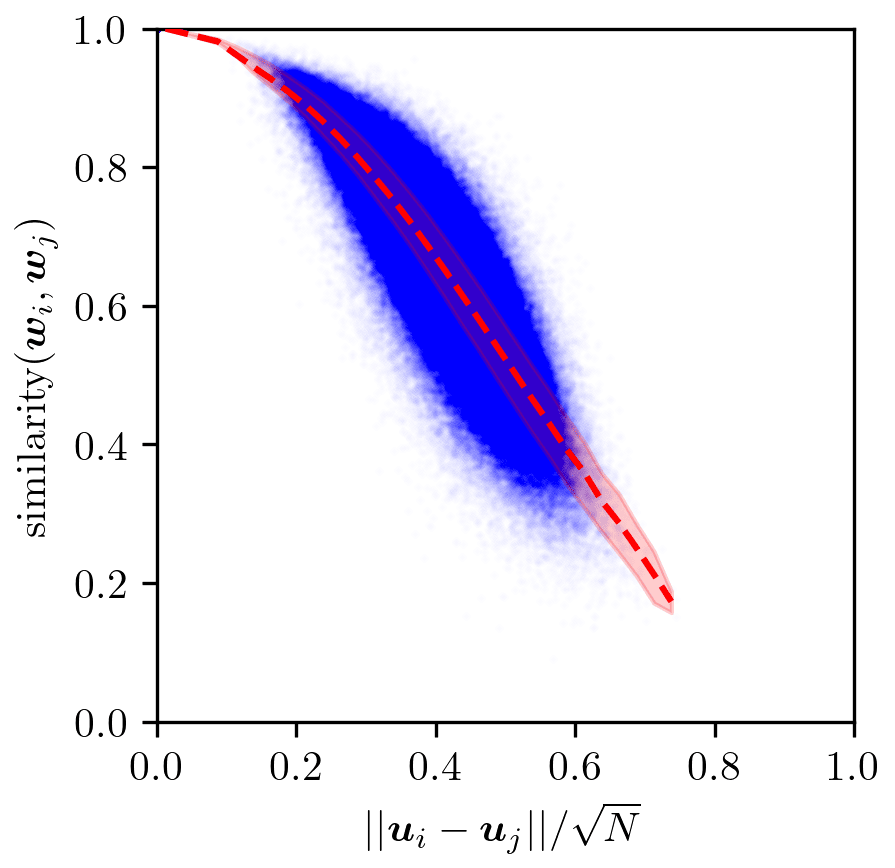}
        \includegraphics[width=\imgscale\linewidth]{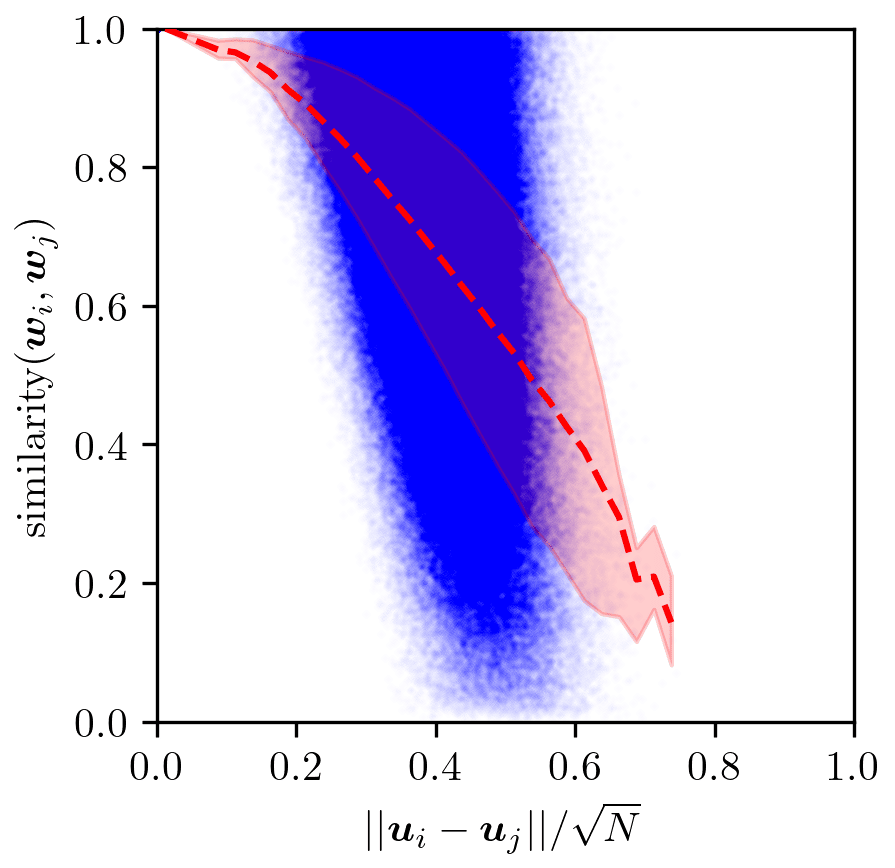}
        \caption{Surrogate space dimensionality: $10$}
    \end{subfigure}

    \begin{subfigure}{0.85\textwidth}
        \centering
        \includegraphics[width=\imgscale\linewidth]{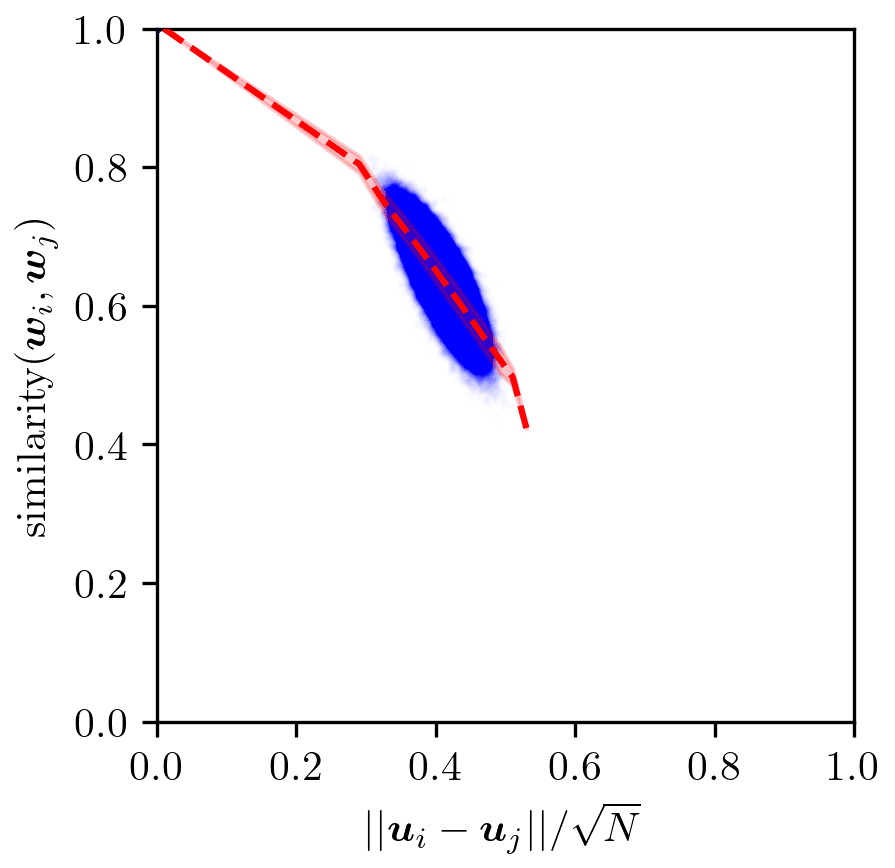}
        \includegraphics[width=\imgscale\linewidth]{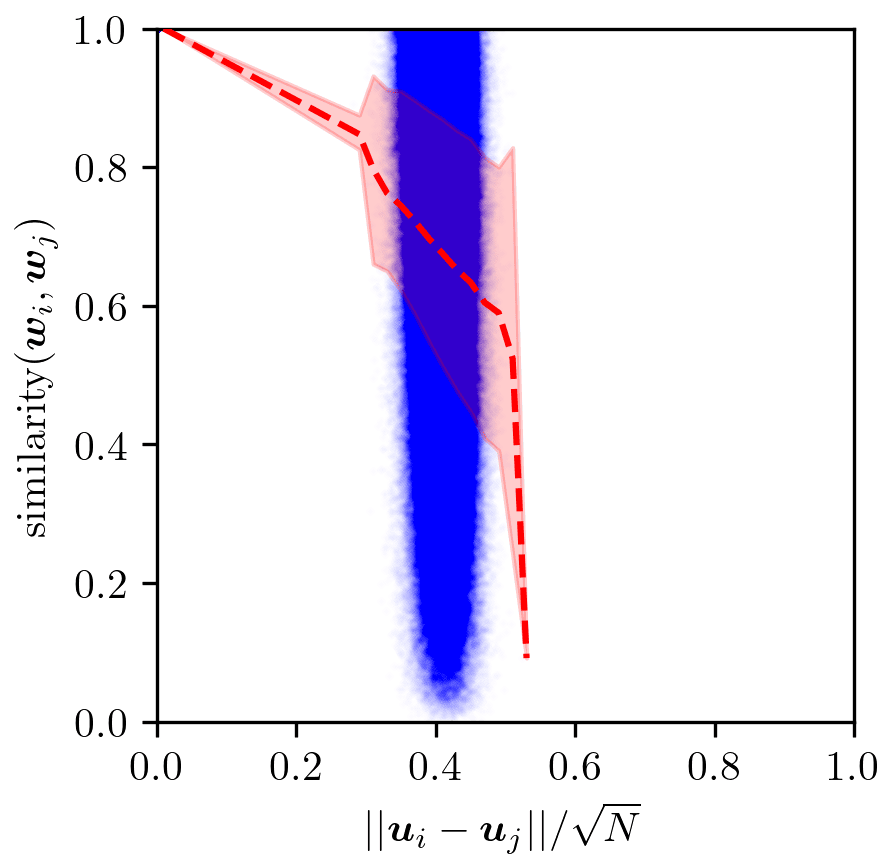}
        \caption{Surrogate space dimensionality: $100$}
    \end{subfigure}

   \caption{
    Empirical assessment of approximate stationarity under independent uniform sampling in the surrogate space $\mathcal U$. Blue points show one million independently sampled pairs of surrogate coordinates, plotting their normalised Euclidean distance against the dot product of their corresponding weight vectors. The left column uses the Knothe--Rosenblatt chart and the right column uses the angular-coordinate chart; see Appendix~\ref{appendix:charts}. The red dashed line shows the mean similarity, and the red shaded region shows the central 50\% interval. The top, middle, and bottom rows correspond to surrogate dimensions $2$, $10$, and $100$, respectively, or equivalently to $3$, $11$, and $101$ seed latents. The Knothe--Rosenblatt chart maintains a strong negative relationship between Euclidean distance and weight-space dot product, whereas this relationship is substantially weaker for the angular-coordinate chart.
    }
    \label{fig:approx_stationarity}
\end{figure*}

\section{Good examples define spaces with better solutions: details}
\label{appendix:good_seeds_better_solutions_details}

For our surrogate spaces to be useful they need to contain varied objects that share characteristics with the seed latents; especially those attributes that impact targeted objective functions. Figure~\ref{fig:image_grid} presented an illustrative example, where a 2D space formed from three seeds indeed contained large areas of solutions better than the seeds. We now asses this property qualitatively on the benchmark presented in~\cite{denker2025iterative}, where we seek generations from the diffusion model Stable Diffusion 1.5~\citep{rombach2022high} that score highly according to ImageReward~\citep{xu2023imagereward}, a measure of alignment with a target prompt. 
We also report diversity scores by computing one minus the mean cosine similarity of the CLIP~\citep{radford2021learning} embeddings of the images. 

\begin{table*}[ht]
\caption{
\textbf{
Scores and diversities of generated images
}
Reported is the median and the 90\% confidence interval of the mean score and diversity, respectively, for each method (row) and prompt (shown in the top). 
Each section of rows correspond to methods using different compute budgets; with standard model sampling, 100 samples are drawn randomly from the (base) model (SD 1.5), while the bottom section are methods requiring training beyond the base model. The `1/2 efficiency' are methods with 100 initial random samples at its disposal before then producing the final 100 samples, and `1/6 efficiency' is the same but using 500 initial samples. The `Best-of'-methods use the budget as per their name, and the `Grid in $\mathcal{U}$'-methods forms surrogate spaces from seeds being the highest scoring samples among the initial random samples, followed by producing a grid of 100 points.
}
\centering
\resizebox{\textwidth}{!}{%
\begin{tabular}{lccccccc}
\toprule
& \multicolumn{2}{c}{``A green colored rabbit.''} & \multicolumn{2}{c}{``Two roses in a vase.''} & \multicolumn{2}{c}{``Two dogs in the park.''} & GPU hrs (training) \\
\cmidrule(lr){2-3} \cmidrule(lr){4-5} \cmidrule(lr){6-7}
& Reward ($\uparrow$) & Diversity ($\uparrow$) & Reward ($\uparrow$) & Diversity ($\uparrow$) & Reward ($\uparrow$) & Diversity ($\uparrow$) &  \\
\midrule
Standard model sampling          &  -0.16 [-0.35, 0.03] & 0.17 [0.15,0.18] & 0.80 [0.66, 0.90] & 0.12 [0.10, 0.13] & 0.36 [0.29, 0.43] & 0.16 [0.16, 0.17] & N/A \\
\midrule
\multicolumn{8}{l}{\textit{1/2 efficiency}} \\
Best-1-of-2, 100 times   &  0.54 [0.39, 0.73] & 0.17 [0.16, 0.18] & 1.22 [1.15, 1.30] & 0.09 [0.09, 0.10] & 0.63 [0.58, 0.69] & 0.16 [0.15, 0.16] & N/A \\
Best-100-of-200      &  1.01 [0.78, 1.17] & 0.17 [0.15, 0.18] & 1.40 [1.34, 1.45] & 0.09 [0.09, 0.10] & 0.73 [0.68, 0.78] & 0.15 [0.14, 0.16] & N/A \\
Grid in $\mathcal{U}^1$ (2 seeds)   &  1.67 [1.08, 1.84] & 0.06 [0.03, 0.12] & 1.43 [0.70, 1.77] & 0.05 [0.04, 0.11] & 0.83 [0.32, 1.16] & 0.09 [0.07, 0.14] & N/A \\
Grid in $\mathcal{U}^3$ (4 seeds)   &  1.55 [1.27, 1.77] & 0.08 [0.06, 0.13] & 1.29 [0.96, 1.53] & 0.08 [0.07, 0.11] & 0.74 [0.49, 0.93] & 0.13 [0.11, 0.15] & N/A \\
Grid in $\mathcal{U}^5$ (6 seeds)   &  1.56 [1.28,  1.74] & 0.09 [0.07, 0.13] & 1.34 [1.01, 1.58] & 0.09 [0.07, 0.12] & 0.68 [0.53, 0.83] & 0.13 [0.11, 0.15] & N/A \\
\midrule
\multicolumn{8}{l}{\textit{1/6 efficiency}} \\
Best-1-of-6, 100 times     &  1.42 [1.35, 1.50] & 0.13 [0.12, 0.14] & 1.55 [1.52, 1.58] & 0.09 [0.09, 0.10] & 0.91 [0.87, 0.95] & 0.15 [0.14, 0.15] & N/A \\
Best-100-of-600      &  1.65 [1.61, 1.69] & 0.11 [0.10, 0.12] & 1.64 [1.61, 1.65] & 0.09 [0.08, 0.10] & 1.00 [0.97, 1.03] & 0.15 [0.14, 0.15] & N/A \\
Grid in $\mathcal{U}^1$ (2 seeds)   &  1.75 [1.50, 1.86] & 0.04 [0.03, 0.08] & 1.55 [1.19, 1.78] & 0.06 [0.04, 0.11] & 0.84 [0.35, 1.13] & 0.09 [0.07, 0.11] & N/A \\
Grid in $\mathcal{U}^3$ (4 seeds)   &  1.73 [1.31, 1.83] & 0.07 [0.05, 0.08] & 1.40 [1.21, 1.65] & 0.08 [0.06,0.10] & 0.69 [0.32, 0.94] & 0.13 [0.11, 0.16] & N/A \\
Grid in $\mathcal{U}^5$ (6 seeds)   &  1.71 [1.45, 1.79] & 0.07 [0.06, 0.09] & 1.32 [1.15, 1.55] & 0.09 [0.07, 0.12] & 0.68 [0.40, 0.86] & 0.14 [0.12, 0.17] & N/A \\
\midrule
DPOK & 1.62 & 0.07 & 1.59 & 0.11 & 1.01 & 0.14 & 28 (A100) \\
Adjoint Matching & 1.71 & 0.05 & 1.50 & 0.09 & 1.33 & 0.13 & 4 (A100) \\
Importance Fine-tuning       &  1.46 & 0.05 & 1.53 & 0.08 & 1.01 & 0.12 & 7 (RTX 4090) \\
\bottomrule
\end{tabular}
}
\label{table:sampling_seeds}
\end{table*}

\textbf{Results}: We test our surrogate spaces by seeing if we can form 100 generations (gridded over the surrogate space) that are both good (well-aligned with the target prompt) and diverse. To generate the seed latents that define our surrogate space, we use a budget of $S$ random generations and pick the top $K$ (a stand-in for having a-priori access to `good' seeds). We also report the canonical baselines of taking the top 100 directly from the $S$ random generations on each run. Table~\ref{table:sampling_seeds} reports the median and 90\% confidence interval of the mean score of the 100 generated images per method over 30 repetitions. To provide context for these scores, we also include the results from \cite{denker2025iterative} who use the same benchmarking setup to compare the performance of algorithms that require many GPU hours of training on the particular score function in order to produce generations with high scores. We include their reported scores (they only provide one repetition) for Importance Fine-tuning~\citep{denker2025iterative}, DPOK~\citep{fan2023reinforcement}, Adjoint Matching~\citep{domingo2024adjoint}, demonstrating that, on two out of three prompts, our surrogate spaces produce higher or same scoring generations than the expensive fine-tuning approaches, and similarly diverse. 
A larger relatively volume of high scoring solutions was observed for the lower dimensional surrogate spaces (but slightly less diverse) --- which can be told by the grid yielding high mean scores --- which is expected as the lower dimensionalities were produced from better seeds (with more random samples per seed to determine them). 

\section{Surrogate spaces searched by standard optimisation algorithms: details}
\label{appendix:image_opt_details}

\textbf{Goal.} We will now confirm that surrogate spaces enable effective LSO in high-dimensional latent variable models by deploying popular optimisation algorithms and compare how they perform in our surrogate latent spaces against the original latent space.
Our methodology enables good or informative solutions to guide the search by defining a targeted space, which is to be reflected in the test task.  
We use the popular methods of CMA-ES~\citep{hansen2016cma}, BO~\citep{shahriari2015taking}, as well as random search. 
The objective function is to optimise the Pick score~\citep{kirstain2023pick} for generations of the Stable Diffusion (SD) 2.1~\citep{rombach2022high} model. 
Specifically, the model is given a general prompt (`A vehicle') and the objective is to obtain high Pick-scores for a prompt sampled randomly from a grammar composed of three parts as `A <attribute> <vehicle type> <environment>', forming a million possible combinations.  
The grammar is given in Section~\ref{appendix:image_comp_benchmark}. 
The sampled target prompts are hidden from the methods, but implicitly conveyed via the objective function; the score as a function of the generated image. 
At our disposal we have $M$ examples for each part where the attribute, type, or environment match the target, but where the other parts are sampled randomly. 
This is to simulate the scenario where the practitioner has access to informative but incomplete solutions a-priori, having some of the target characteristics, but not all. 
This, per setup and sampled prompt, yields a number of seeds of $K = 3M$. The experiment is repeated 10 times; i.e. we sample 10 targets and their corresponding seed examples independently, and apply each optimisation algorithm and weight chart combination to these targets, producing 10 corresponding runs for which we report the median and the 90th confidence interval.   
For optimiser setups, see~\ref{appendix:optimiser_setups}.

\textbf{Results.} Figure~\ref{fig:image_optimisation_trajectories} demonstrates that optimisers perform better within our 
surrogate spaces than in the full, original latent space, typically outperforming the best solutions found over a whole run of random search in the full space (i.e. standard sampling from the generative model) in just handful of evaluations. CMA-ES deployed in the original latent space (by specifying points in $\bm{u} \in [0, 1]^D$ which are subsequently mapped to latent distribution samples via the inverse Gaussian CDF) failed to produce anything but black images (see Figure~\ref{fig:image_optimisation_best_images}), which is not surprising as it is unlikely to find a point on the manifold of realistic latent realisations (see~\cite{bodin2024linear}). 
In Figure~\ref{fig:weight_chart_comparison} we report results using surrogate spaces with alternative choices of $\phi_{w}$ (see Section~\ref{appendix:weight_chart_comparison}), as well as all combinations of optimisers, including random search in surrogate spaces. 
Within very low-dimensional surrogate spaces (not the full space), random search was nearly as effective as BO and CMA-ES, but as the dimensionality increased (by providing more seeds) CMA-ES and BO performed substantially better.

\section{Optimiser setups}
\label{appendix:optimiser_setups}
For the image and \textsc{RFdiffusion} experiments:
\begin{itemize}
    \item \textbf{CMA-ES.} We use the implementation from~\cite{nomura2024cmaes} with population size $4$ and $\sigma = 0.2$. 
    \item \textbf{BO.} For $K = 3$ and $K = 9$ (i.e. 2D and 8D search problems), we use a Gaussian Process prior with a (3/2)-Matérn kernel and DEFER~\citep{bodin2021black} --- with a budget of $300$ density function evaluations and $30$ hyperparameter posterior samples --- for Bayesian inference for the kernel scale, lengthscale, and Gaussian (homoscedastic) noise variance parameters. For $K = 90$ (i.e. 89D search problems), we use Turbo~\citep{eriksson2019scalable} and the author's official implementation. 
    \item \textbf{Random search in $\mathcal{U}$}. Uniform, independent sampling in $\mathcal{U}$. 
    \item \textbf{Random search in $\mathcal{Z}$}. Standard random (and independent) sampling from the latent distribution. 
    \item \textbf{CMA-ES in $\mathcal{Z}$}. CMA-ES deployed on $[0, 1]^D$, where evaluations are mapped to latent distribution samples via the inverse Gaussian CDF.
\end{itemize}

\begin{figure*}[ht]
    \vspace{-1em}
    \centering
    \includegraphics[width=\linewidth]{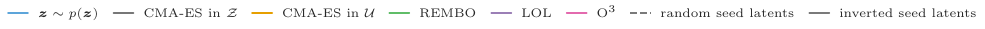}
    \includegraphics[width=\linewidth]{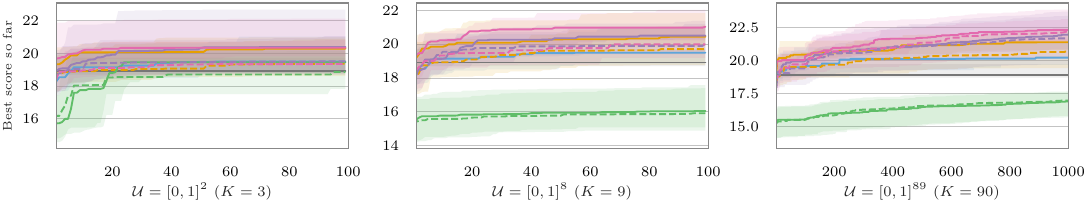}

    \caption{
    \textbf{Image optimisation: best score so far across function evaluations.}
    Per-iteration median (lines) and 90\% confidence interval (shaded, $5$th--$95$th      
    percentile across 10 runs) of the best score reached so far. The three panels                
    correspond to surrogate spaces of increasing dimension induced by $K \in                    
    \{3, 9, 90\}$ seed latents, giving $\mathcal{U} = [0,1]^2$, $[0,1]^8$, and                  
    $[0,1]^{89}$, respectively. Methods compared: random sampling                               
    $\bm{z} \sim p(\bm{z})$, CMA--ES applied directly in $\mathcal{Z}$,                     
    CMA--ES in $\mathcal{U}$, and BO in REMBO, LOL, and $\mathcal{U}$. For the latter four, dashed and solid lines indicate                   
    random and filtered seed latents, respectively. The optimisation budget is 100              
    function evaluations for the two low-dimensional panels and 1000 for the                    
    high-dimensional one.
    }
    \label{fig:image_optimisation_trajectories}
\end{figure*}

\section{Weight chart optimisation comparison}
\label{appendix:weight_chart_comparison}

In this section we report results for combinations of choices of $\phi_w$ (see Section~\ref{appendix:charts}). 

In Figure~\ref{fig:weight_chart_comparison} we see optimisation results for two different choices of $\phi_w$; the KR and the Angular chart, respectively, 
in the context of each of BO, CMA-ES, and random search within the formed surrogate space, and include random search (standard sampling) in the full latent space for reference. For optimiser setups, see~\ref{appendix:optimiser_setups}. 
In the context of each combination of choice for $\phi_w$, optimiser, and number of seeds, the surrogate spaces substantially outperform random sampling in the full latent space. 
Using few seeds, i.e. low-dimensional surrogate spaces, both choices of $\phi_w$ perform similarly, while for the relatively high-dimensional surrogate space ($89D$, formed from $90$ seeds) the KR chart substantially outperform the Angular chart when in the context of an optimisation algorithm (BO and CMA-ES) instead of uniform sampling within the surrogate space.
Although some differences in final score appear modest, they correspond to large differences in evaluation budget: in this low-budget regime, achieving even small additional improvements by random sampling would require many more model evaluations.

\begin{figure*}[ht]
\centering
\includegraphics[width=0.8\linewidth]{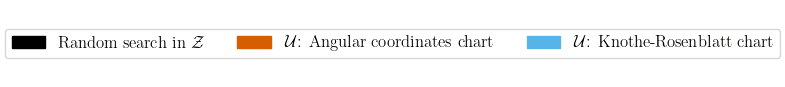}

\captionsetup[subfigure]{labelformat=empty,aboveskip=0pt,belowskip=2pt}
\setlength{\tabcolsep}{0pt}\renewcommand{\arraystretch}{0.95}

\resizebox{0.9\linewidth}{!}{
\begin{tabular}{@{} >{\centering\arraybackslash}m{0.08\textwidth}
                    *{3}{>{\centering\arraybackslash}m{0.29\textwidth}} @{}}
    & {\small 3 seed latents} & {\small 9 seed latents} & {\small 90 seed latents} \\[0.2em]
    \rotatebox{90}{\small {BO}} &
  \begin{subfigure}[t]{\linewidth}\includegraphics[width=\linewidth]{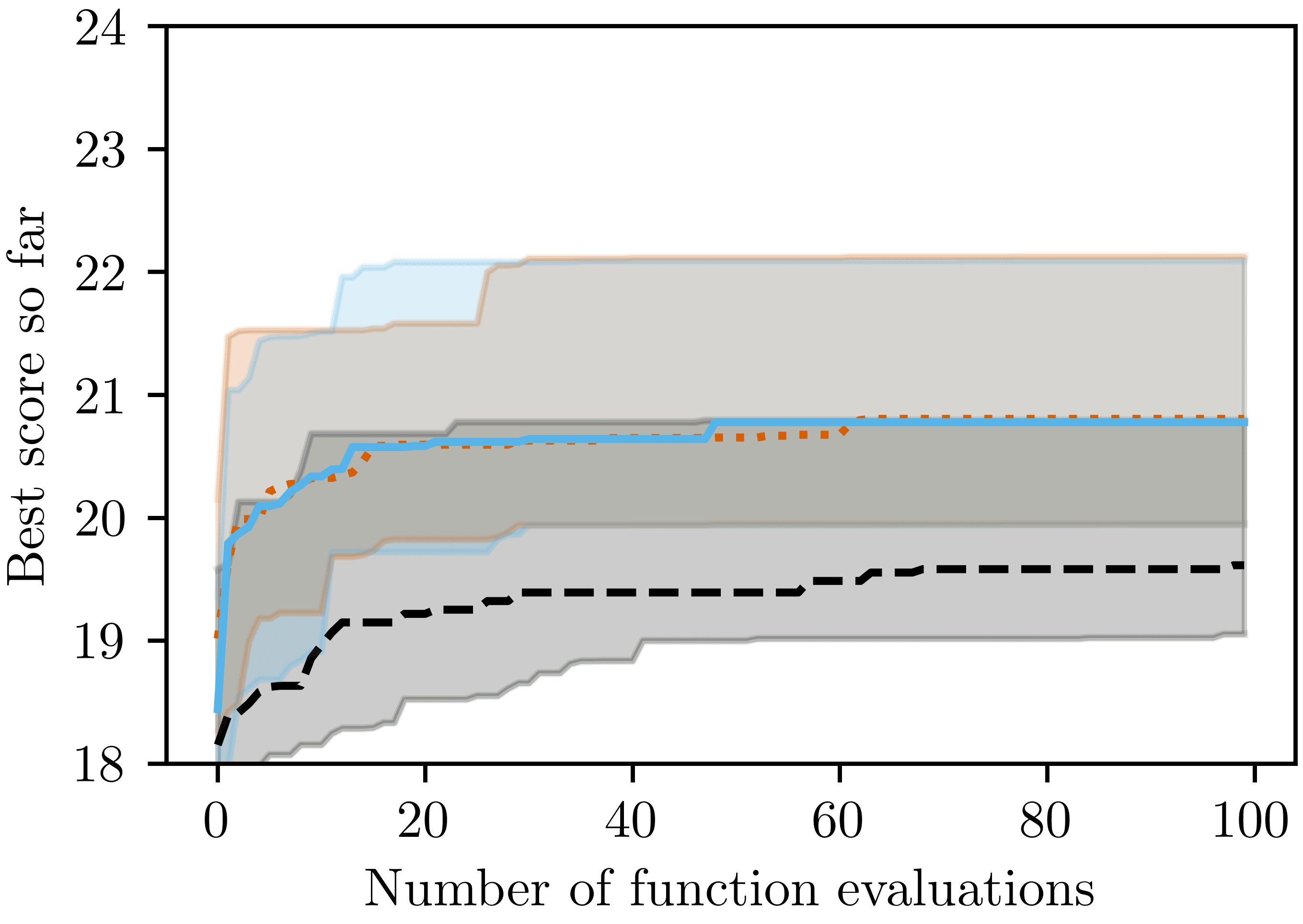}\caption{}\end{subfigure} &
  \begin{subfigure}[t]{\linewidth}\includegraphics[width=\linewidth]{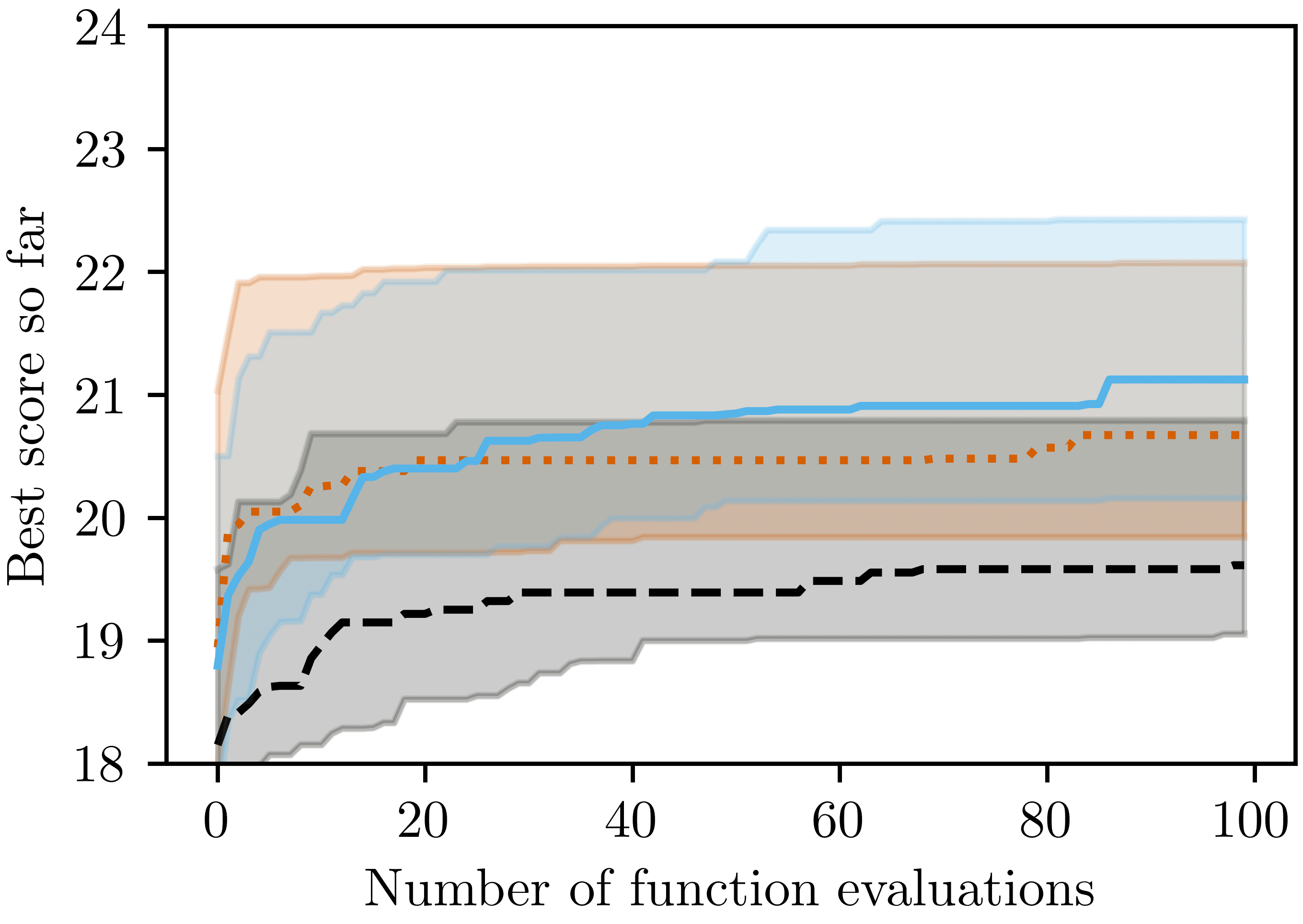}\caption{}\end{subfigure} &
  \begin{subfigure}[t]{\linewidth}\includegraphics[width=\linewidth]{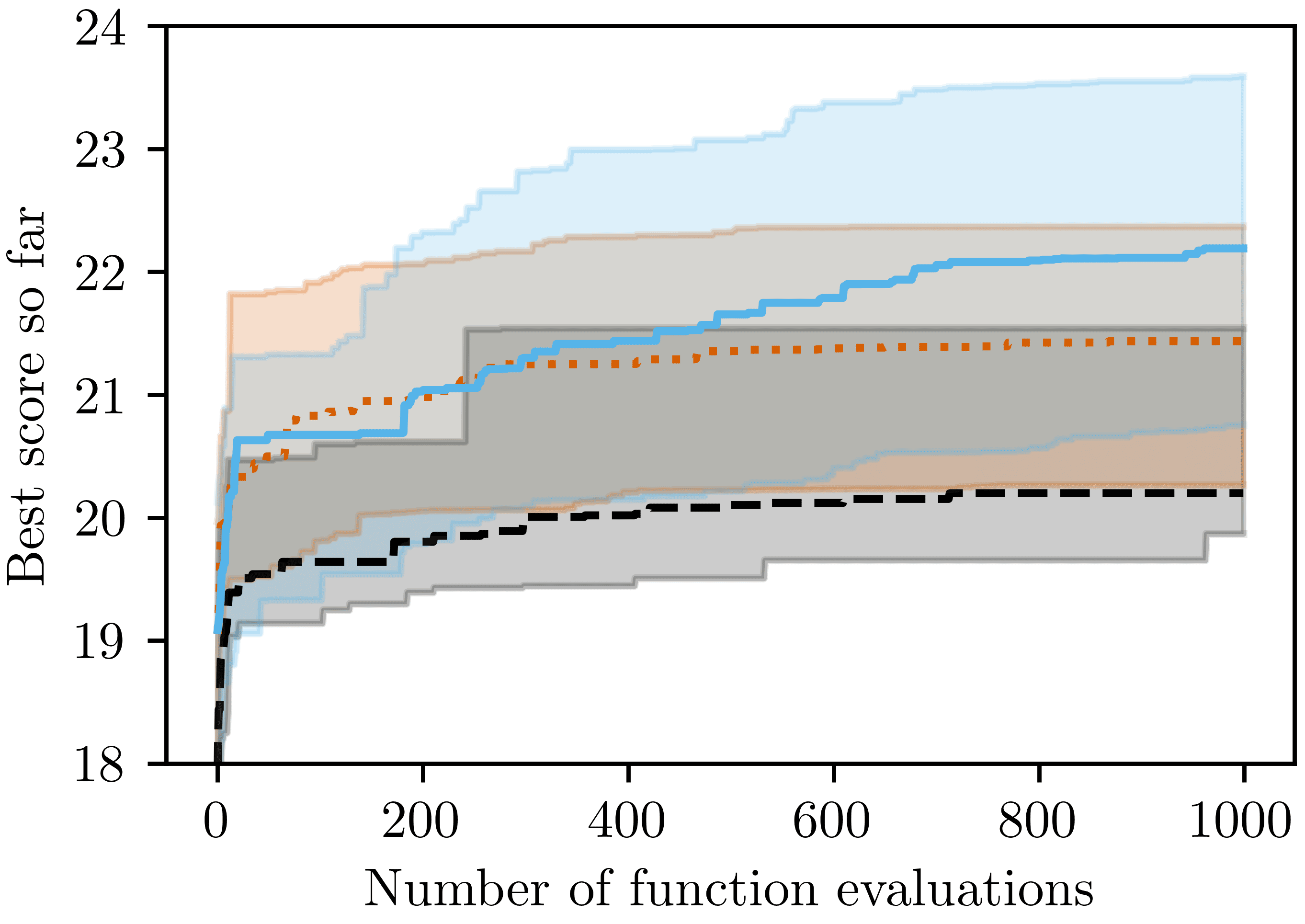}\end{subfigure} \\[0.3em]
    \rotatebox{90}{{\small CMA-ES}} &
      \begin{subfigure}[t]{\linewidth}\includegraphics[width=\linewidth]{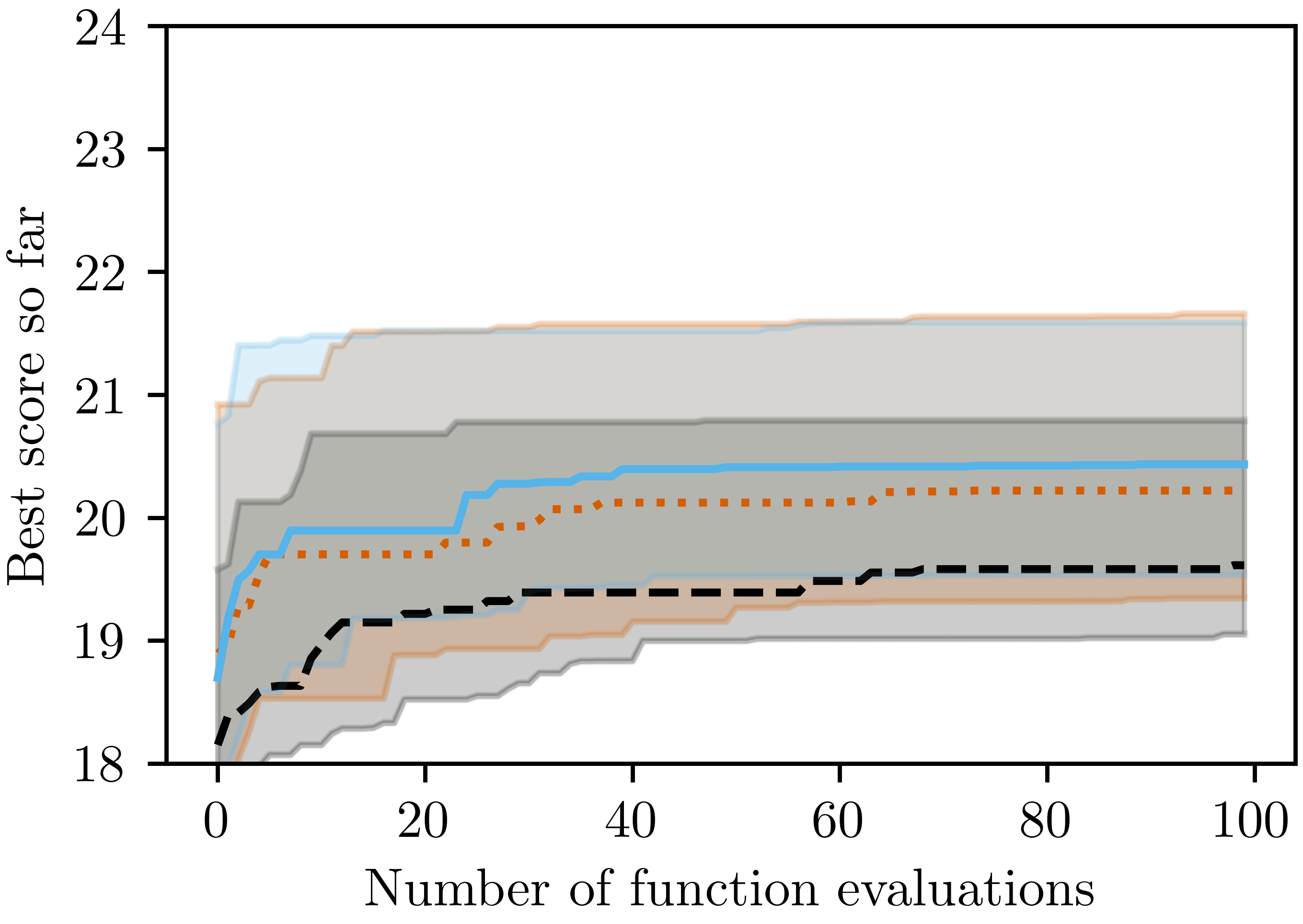}\caption{}\end{subfigure} &
      \begin{subfigure}[t]{\linewidth}\includegraphics[width=\linewidth]{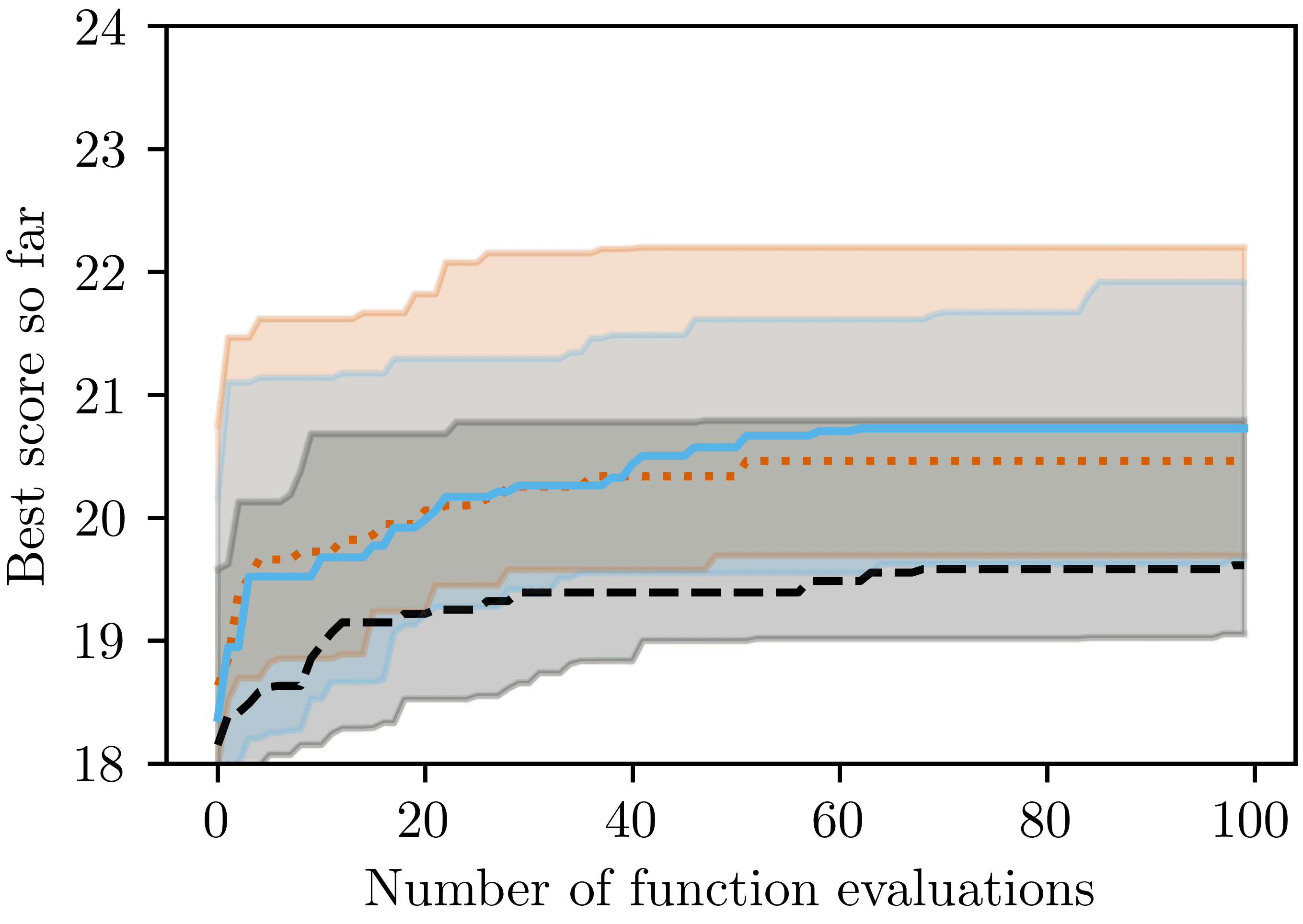}\caption{}\end{subfigure} &
      \begin{subfigure}[t]{\linewidth}\includegraphics[width=\linewidth]{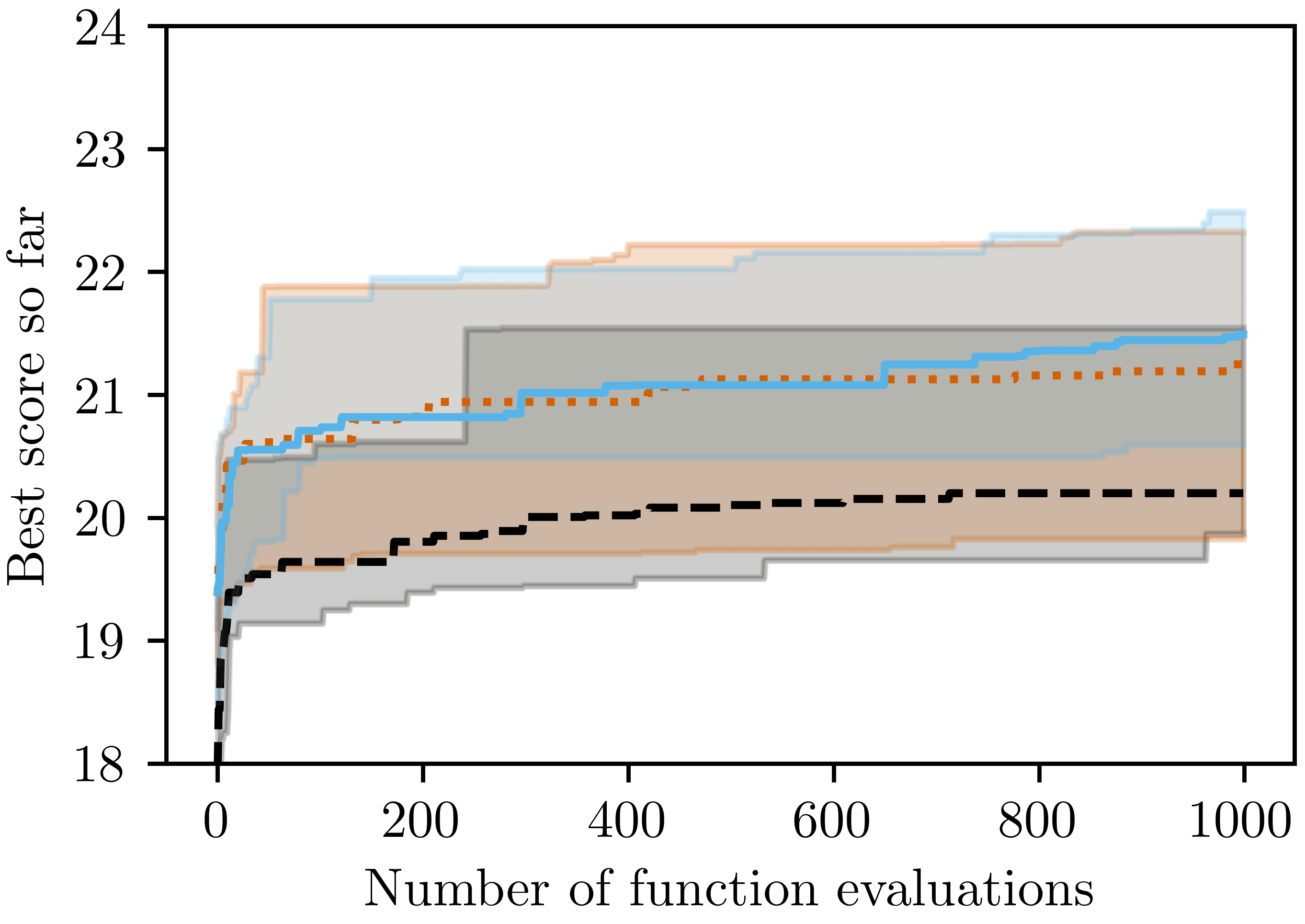}\caption{}\end{subfigure} \\[0.3em]
    \rotatebox{90}{{\small RS in~~$\mathcal{U}$}} &
      \begin{subfigure}[t]{\linewidth}\includegraphics[width=\linewidth]{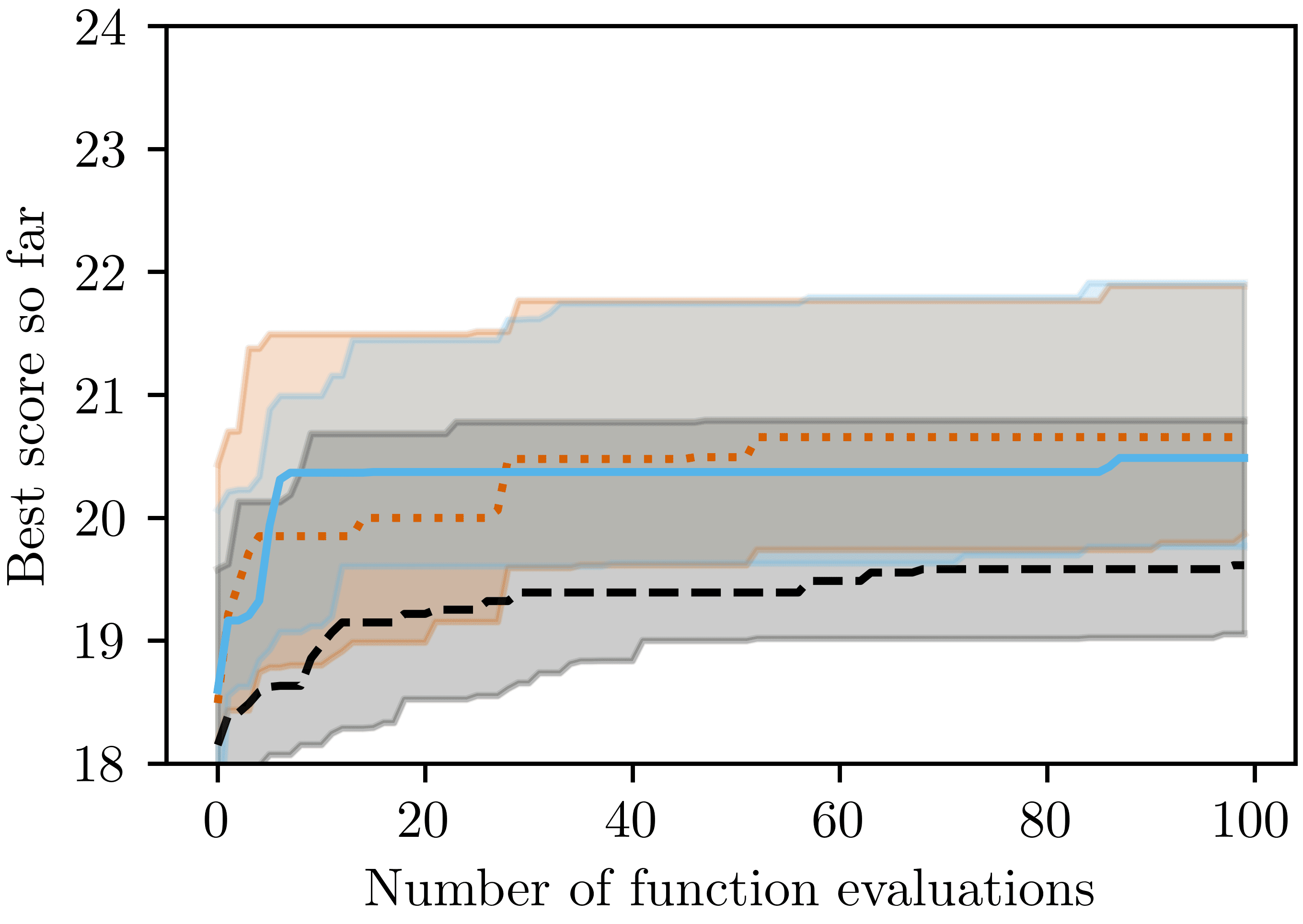}\caption{}\end{subfigure} &
      \begin{subfigure}[t]{\linewidth}\includegraphics[width=\linewidth]{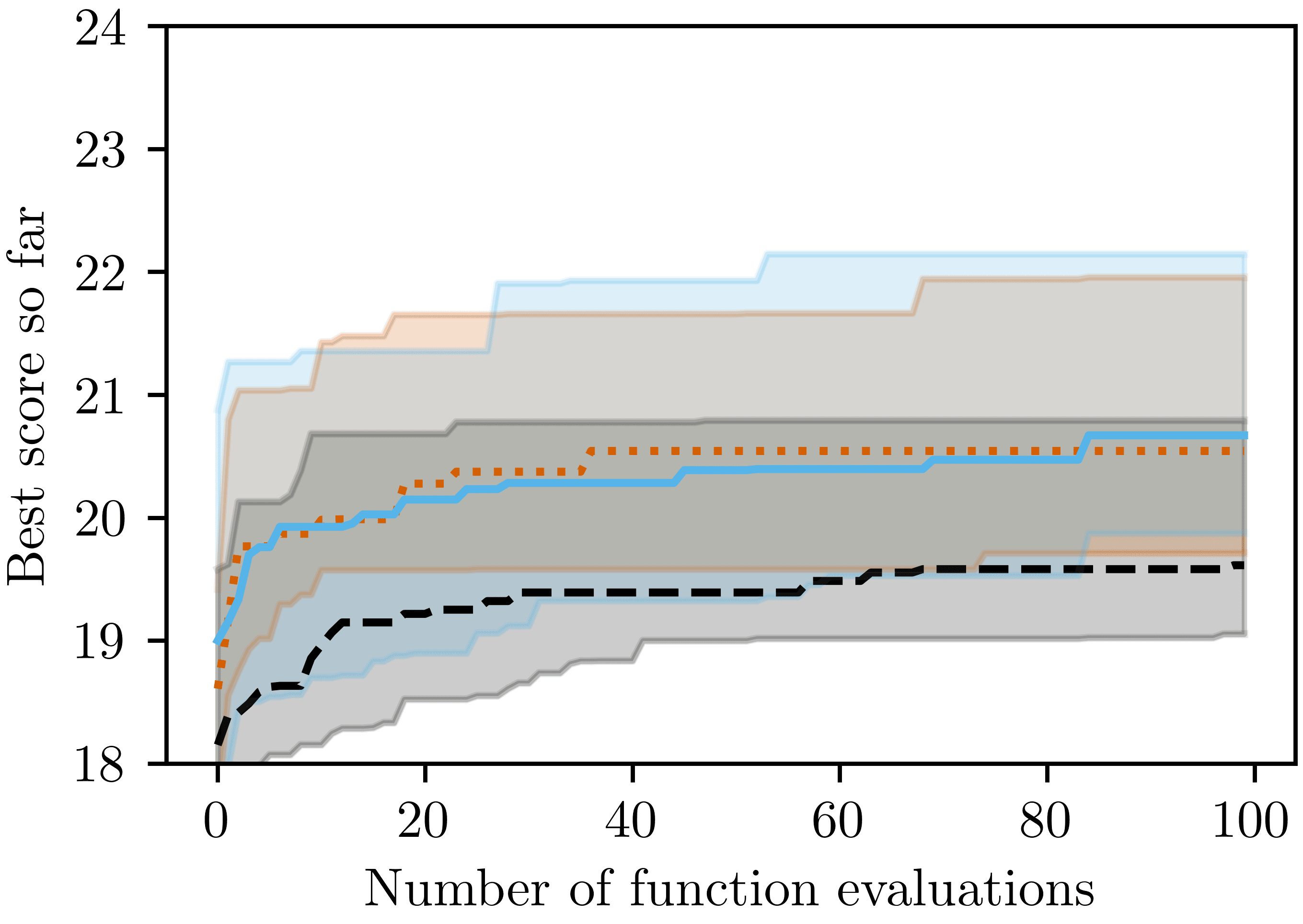}\caption{}\end{subfigure} &
      \begin{subfigure}[t]{\linewidth}\includegraphics[width=\linewidth]{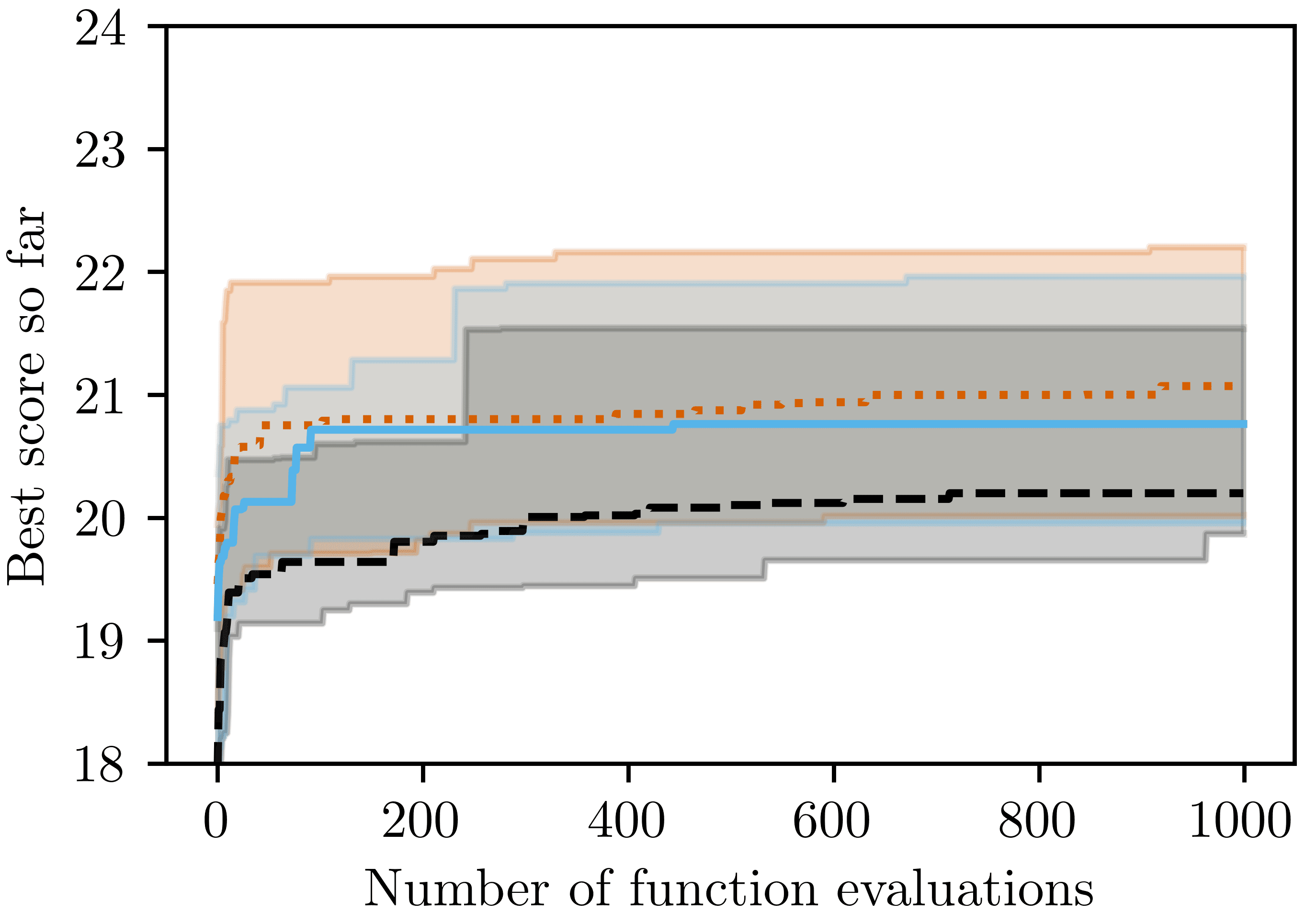}\caption{}\end{subfigure}
\end{tabular}
}

\caption{
\textbf{
Chart comparison across optimisers and surrogate space dimensionalities
}
Shown is the median and 90\% confidence interval of the best-so-far score found per step across runs, on the task described in Figure~\ref{fig:ours_vs_random} and Section~\ref{appendix:image_opt_details}, for surrogate spaces with dimensionality $(K - 1)$, where $K$ is the number of seeds (examples) provided.  
}
\label{fig:weight_chart_comparison}
\end{figure*}

\section{\textsc{Boltz-2} experiment details}
\label{app:boltz}

\paragraph{Base model.}
\textsc{Boltz-2}~\citep{passaro2025boltz} is a protein-structure prediction model that maps an amino-acid sequence to atomic 3D coordinates via a Pairformer trunk followed by a diffusion decoder, trained in the EDM framework~\citep{karras2022elucidating}. By default, the decoder samples from a stochastic reverse SDE with per-step noise injection controlled by the $\gamma_0$ hyperparameter and stochastic SE(3) coordinate augmentation at each step.

\paragraph{Probability-flow ODE conversion.}
\methodname{} requires a deterministic generative map so that latent samples can serve as a basis for the surrogate space $\mathcal{U}$. We adapt \textsc{Boltz-2}'s stochastic sampler to its equivalent probability-flow ODE within the EDM framework by setting $\gamma_0 = 0$ and disabling stochastic SE(3) augmentation, yielding a single deterministic Euler integrator from $\sigma_{\max}$ to $\sigma_{\min}$. We further reparametrise the latent prior to a unit Gaussian $\bm{z} \sim\mathcal{N}(0, I)$; the composite-latents construction $\bm{z}^* = \bm{Z}\phi_w(\bm{u})$ then applies directly with no additional transport map. The resulting generative map $g: \mathcal{Z}\to \mathcal{X}$ is deterministic and bijective on its image.

\paragraph{Oracle.}
We score generated structures with the TM-score~\citep{zhang2004tmscore} against the PDB 1CLL crystal structure of calmodulin~\citep{chattopadhyaya1992calmodulin}, computed on $\mathrm{C}_\alpha$ atoms using the \texttt{tmtools} package~\citep{zhang2004tmscore}. The score is normalised by the reference chain length ($n = 144$ for 1CLL). See Appendix~\ref{app:tm_score} for the TM-score definition.

\paragraph{Bayesian optimisation setup.}
At each Bayesian optimisation (BO) acquisition round, we fit a Gaussian process (GP) surrogate~\citep{williams2006gaussian} on the points acquired so far in $\mathcal{U}$. We use an RBF kernel, a constant mean function, and a single-task GP from BoTorch~\citep{balandat2020botorch}. Acquisitions are driven by Log Expected Improvement~\citep{ament2023unexpected}, sampled via BoTorch's Monte Carlo acquisition sampler. We project the $K$ seed latents into $\mathcal{U}$ via the surrogate chart $\phi_w$ and include them as additional training points alongside an initial set of $j$ uniformly drawn points in $\mathcal{U}$.

\paragraph{Visualisations.}
Figure~\ref{fig:boltz_lol_interp} shows a LOL interpolation between two seed structures, illustrating that the surrogate space supports smooth structural variation over \textsc{Boltz-2} generations. Figure~\ref{fig:boltz_2d_surrogate} shows a 2D surrogate space constructed from $K=3$ seed latents, with the GP posterior mean and the acquired points after two BO rounds.

 \begin{figure*}[h]
      \centering
      \includegraphics[width=0.75\textwidth]{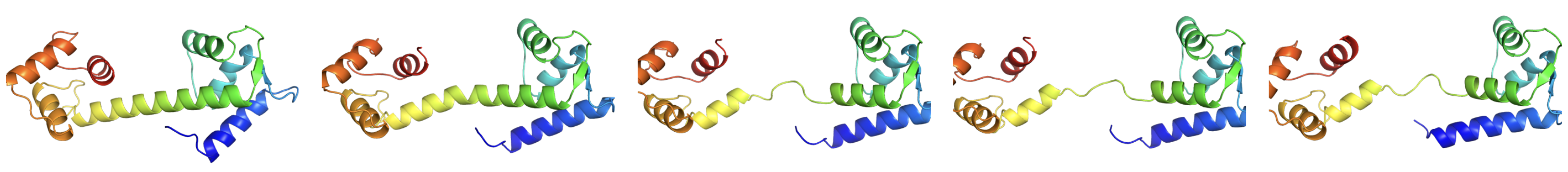}
      \caption{\textbf{LOL interpolation in a \textsc{Boltz-2} surrogate space.} Five intermediate structures along the LOL line between two calmodulin seeds (i.e.\ $K=2$ seed latents, surrogate space $\mathcal{U}^1$). Structures vary smoothly while remaining
  within the support of \textsc{Boltz-2}.}
      \label{fig:boltz_lol_interp}
  \end{figure*}

  \begin{figure*}[h]
      \centering
      \includegraphics[width=0.85\textwidth]{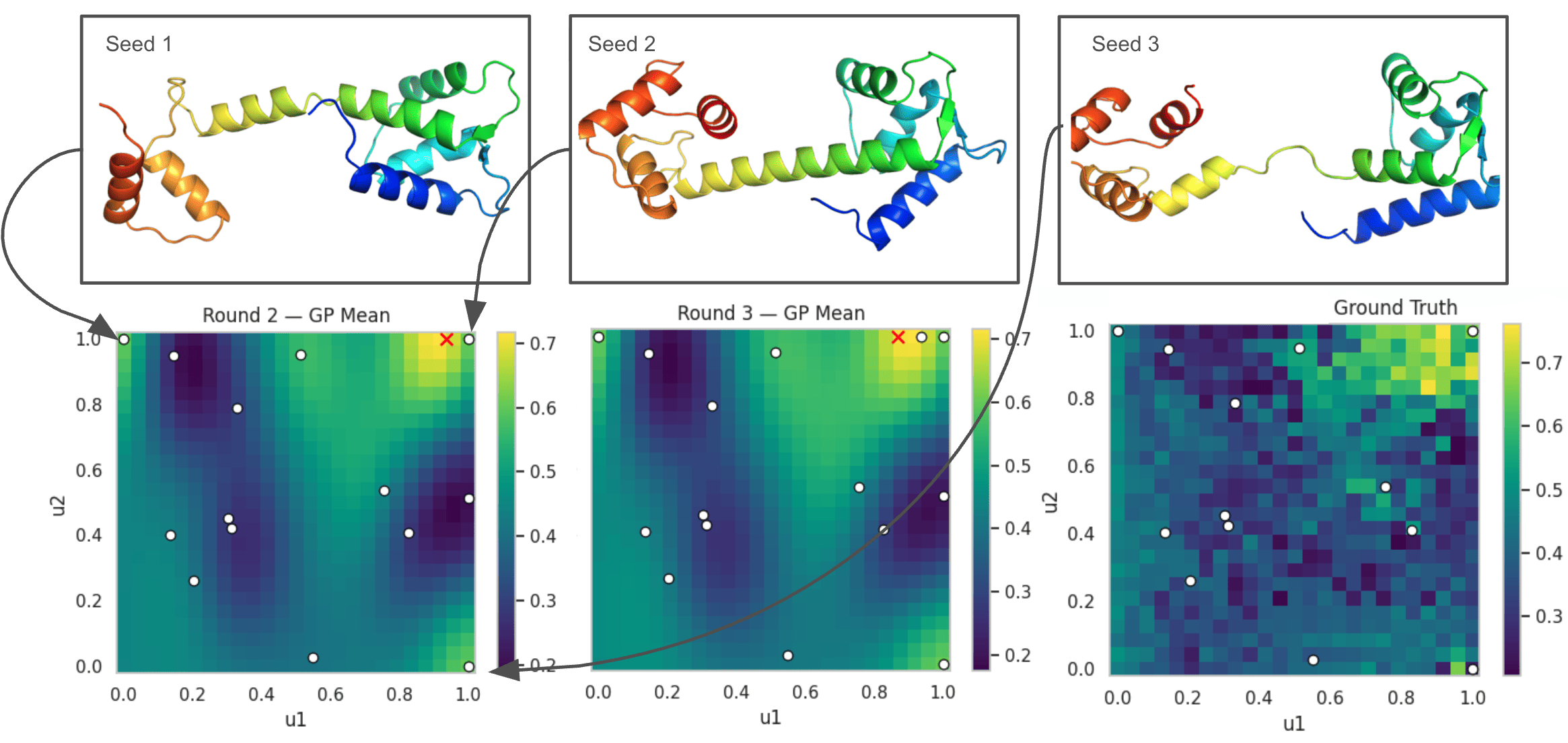}
      \caption{\textbf{Bayesian optimisation in a \textsc{Boltz-2} surrogate space defined by $K=3$ seed latents.} Three calmodulin seed structures (top), the resulting 2D surrogate space $\mathcal{U}^2$ (bottom), the GP training data (white dots) and predictive posterior mean
  over $\mathcal{U}^2$, the chosen acquisition point (red cross) for two BO rounds, and a $25{\times}25$ exhaustive ground-truth grid (bottom right).}
      \label{fig:boltz_2d_surrogate}
  \end{figure*}

\paragraph{Ablations.}
Below we report three ablations of the \methodname{} configuration on the 1CLL conformation-biasing task.
  
Figure~\ref{fig:boltz_ablate_K} sweeps the number of seed latents $K$ (which yields a surrogate space of dimension $K-1$) for each oracle-call budget $N$. The best-performing $K$ varies with the budget: at $N=20$ the strongest configuration uses $K=3$ seed latents, while at $N=500$ the strongest is $K=7$. Different oracle-call regimes favour different numbers of seed latents.

\begin{figure}[h]
  \centering
  \includegraphics[width=0.7\textwidth]{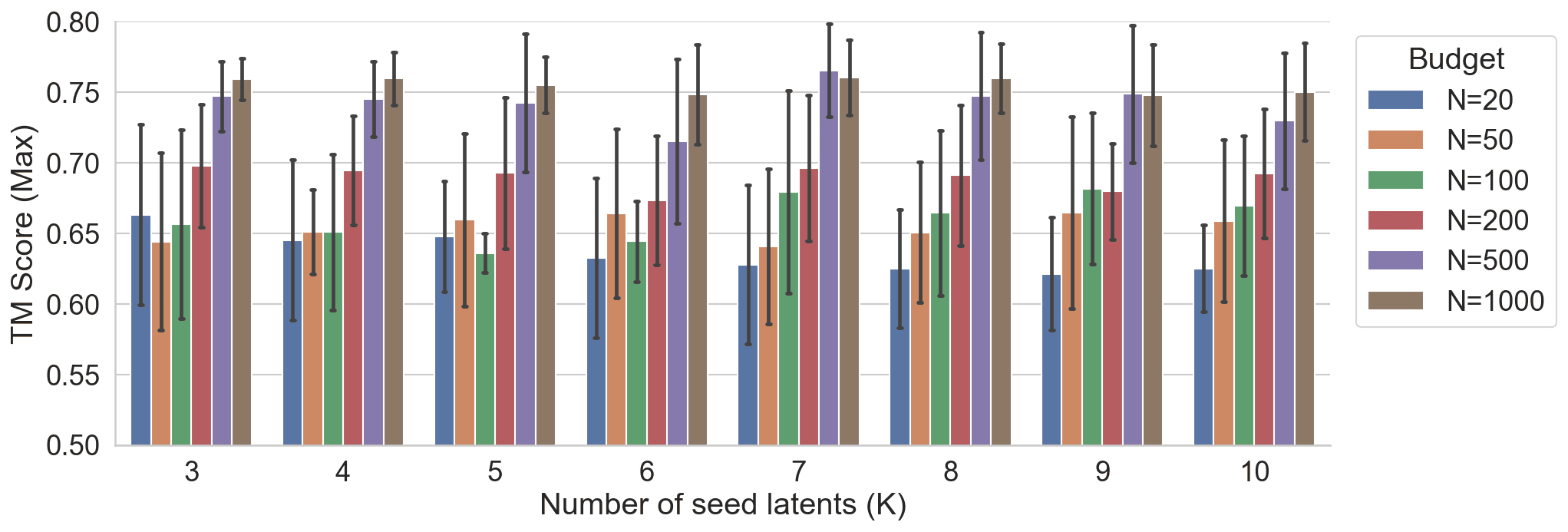}
  \caption{\textbf{Number of seed latents $K$ across budgets.} Max TM-score for \methodname{} with Bayesian optimisation, across number of seed latents $K$ for different budgets $N$, on 1CLL. The best-performing $K$ varies with $N$. Bars are means over 5 repetitions; error bars are $\pm 1$ std.}
  \label{fig:boltz_ablate_K}
\end{figure}

Figure~\ref{fig:boltz_ablate_bo} compares \methodname{} with Bayesian optimisation against \methodname{} with uniform random sampling in the same surrogate space at $N=100$. Random sampling alone fails to outperform Best-of-$N$, while BO does. The gain over Best-of-$N$ comes from the optimisation step in $\mathcal{U}$ rather than from the search space construction itself.

\begin{figure}[h]
  \centering
  \includegraphics[width=0.7\textwidth]{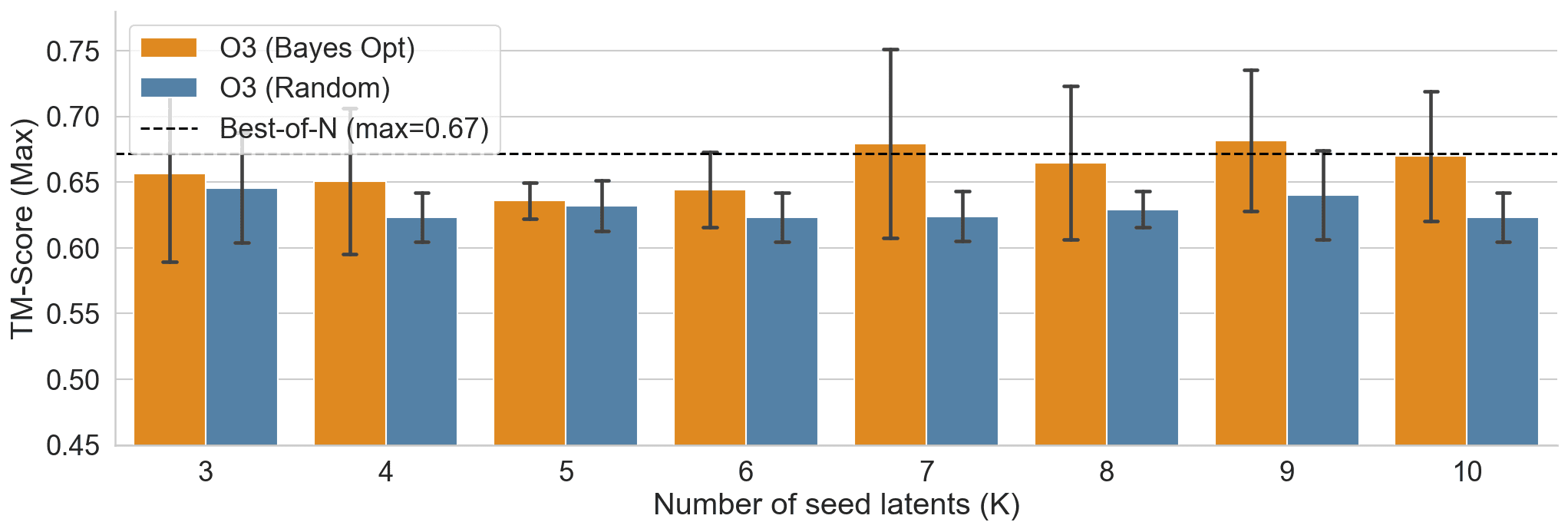}
  \caption{\textbf{Bayesian optimisation versus random sampling in the surrogate space.} Max TM-score at $N=100$ for \methodname{} with BO and \methodname{} with uniform random sampling in $\mathcal{U}$, across number of seed latents $K$. The Best-of-$N$ baseline is shown as a horizontal line. Bars are means over 5 repetitions; error bars are $\pm 1$ std.}
  \label{fig:boltz_ablate_bo}
\end{figure}

Figure~\ref{fig:boltz_ablate_seeds} compares two strategies for choosing the $K$ seed latents that define $\mathcal{U}$: \emph{filtered}, which samples $m$ initial structures, scores them, and keeps the top $K$; and \emph{random}, which draws $K$ latents directly from the prior. Filtered seed latents give a small but consistent gain at fixed $K$; random seed latents remain a useful default when the additional $m$ initial scorings are too expensive.

\begin{figure}[h]
  \centering
  \includegraphics[width=0.7\textwidth]{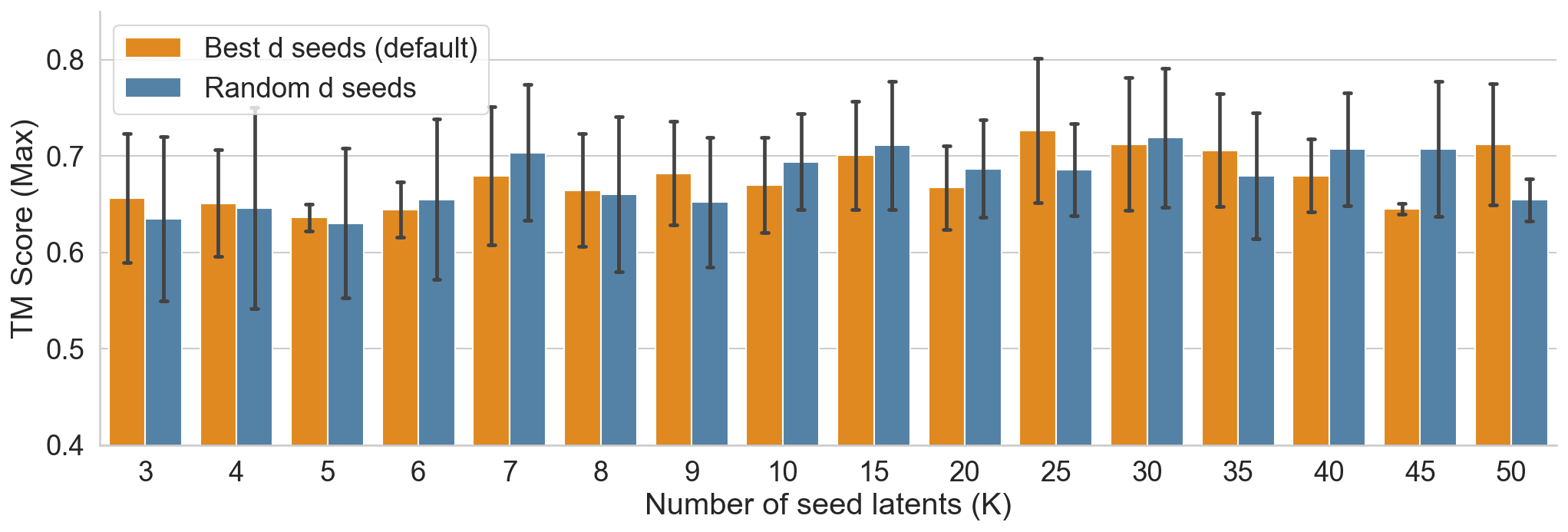}
  \caption{\textbf{Seed-selection strategy.} Max TM-score for two seed-selection strategies (filtered and random) across number of seed latents $K$ at $N=100$. Bars are means over 3 repetitions; error bars are $\pm 1$ std.}
  \label{fig:boltz_ablate_seeds}
\end{figure}

\paragraph{FK-steering configuration.}
We use $K$ particles and run Sequential Monte Carlo with reward-based weights computed at intermediate denoising steps where \textsc{Boltz-2} injects positive noise; the final, deterministic step is excluded since reward differences there carry no resampling signal. The reward-scale hyperparameter is $\lambda=50$. We set the proposal-kernel to the original \textsc{Boltz-2} transition kernel to leverage gradient-free nature of FK-steering, since most biological oracles are non-differentiable. The total oracle-call budget $N$ is distributed throughout denoising: one call at the start of denoising, one at the end, and $N{-}2$ calls distributed across resampling steps in the first two-thirds of the noise-injecting trajectory.

\paragraph{Online DPO configuration.}
At each of $E$ epochs we generate $N/E$ structures from the current model, score them with the oracle, form preference pairs by opposing structures with above-median scores against those with below-median scores, and update the model for one epoch on the resulting pairs. The reference model $p_{\mathrm{ref}}$ is reset to the current policy at the start of each epoch, which softens the KL constraint and lets the policy drift further from the pre-trained \textsc{Boltz-2} checkpoint. We use the Diffusion-DPO loss~\citep{wallace2023diffusion} with denoising-score-matching loss differences as a tractable surrogate for the log-likelihood ratio.


\section{\textsc{RFdiffusion}}
\label{appendix:rfdiff}

We adopt the pipeline of \citet{watson2023rfdiffusion}, consisting of:  
(1) backbone generation with \textsc{RFdiffusion},  
(2) sequence design with \textsc{ProteinMPNN}~\citep{dauparas2022proteinmpnn},  
(3) structure reconstruction with \textsc{AlphaFold2}~\citep{pak2023alphafold}, and  
(4) evaluation by C$_\alpha$–frame RMSE. Lower RMSE indicates closer agreement between the generated and reconstructed backbones. 

In step~1, candidate backbones are sampled from the original \textsc{RFdiffusion} model using DDIM. A backbone is defined as the set of C$_\alpha$ coordinates and residue-wise rotations, but does not include categorical amino acid identities.  
In step~2, each backbone is completed with $M=8$ amino acid sequences predicted by \textsc{ProteinMPNN}. This introduces the missing categorical information; however, the predictions are noisy, motivating multiple samples.  
In step~3, the sequences are passed to \textsc{AlphaFold2}, which reconstructs 3D structures from sequence alone, testing whether the backbone proposed by \textsc{RFdiffusion} is compatible with realistic sequences.  
In step~4, reconstructed proteins are aligned to the original backbones, and C$_\alpha$ RMSE is computed. For each backbone we report the best sequence (minimum RMSE over $M=8$), following the evaluation protocol of \citet{watson2023rfdiffusion}. 
For optimiser setups, see~\ref{appendix:optimiser_setups}

As in \citet{watson2023rfdiffusion} we adopt a threshold of $T=2.0$\,Å RMSE to define successful recovery, however we drop their secondary filtering metric of designs having PAE $<5.0$ to focus on proof of principle, although in future this could naturally be supported by considering multi-objective optimisation. For fairness, all baselines were recomputed under our evaluation. Each optimisation run used 200 function evaluations (50 CMA-ES steps of population 4), twice the 100 generations of the original paper. 

\textsc{RFdiffusion} parametrises a backbone of length $N$ by residue-wise frames $(\bm{x}_{pos}^{(t)}, \bm{x}_{rot}^{(t)}) \!\in\! \mathbb{R}^{3N} \times \mathrm{SO}(3)^N$, where $\bm{x}_{pos}^{(t)}$ are C$_\alpha$ coordinates and $\bm{x}_{rot}^{(t)}$ are orientations derived from N–C$_\alpha$–C triplets, measured from a reference frame. The forward diffusion process applies Gaussian noise to $\bm{x}_{pos}$ and Brownian motion on $\bm{x}_{rot}$; generation is by reverse integration of the probability–flow ODE. The resulting latent is
\[
\bm{z} = (\bm{z}_{pos}, \bm{z}_{rot}) = (\bm{x}_{pos}^{(T)}, \bm{x}_{rot}^{(T)}),
\quad \bm{z}_{pos}\!\sim\!\mathcal{N}(\mathbf{0},\mathbf{I^{3N}}), \;
\bm{z}_{rot}\!\sim\!\mathrm{Unif}(\mathrm{SO}(3)^N).
\]
Because $\bm{z}_{pos}$ follows a Gaussian distribution and we parametrise $\bm{z}_{rot}$ as quaternions which are uniformly distributed on $\bm{z}_{rot}\!\sim\!\mathrm{Unif}({\mathbb{S}^3}^{N})$, we can directly apply composite latents from Section~\ref{appendix:non_gaussian} to construct surrogate latent spaces $\mathcal{U}$.

\begin{figure*}[ht]
\centering




\vspace{0.2em}
\begin{minipage}[b]{0.245\textwidth}
  \centering
  {\small $\bm{x}_1$}\\[0.2em]
  \begin{tikzpicture}[remember picture]
    \node[inner sep=0] (seed1)
      {\includegraphics[width=0.9\textwidth]{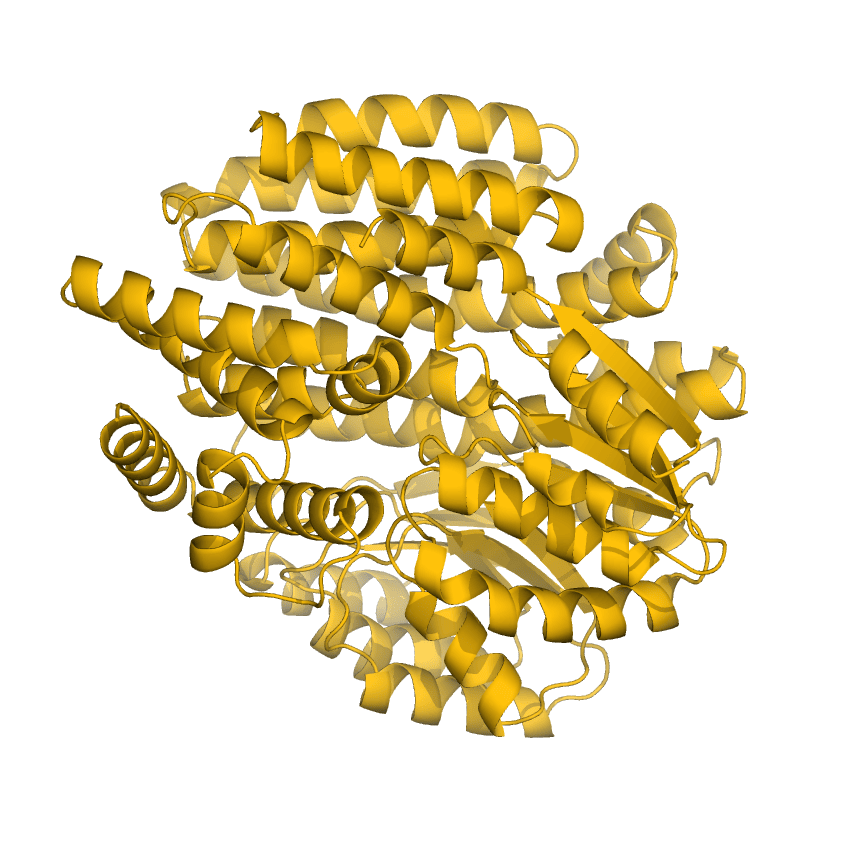}};
  \end{tikzpicture}\\[-0.2em]
\end{minipage}
\begin{minipage}[b]{0.245\textwidth}
  \centering
  {\small $\bm{x}_2$}\\[0.2em]
  \begin{tikzpicture}[remember picture]
    \node[inner sep=0] (seed2)
      {\includegraphics[width=0.9\textwidth]{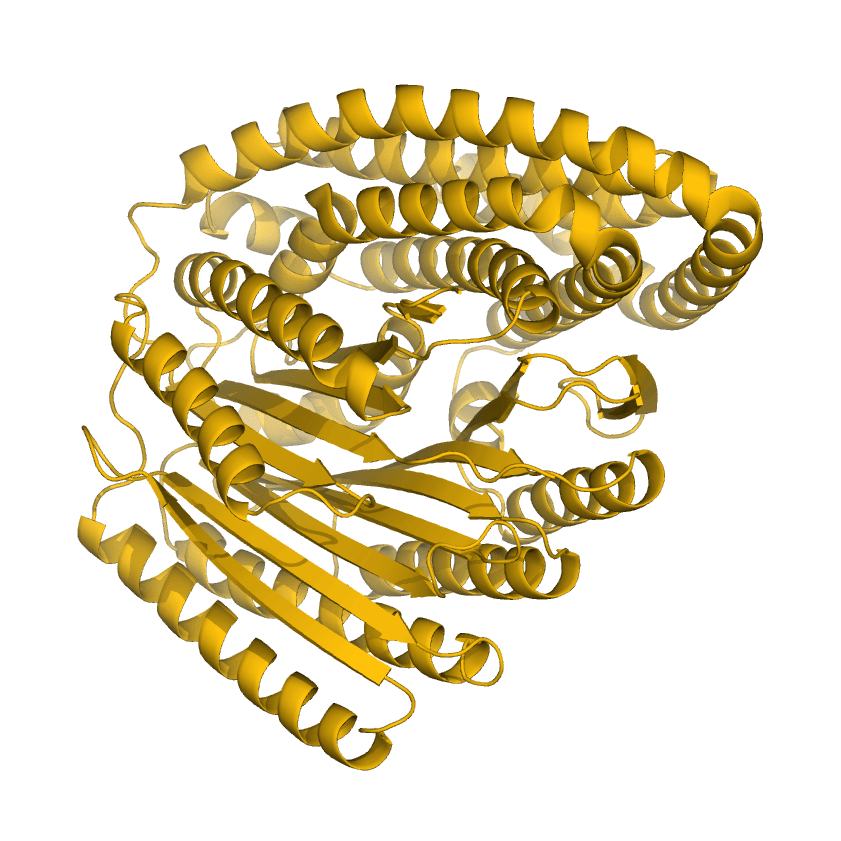}};
  \end{tikzpicture}\\[-0.2em]
\end{minipage}
\begin{minipage}[b]{0.245\textwidth}
  \centering
  {\small $\bm{x}_3$}\\[0.2em]
  \begin{tikzpicture}[remember picture]
    \node[inner sep=0] (seed3)
      {\includegraphics[width=0.9\textwidth]{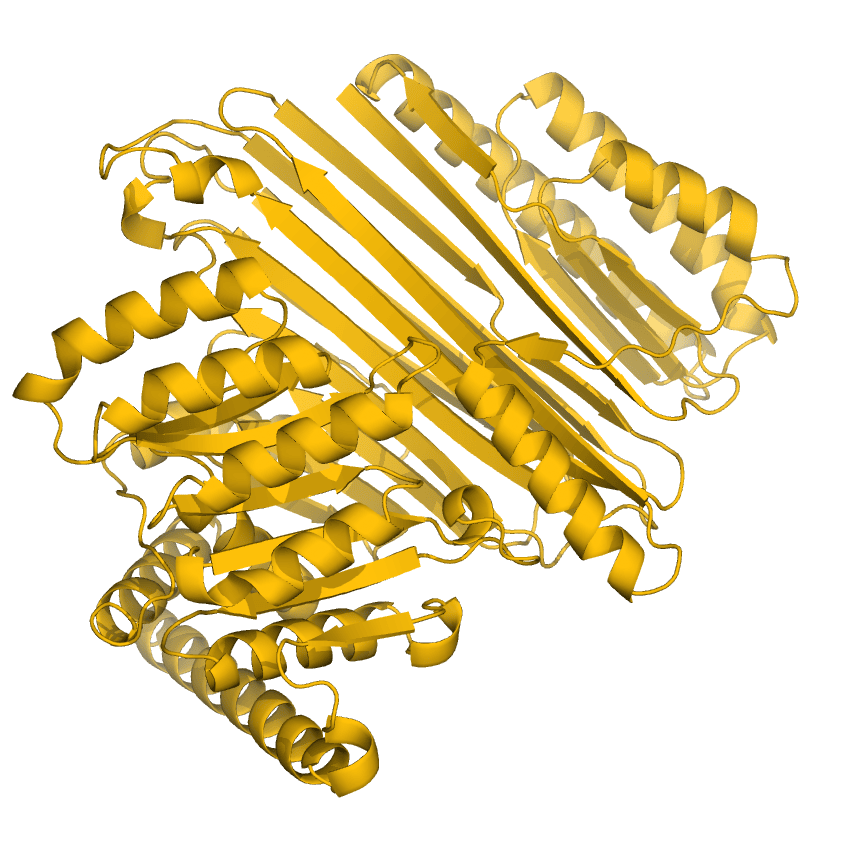}};
  \end{tikzpicture}\\[-0.2em]
\end{minipage}
\vskip\baselineskip

\def\axgap{4pt}        
\def\axlblgapx{2pt}    
\def\axlblgapy{6pt}    
\def\arrowpad{2pt}
\def\axlbl{\scriptsize}

\begin{minipage}[b]{0.395\textwidth}
  \hspace*{-2em}
  \centering
  \begin{tikzpicture}[baseline=(img.south)]
    \node[anchor=south west, inner sep=0, outer sep=0] (img) at (0,0)
      {\includegraphics[width=\textwidth]{compressed_figures/proteins/protein_grid.jpg}};
    \path let \p1=(img.south west), \p2=(img.north east) in
      coordinate (SW) at (\x1,\y1)
      coordinate (SE) at (\x2,\y1)
      coordinate (NW) at (\x1,\y2);
    \draw[-{Stealth}, line width=0.3pt]
      ($(SW)+(\arrowpad,-\axgap)$) -- ($(SE)+(-\arrowpad,-\axgap)$)
      node[midway, anchor=center, below=\axlblgapx] {\axlbl $u_1$};
    \draw[-{Stealth}, line width=0.3pt]
      ($(SW)+(-\axgap,\arrowpad)$) -- ($(NW)+(-\axgap,-\arrowpad)$)
      node[pos=0.55, anchor=center, left=\axlblgapy, rotate=90] {\axlbl $u_2$};
  \end{tikzpicture}
\end{minipage}
\hspace{1em}
\begin{minipage}[b]{0.46\textwidth}
  \centering
  \begin{tikzpicture}[baseline=(img.south), remember picture]
    \node[anchor=south west, inner sep=0, outer sep=0] (img) at (0,0)
      {\includegraphics[width=\textwidth]{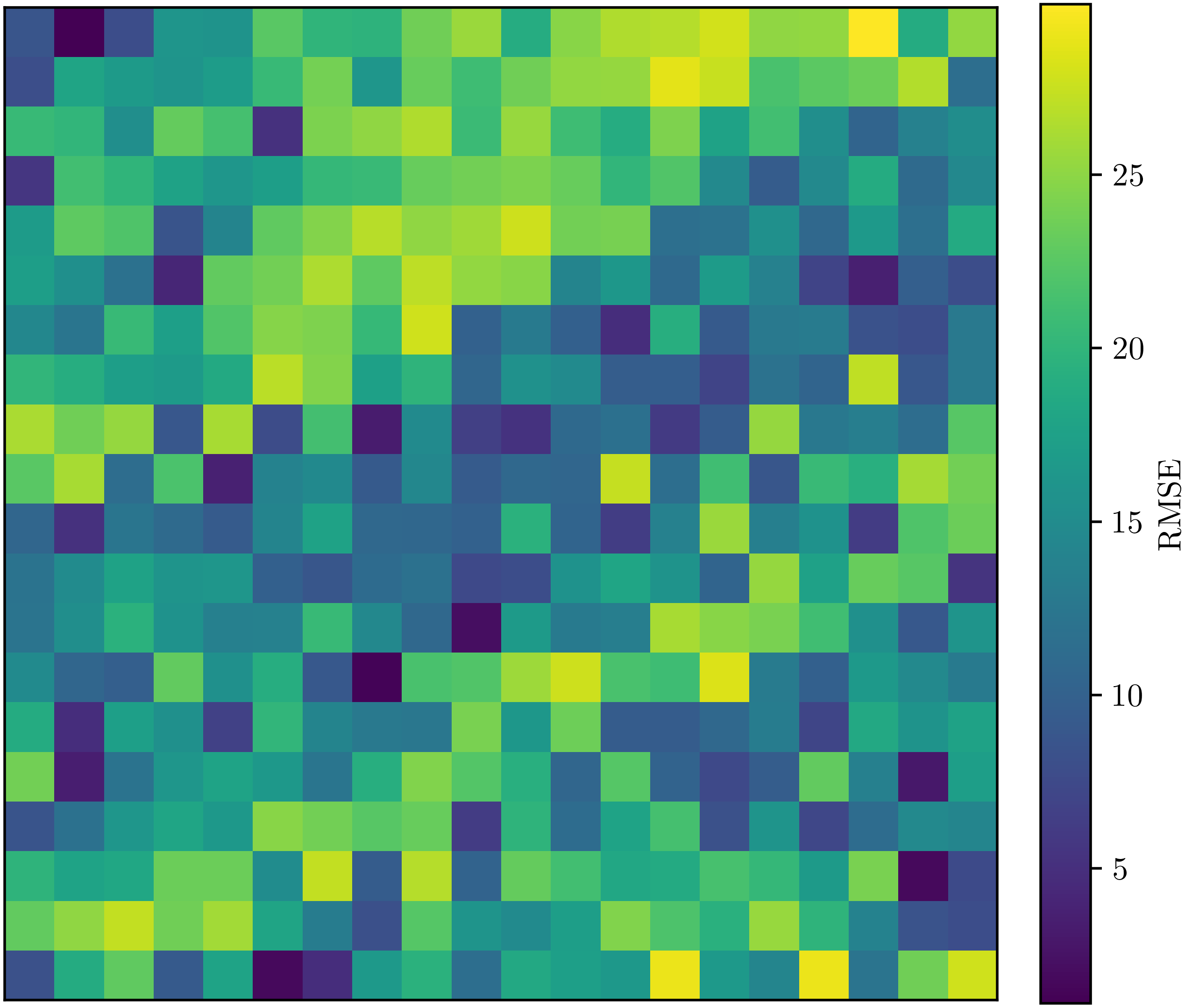}};
    \path let \p1=(img.south west), \p2=(img.north east) in
      coordinate (SW) at (\x1,\y1)
      coordinate (SE) at (\x2,\y1)
      coordinate (NW) at (\x1,\y2);
    \def\cbarfrac{0.17}
    \pgfmathsetmacro{\plotfrac}{1-\cbarfrac}
    \coordinate (SEsq) at ($(SW)!\plotfrac!(SE)$);
    \draw[-{Stealth}, line width=0.3pt]
      ($(SW)+(\arrowpad,-\axgap)$) -- ($(SEsq)+(-\arrowpad,-\axgap)$)
      node[midway, anchor=center, below=\axlblgapx] {\axlbl $u_1$};
    \draw[-{Stealth}, line width=0.3pt]
      ($(SW)+(-\axgap,\arrowpad)$) -- ($(NW)+(-\axgap,-\arrowpad)$)
      node[pos=0.55, anchor=center, left=\axlblgapy, rotate=90, xshift=0.6pt] {\axlbl $u_2$};
  \end{tikzpicture}
\end{minipage}

\caption{
\textbf{2D surrogate space for proteins.} A surrogate space $\mathcal{U}^2$ defined by $K=3$ seed latents (top) yields a structured objective landscape. \textit{Left:} grid of generated backbones across $\mathcal{U}^2$. \textit{Right:} corresponding evaluation scores (C$_\alpha$–frame RMSE).
}
\label{fig:protein_grid}
\end{figure*}

Figure~\ref{fig:protein_grid} shows a 2-dimensional latent space formed from K=3 seed latents, over which a grid of protein structures have been generated and evaluated according to the target objective. Clear structure in shown in the objective space which makes this objective amenable to optimisation. 

For the protein optimisation experiments we set the number of seed latents to $K=24$. Optimisation is performed in $\mathcal{U}$ via CMA–ES, with candidates mapped back into $\mathcal{Z}$ for decoding and evaluation. Two seed selection strategies were used. 
\emph{Random seed latents:} sampled directly from the latent distributions, incurring no additional cost. 
\emph{Filtered seed latents:} obtained by first generating 100 backbones from the base model, ranking them by RMSE, and selecting the top $K=24$ latents as seeds. None of these passed the $T=2.0$ threshold, but they provided a stronger starting point than random seed latents. The extra cost relative to random seed latents is generating and evaluating the pipeline 100 times.

For reference, the top row of Figure~\ref{fig:protein_optimisation} provides qualitative context for the C$_\alpha$–frame RMSE metric. Errors around 5\,\AA\ reflect substantial global structural discrepancies, while RMSE values below 2\,\AA\ correspond to largely overlapping backbone geometries, justifying the recovery threshold adopted by \citet{watson2023rfdiffusion} and used throughout our experiments.

\paragraph{PAE results.}
While we optimise and filter on RMSE only, we report the predicted aligned error (PAE) of the best designs across all conditions, allowing comparison against the full recoverability criterion of \citet{watson2023rfdiffusion} (which also requires PAE $<5$). Table~\ref{tab:rfdiff_pae} reports best-per-run RMSE and PAE, aggregated across $10$ runs of $200$ function evaluations.

\methodname{} substantially improves both metrics over random sampling in $\mathcal{Z}$. Median best-per-run PAE drops from $9.07$ for random sampling to $6.34$ with random seed latents and $6.84$ with filtered seed latents. RMSE drops from $2.33$ to $1.19$ and $1.08$ respectively. Filtered seed latents tighten the run-to-run variance considerably: across $10$ runs, the 5th--95th percentile range of best RMSE is $[0.89, 2.82]$ with random seed latents versus $[0.90, 1.81]$ with filtered seed latents, indicating that informative initialisation produces consistently good designs rather than merely a better average.

However, no run achieves the $\text{PAE} < 5$ threshold across any condition, so none of the designs satisfy the full recoverability criterion. Notably, filtered seed latents achieve worse median PAE than random seed latents despite better median RMSE; this is consistent with the single-objective formulation pushing harder on RMSE at slight cost to PAE when given a stronger starting point. Multi-objective extensions to \methodname{}, which our framework readily supports, are a natural direction for addressing both criteria jointly.

\begin{table}[h]
\caption{Best per-run RMSE and PAE for each method on the $600$-residue \textsc{RFdiffusion} task, reported as median with 5th--95th percentile range across $10$ runs of $200$ evaluations. Lower is better for both metrics.}
\centering
\small
\begin{tabular}{lcc}
\toprule
Method & Best RMSE (\,\AA) & Best PAE \\
\midrule
Random in $\mathcal{Z}$ & $2.33$ {\footnotesize $[1.44, 3.82]$} & $9.07$ {\footnotesize $[6.92, 11.87]$} \\
CMA-ES, random seed latents & $1.19$ {\footnotesize $[0.89, 2.82]$} & $6.34$ {\footnotesize $[5.38, 10.83]$} \\
CMA-ES, filtered seed latents & $1.08$ {\footnotesize $[0.90, 1.81]$} & $6.84$ {\footnotesize $[5.43, 8.20]$} \\
\bottomrule
\end{tabular}
\label{tab:rfdiff_pae}
\end{table}

\section{Template Modelling Score (TM-score)}
\label{app:tm_score}

The Template Modelling score (TM-score) is a widely used measure of structural similarity between two protein backbones. Unlike RMSE, which is sensitive to local deviations and scales poorly with chain length, the TM-score is normalised to the length of the target protein and therefore more suitable for comparing proteins of different sizes \citep{zhang2004tmscore}.

Given a target structure of length $L$ and a comparison structure, the TM-score is defined as
\begin{equation}
    \mathrm{TM\!-\!score}
    \;=\;
    \max_{\text{alignments}}
    \frac{1}{L} \sum_{i=1}^{L}
    \frac{1}{1 + \bigl(\tfrac{d_i}{d_0(L)}\bigr)^2},
    \label{eq:tm_score}
\end{equation}
where $d_i$ is the distance between the $i$th pair of aligned C$_\alpha$ atoms under a given alignment, and $d_0(L) = 1.24\sqrt[3]{L-15} - 1.8$ is a
normalisation factor that accounts for protein length. The score lies in $[0,1]$, with higher values indicating greater structural similarity.

As a rule of thumb, $\text{TM-score} > 0.5$ indicates that two structures share the same fold, while $\text{TM-score} < 0.17$ corresponds to similarity expected by chance. 

In our case, all generations are of equal length, so RMSE remains valid; however, using TM-score not only allows us to apply established interpretative thresholds, but also lets us follow \citet{watson2023rfdiffusion} in treating two designs as \emph{non-diverse} if their pairwise TM-score exceeds $0.6$.

\paragraph{Diversity counting.}
To compute the number of diverse generations reported in Section~\ref{sec:protein_opt}, we apply the following greedy procedure:  
1. Sort generated proteins by reconstruction accuracy (lowest RMSE first).  
2. Initialise the diverse set with the best structure.  
3. For each subsequent protein, compute its TM-score against all members of the current diverse set.  
4. Add it to the diverse set if its TM-score is $\leq 0.6$ with respect to all previously accepted members; otherwise, discard it.  

This ensures that each counted generation is both accurate (passes the RMSE threshold) and structurally distinct under TM-score. \emph{Note:} because a newly generated protein may achieve lower RMSE than existing members of the diverse set while simultaneously being non-diverse with respect to several of them, the overall count of diverse structures can decrease across iterations.

\section{Audio/Video}
\label{appendix:modalities}

Figure~\ref{fig:audio/video} illustrates that the surrogate latent space construction applies beyond images, and can be used with generative models operating on high-dimensional continuous representations across diverse modalities. In each case, we form a low-dimensional surrogate space using a small set of seed latents corresponding to a deterministic inversion of targets from the data distribution, and evaluate a dense 2D grid within this space to visualise the resulting generations. For all examples we observe smooth variation of the output across the surrogate coordinates, indicating that the surrogate chart preserves locality even in extremely high-dimensional latent spaces. These qualitative results support the claim that surrogate latent spaces provide well-behaved, low-dimensional manifolds suitable for optimisation across a wide range of generative models and data modalities.

\newpage
\section{Image Feature Composition Benchmark grammar}
\label{appendix:image_comp_benchmark}

\begin{table}[h!]
\centering
\caption{
\textbf{Image Feature Composition Benchmark} vehicle grammar. 
Each combination of attributes, type, and environment strings are uniformly and independently sampled to form target prompts of the form: `A <attributes> <type> <environment>' for the generation prompt `A vehicle'. 
}
\scalebox{0.46}{
\begin{tabular}{p{0.28\textwidth} p{0.34\textwidth} p{0.32\textwidth}}
\toprule
\textbf{Attributes} & \textbf{Types} & \textbf{Environments} \\
\midrule
"red, shiny" & "sedan car" & "on a mountain road" \\
"blue, glossy" & "hatchback car" & "by the ocean beach" \\
"green, matte" & "coupe car" & "in a desert with sand dunes" \\
"black, reflective" & "convertible car" & "through a forest trail" \\
"white, clean" & "station wagon car" & "on a snowy mountain peak" \\
"silver, metallic" & "SUV" & "beside a flowing river" \\
"gold, polished" & "pickup truck" & "in a dense jungle" \\
"yellow, bright" & "minivan" & "on a frozen lake" \\
"orange, vibrant" & "cargo van" & "next to a waterfall" \\
"purple, glossy" & "limousine" & "in a grassy meadow" \\
"pink, pastel" & "sports car" & "through a rocky canyon" \\
"brown, rustic" & "microcar" & "in heavy rainstorm" \\
"gray, matte" & "standard motorcycle" & "on a wide highway" \\
"beige, plain" & "motor scooter" & "near an active volcano" \\
"teal, glossy" & "moped" & "under the northern lights" \\
"navy blue, shiny" & "dirt bike motorcycle" & "in a futuristic city" \\
"maroon, matte" & "touring motorcycle" & "inside a highway tunnel" \\
"ivory, smooth" & "cruiser motorcycle" & "on a suspension bridge" \\
"bronze, metallic" & "all-terrain vehicle (ATV)" & "beside a tall lighthouse" \\
"copper, shiny" & "utility task vehicle (UTV)" & "on a sandy dune" \\
"chrome, reflective" & "monster truck" & "in a busy marketplace" \\
"pearl white, shimmering" & "golf cart" & "under cherry blossom trees" \\
"matte black, dull" & "go-kart" & "in front of a medieval castle" \\
"glossy white, polished" & "city bus" & "at an airport runway" \\
"emerald green, shiny" & "double-decker bus" & "on a racetrack" \\
"ruby red, glossy" & "school bus" & "inside a scrapyard" \\
"sapphire blue, shiny" & "electric trolleybus" & "on a battlefield" \\
"amber yellow, glowing" & "street tram" & "in an abandoned ghost town" \\
"charcoal gray, matte" & "light rail train" & "beside a farm barn" \\
"steel silver, brushed" & "monorail train" & "through vineyards" \\
"deep purple, glossy" & "subway train" & "on cobblestone streets" \\
"forest green, matte" & "passenger train" & "in a suburban neighborhood street" \\
"sky blue, bright" & "freight train" & "beside a skyscraper" \\
"sunset orange, glowing" & "high-speed train" & "inside a factory yard" \\
"lemon yellow, bright" & "armored personnel carrier (APC)" & "on a cliffside road" \\
"rose pink, soft" & "military tank" & "through misty hills" \\
"sand beige, dusty" & "bulldozer" & "in a crater" \\
"stone gray, rough" & "excavator" & "inside a dark cave" \\
"lava red, fiery" & "forklift truck" & "in an abandoned warehouse" \\
"ice blue, frosty" & "cement mixer truck" & "under a starry night sky" \\
"neon green, glowing" & "fire engine truck" & "beside a spaceport" \\
"neon pink, glowing" & "ambulance vehicle" & "at sunset on the horizon" \\
"pastel blue, soft" & "police patrol car" & "on a frozen tundra" \\
"pastel yellow, soft" & "tow truck" & "in thick fog" \\
"midnight black, glossy" & "garbage truck" & "through rice fields" \\
"frost white, icy" & "snowplow truck" & "beside a wind farm" \\
"mirror chrome, shiny" & "logging truck" & "under a rainbow" \\
"brushed aluminum, dull" & "farm tractor" & "near a medieval stone gate" \\
"glossy teal, shiny" & "combine harvester" & "on an icy highway" \\
"metallic purple, shiny" & "horse-drawn carriage" & "in a neon-lit street" \\
"bronze, weathered" & "canoe boat" & "beside a carnival fairground" \\
"flat black, matte" & "kayak boat" & "in a junkyard" \\
"desert tan, dusty" & "rowboat" & "through a wheat field" \\
"jungle green, camo" & "pedal boat" & "in a tropical rainforest" \\
"navy gray, military" & "sailboat" & "on a wooden boardwalk" \\
"rust red, corroded" & "luxury yacht" & "at a construction site" \\
"storm gray, rough" & "catamaran boat" & "on a winding mountain pass" \\
"bright yellow, shiny" & "inflatable dinghy" & "beside a glacier" \\
"glossy red, polished" & "fishing boat" & "on a cratered moon surface" \\
"flat white, plain" & "harbor tugboat" & "inside a space station" \\
"sparkling silver, glittery" & "passenger ferry" & "through an asteroid field" \\
"dull gray, industrial" & "speedboat" & "on the surface of Mars" \\
"deep green, glossy" & "jet ski watercraft" & "inside a lunar base" \\
"ocean blue, wavy" & "hovercraft vehicle" & "inside an aircraft hangar" \\
"fire orange, glowing" & "houseboat" & "on a rocket launch pad" \\
"sun gold, shiny" & "pontoon boat" & "at a desert oasis" \\
"candy apple red, glossy" & "container cargo ship" & "on a tropical island beach" \\
"storm gray, matte" & "general cargo ship" & "in a canyon riverbed" \\
"ice silver, frosty" & "oil tanker ship" & "on an offshore oil rig" \\
"jet black, shiny" & "cruise ship" & "on a dry salt flat" \\
"steel blue, metallic" & "battleship" & "inside a military base" \\
"military green, matte" & "aircraft carrier ship" & "in an amusement park" \\
"glossy maroon, shiny" & "military submarine" & "on a snowy city street" \\
"matte navy blue" & "destroyer warship" & "beside a frozen waterfall" \\
"carbon fiber pattern" & "frigate warship" & "at a cultural festival plaza" \\
"transparent, glassy" & "hot air balloon" & "inside an industrial plant" \\
"camouflage green, patterned" & "sailplane glider" & "on a dirt trail" \\
"chrome gold, shiny" & "hang glider" & "in a foggy swamp" \\
"metallic blue, glossy" & "paraglider" & "beside a mountain lake" \\
"dark gray, dull" & "airship blimp" & "on a coastal cliff road" \\
"vibrant purple, glowing" & "helicopter" & "in front of ancient ruins" \\
"fluorescent yellow, glowing" & "gyrocopter" & "beside a pyramid" \\
"sparkling white, glittery" & "small propeller aircraft" & "inside an old temple" \\
"pearl blue, shimmering" & "seaplane" & "through rolling hills" \\
"shiny copper, metallic" & "amphibious aircraft" & "in a field of flowers" \\
"bronze, antique" & "commercial airliner jet" & "beside a farmstead" \\
"olive green, matte" & "private jet plane" & "at a roadside gas station" \\
"bright teal, glowing" & "supersonic passenger jet" & "inside a futuristic arena" \\
"plain beige, flat" & "fighter jet aircraft" & "inside a spaceship hangar" \\
"polished black, shiny" & "bomber aircraft" & "on a collapsing bridge" \\
"bright gold, reflective" & "stealth aircraft" & "on a volcanic lava plain" \\
"storm blue, dark" & "quadcopter drone" & "beside a crystal cave" \\
"camo brown, patterned" & "cargo plane" & "on a wooden pier" \\
"stealth gray, matte" & "snowmobile vehicle" & "inside a mining colony" \\
"metallic orange, glossy" & "mountain cable car" & "on a tall city rooftop" \\
"diamond white, shiny" & "space shuttle orbiter" & "through a canyon pass" \\
"rust brown, corroded" & "spaceplane vehicle" & "inside a virtual reality world" \\
"emerald green, glossy" & "rocketship" & "on an alien desert planet" \\
"jet silver, reflective" & "lunar exploration rover" & "inside a submarine base" \\
"space black, glossy" & "mars exploration rover" & "inside an underground bunker" \\
\bottomrule
\end{tabular}
}
\label{table:ifcb_grammar}
\end{table}

Table~\ref{table:ifcb_grammar} lists the possible attributes, vehicle types, and environment strings used for the prompt grammar used in the experiments reported in Section~\ref{sec:experiments} (with details in Section~\ref{appendix:image_opt_details}) and Section~\ref{appendix:weight_chart_comparison}. 

\section{Experiments compute resources}
\label{appendix:compute}

This appendix summarises the compute used by each experiment in the paper. The dominant cost in every experiment is generation and evaluation of the underlying generative model; \methodname{} itself adds negligible overhead, since the surrogate chart $\phi$ is evaluated in closed form and the optimisers (BO, CMA-ES, random search) operate on $\mathcal U \subseteq \mathbb R^{K-1}$ at low dimensionality. We report the type of GPU used, the wall-clock time of a single optimisation run, the number of function evaluations per run, the number of runs, and the resulting total wall-clock. The reported figures do not include preliminary or failed experiments; an estimate of that overhead is given at the end of this appendix.

\begin{table*}[h!]
\centering
\caption{
\textbf{Compute resources used by the experiments reported in this paper.} `Evals/run' counts oracle (objective) evaluations, each of which corresponds to one full generation through the relevant model pipeline.
}
\resizebox{\textwidth}{!}{%
\begin{tabular}{@{}lllrrrl@{}}
\toprule
Experiment & Model / pipeline & Hardware & Evals/run & \# runs & Per-run wall-clock & Total wall-clock \\
\midrule
\multicolumn{7}{l}{\textit{Figure~\ref{fig:cosim_vs_objectives} -- Latent cosine similarity vs.\ objective similarity (motivating analysis)}} \\
Pairs analysis, images & Flux~\citep{flux2024} + ImageReward + PickScore   & RTX~5090      & 1000 generations per (seed, $K$); 5 seeds $\times$ 4 values of $K$ $\in$ \{3, 10, 30, 90\} & 1 & $\sim 75$ h & $\sim 75$ h \\
Pairs analysis, proteins & \textsc{RFdiffusion} + RMSE + TM-score          & RTX~4090      & 450 & 1 & $\sim 40$ h & $\sim 40$ h \\
\midrule
\multicolumn{7}{l}{\textit{Section~\ref{sec:exp_dense_with_solutions} / Appendix~\ref{appendix:good_seeds_better_solutions_details} -- ImageReward benchmark on SD~1.5 (30 reps per prompt $\times$ 3 prompts $=$ 90 total runs per row)}} \\
Standard model sampling          & SD~1.5                & RTX~5090 & 100  & 90 & $\sim 17$ m & $\sim 25$ h \\
Best-1-of-2, 100 times (1/2 eff.)& SD~1.5                & RTX~5090 & 200  & 90 & $\sim 33$ m & $\sim 50$ h \\
Best-100-of-200 (1/2 eff.)       & SD~1.5                & RTX~5090 & 200  & 90 & $\sim 33$ m & $\sim 50$ h \\
Grid in $\mathcal U^{1,3,5}$ (1/2 eff.) & SD~1.5         & RTX~5090 & 200  & 90 & $\sim 33$ m & $\sim 50$ h \\
Best-1-of-6, 100 times (1/6 eff.)& SD~1.5                & RTX~5090 & 600  & 90 & $\sim 1.7$ h & $\sim 150$ h \\
Best-100-of-600 (1/6 eff.)       & SD~1.5                & RTX~5090 & 600  & 90 & $\sim 1.7$ h & $\sim 150$ h \\
Grid in $\mathcal U^{1,3,5}$ (1/6 eff.) & SD~1.5         & RTX~5090 & 600  & 90 & $\sim 1.7$ h & $\sim 150$ h \\
\multicolumn{7}{l}{\hfill \textit{Aggregate wall-clock for the SD~1.5 image experiments above: $\sim 625$ RTX~5090~h.}} \\
\midrule
\multicolumn{7}{l}{\textit{Section~\ref{sec:exp_optimisation} / Appendix~\ref{appendix:image_opt_details} -- Image Feature Composition Benchmark on SD~2.1}} \\
BO / CMA-ES / RS in $\mathcal U^{2}$  ($K{=}3$ seeds) & SD~2.1 & GH200 & 100  & 30 & $\sim 0.1$ h & $\sim 3$ h \\
BO / CMA-ES / RS in $\mathcal U^{8}$  ($K{=}9$ seeds) & SD~2.1 & GH200 & 100  & 30 & $\sim 0.1$ h & $\sim 3$ h \\
TuRBO / CMA-ES / RS in $\mathcal U^{89}$ ($K{=}90$ seeds) & SD~2.1 & GH200 & 1000 & 30 & $\sim 1$ h & $\sim 30$ h \\
Random / CMA-ES in $\mathcal Z$ baselines & SD~2.1   & GH200 & 100--1000 & 20 & $\sim 1.1$ h & $\sim 22$ h \\
\multicolumn{7}{l}{\hfill \textit{Aggregate wall-clock for the SD 2.1 image experiments above: $\sim 60$ GH200~h.}} \\
\midrule
\multicolumn{7}{l}{\textit{Section~\ref{sec:experiments} -- \textsc{Boltz-2} guidance under matched oracle-call budgets}} \\
\methodname{} + BO          & \textsc{Boltz-2} (PF-ODE) & H100 & 6 budgets up to $N{=}1000$ & 5 per budget & $\sim 0.1$ h & $\sim 3$ h \\
      &     &    & $N{=}20$ & 5 & $2.7$ m & $13$ m \\
      &     &    & $N{=}50$ & 5 & $2.8$ m & $14$ m \\
      &     &    & $N{=}100$ & 5 & $3.1$ m & $16$ m \\
      &     &    & $N{=}200$ & 5 & $2.9$ m & $15$ m \\
      &     &    & $N{=}500$ & 5 & $3.4$ m & $17$ m \\
      &     &    & $N{=}1000$ & 5 & $18$ m & $1$ h $31$ m \\
Best-of-$N$                  & \textsc{Boltz-2}          & H100 & 6 budgets up to $N{=}1000$ & 5 per budget & $\sim 0.1$ h & $\sim 3$ h \\
      &     &    & $N{=}20$ & 5 & $2.7$ m & $13$ m \\
      &     &    & $N{=}50$ & 5 & $2.8$ m & $14$ m \\
      &     &    & $N{=}100$ & 5 & $3.1$ m & $16$ m \\
      &     &    & $N{=}200$ & 5 & $2.9$ m & $15$ m \\
      &     &    & $N{=}500$ & 5 & $3.4$ m & $17$ m \\
      &     &    & $N{=}1000$ & 5 & $18$ m & $1$ h $31$ m \\
FK-steering~\citep{singhal2025general} & \textsc{Boltz-2} & H100 & 6 budgets up to $N{=}1000$ & 10 per budget & $\sim 0.1$ h & $\sim 6$ h \\
      &     &    & $N{=}20$ & 10 & $3.3$ m & $33$ m \\
      &     &    & $N{=}50$ & 10 & $3.8$ m & $38$ m \\
      &     &    & $N{=}100$ & 10 & $4.1$ m & $41$ m \\
      &     &    & $N{=}200$ & 10 & $5$ m & $50$ m \\
      &     &    & $N{=}500$ & 10 & $6.9$ m & $1$ h $09$ m \\
      &     &    & $N{=}1000$ & 10 & $11$ m & $1$ h $50$ m \\

DPO~\citep{rafailov2023direct,wallace2023diffusion} online & \textsc{Boltz-2} (fine-tuned) & \texttt{H100} & 6 budgets up to $N{=}1000$ &  &  &  \\
      &     &    & $N{=}20$ & 4 & $4$ m & $16$ m \\
      &     &    & $N{=}50$ & 4 & $6$ m & $24$ m \\
      &     &    & $N{=}100$ & 5 & $11$ m & $55$ m \\
      &     &    & $N{=}200$ & 3 & $17$ m & $51$ m \\
      &     &    & $N{=}500$ & 5 & $44$ m & $3$ h $40$ m \\
      &     &    & $N{=}1000$ & 4 & $1$ h $31$ m & $6$ h $04$ m \\
DPO~\citep{rafailov2023direct,wallace2023diffusion} offline& \textsc{Boltz-2} (fine-tuned) & \texttt{H100} & 6 budgets up to $N{=}1000$ &  &  &  \\
      &     &    & $N{=}20$ & 4 & $4$ m & $16$ m \\
      &     &    & $N{=}50$ & 5 & $7$ m & $35$ m \\
      &     &    & $N{=}100$ & 4 & $18$ m & $1$ h $12$ m \\
      &     &    & $N{=}200$ & 5 & $56$ m & $4$ h $40$ m \\
      &     &    & $N{=}500$ & 4 & $5$ h $40$ m & $22$ h $40$ m \\
      &     &    & $N{=}1000$ & 3 & $21$ h $57$ m & $65$ h $51$ m \\
\midrule
\multicolumn{7}{l}{\textit{Section~\ref{sec:protein_opt} / Appendix~\ref{appendix:rfdiff} -- \textsc{RFdiffusion} pipeline at $N{=}600$ residues}} \\
Random sampling in $\mathcal Z$ (baseline) & \textsc{RFdiffusion} + \textsc{ProteinMPNN} + \textsc{AlphaFold2} & RTX~4090 & 200       & 10 & $\sim 33$~h & $\sim 330$~h \\
CMA-ES in $\mathcal U^{K-1}$, random seed latents ($K{=}24$) & \textsc{RFdiffusion} + \textsc{ProteinMPNN} + \textsc{AlphaFold2} & RTX~4090 & 200       & 10 & $\sim 33$~h & $\sim 330$~h \\
CMA-ES in $\mathcal U^{K-1}$, filtered seed latents ($K{=}24$, +100 init.) & \textsc{RFdiffusion} + \textsc{ProteinMPNN} + \textsc{AlphaFold2} & RTX~4090 & 100 + 200 & 10 & $\sim 50$~h & $\sim 500$~h \\
\multicolumn{7}{l}{\hfill \textit{Aggregate wall-clock for the \textsc{RFdiffusion} experiments above: $\sim 1000$ RTX~4090~h.}} \\
\midrule
\multicolumn{7}{l}{\textit{Appendix~\ref{appendix:modalities} -- Cross-modality grids (qualitative)}} \\
$20\times20$ slice of 7D $\mathcal U$ grid      & StableAudio2.0~\citep{evans2025stable}      & V100 & 400           & 1 & $\sim 1.1$ h & $\sim 1.1$ h \\
$7\times7$ video grid in $\mathcal U^{2}$       & HunyuanVideo~\citep{kong2024hunyuanvideo}   & V100 & 49            & 1 & $\sim 10$ h & $\sim 10$ h \\
2D protein grid in $\mathcal U^{2}$ ($K{=}3$)   & \textsc{RFdiffusion} pipeline               & RTX 4090 & 400 & 1 & $\sim 66$ h & $\sim 132$ h \\
\midrule
\multicolumn{7}{l}{\textit{Appendix~\ref{appendix:empirical_stationarity} -- Stationarity assessment of $\phi_w$}} \\
Chart-similarity sampling & \textit{(no generative model; closed-form $\phi_w$)} & CPU & $10^{6}$ pairs & 12 (2 charts $\times$ 3 dims $\times$ 2 schemes) & $\sim 20$ s & $\sim 4$ m \\
\midrule
\multicolumn{7}{l}{\textit{Reference baselines from~\citet{denker2025iterative} (reproduced from their paper)}} \\
DPOK~\citep{fan2023reinforcement}                     & SD~1.5 (fine-tuned)        & A100      & --- & --- & 28~h  & 28~A100~h \\
Adjoint Matching~\citep{domingo2024adjoint}           & SD~1.5 (fine-tuned)        & A100      & --- & --- & 4~h   & 4~A100~h \\
Importance Fine-tuning~\citep{denker2025iterative}    & SD~1.5 (fine-tuned)        & RTX~4090  & --- & --- & 7~h   & 7~RTX~4090~h \\
\bottomrule
\end{tabular}
}
\label{tab:compute}
\end{table*}


\paragraph{Total project compute, including preliminary experiments.}
The numbers in Table~\ref{tab:compute} cover only the experiments reported in the paper. The full research project additionally required compute for preliminary explorations -- including alternative weight charts $\phi_w$, ablations of seed-selection strategies, alternative optimisers, hyperparameter sweeps, and exploratory generations on image, audio, video, and protein modalities -- that did not make it into the final paper. We estimate this additional overhead at approximately 50\% of the wall-clock figures reported in Table~\ref{tab:compute}, distributed across the same hardware mix.

\newcommand{\rowlabel}[1]{%
    \adjustbox{valign=c}{%
        \makebox[0.28\linewidth][r]{#1}%
    }%
}
\begin{figure}[t]
    \centering
    \setlength{\tabcolsep}{4pt}
    \renewcommand{\arraystretch}{1.0}
    \setlength{\extrarowheight}{2pt}

    \newcommand{\imgcell}[2]{%
        \adjustbox{valign=m}{%
            \begin{tabular}{@{}c@{}}
                $#1$ \\[-1pt]
                \includegraphics[width=0.140\linewidth]{#2}
            \end{tabular}%
        }%
    }

    \begin{tabular}{c c c c}
        &
        $\mathcal{U} = [0, 1]^{2}$
        &
        $\mathcal{U} = [0, 1]^{9}$
        &
        $\mathcal{U} = [0, 1]^{89}$
        \\[2pt]

        \rowlabel{$\bm{z} \sim p(\bm{z})$}
        &
        \imgcell{21.1}{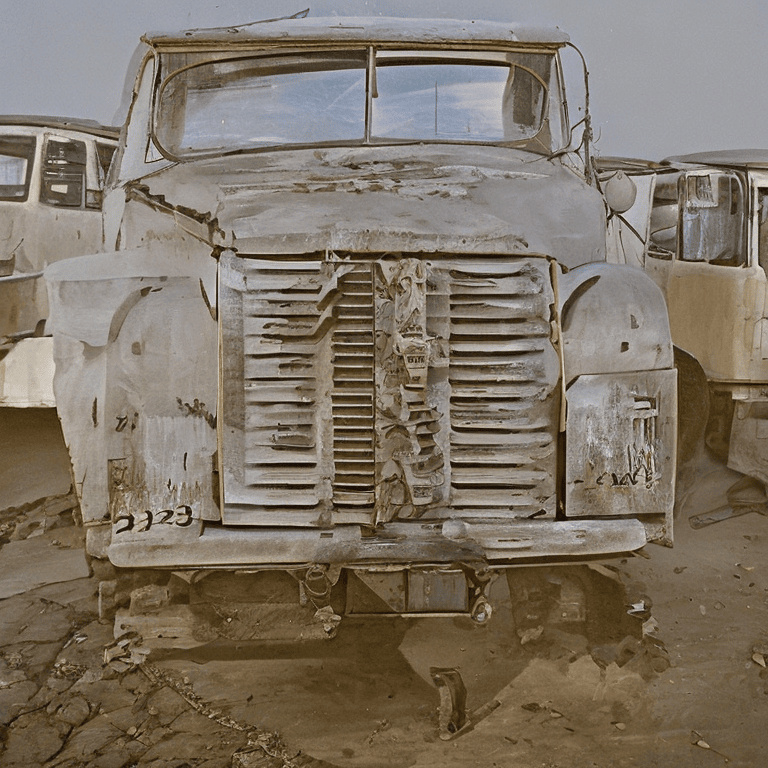}
        &
        \imgcell{21.1}{compressed_figures/image_optimisation/best_images/num_seeds_per_feature=1/target=1/score=21.1146__method=Random_in_Z_baseline.png}
        &
        \imgcell{21.4}{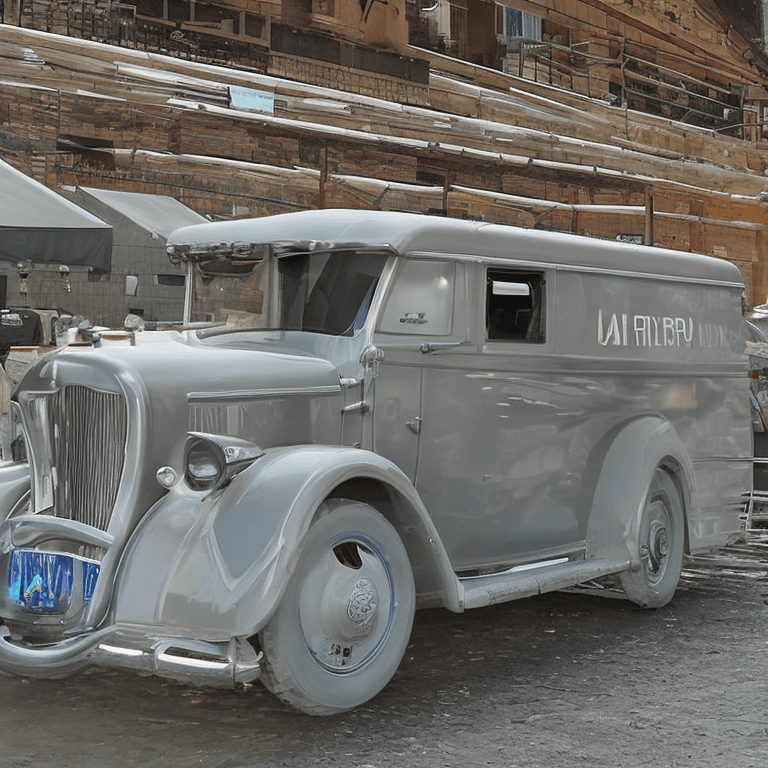}
        \\[2pt]

        \rowlabel{CMA-ES in $\mathcal{Z}$}
        &
        \imgcell{19.3}{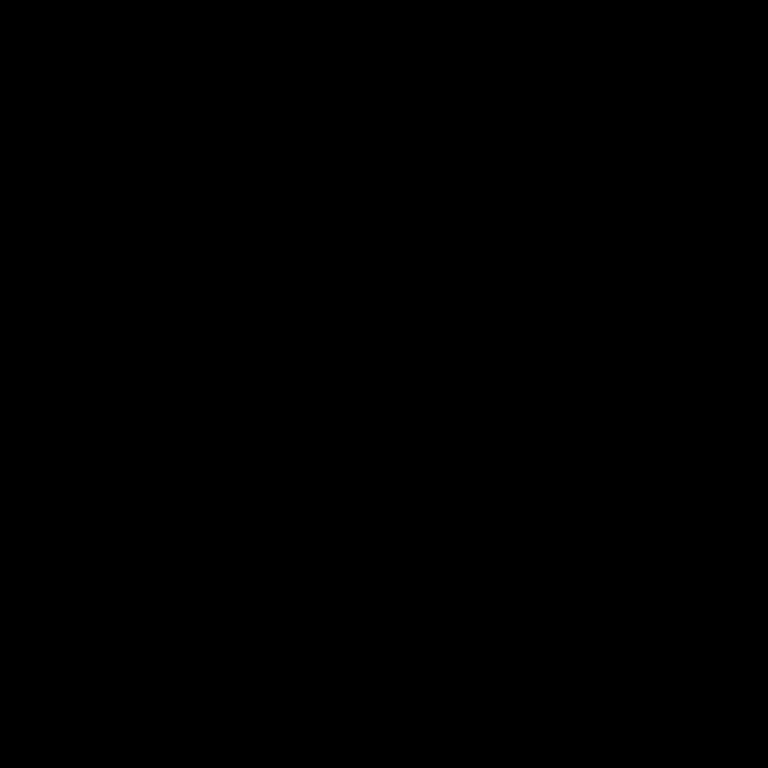}
        &
        \imgcell{19.3}{compressed_figures/image_optimisation/best_images/num_seeds_per_feature=30/target=1/score=19.2939__method=CMA-ES_in_Z.png}
        &
        \imgcell{19.3}{compressed_figures/image_optimisation/best_images/num_seeds_per_feature=30/target=1/score=19.2939__method=CMA-ES_in_Z.png}
        \\[2pt]

        \rowlabel{REMBO random seed latents (BO)}
        &
        \imgcell{19.9}{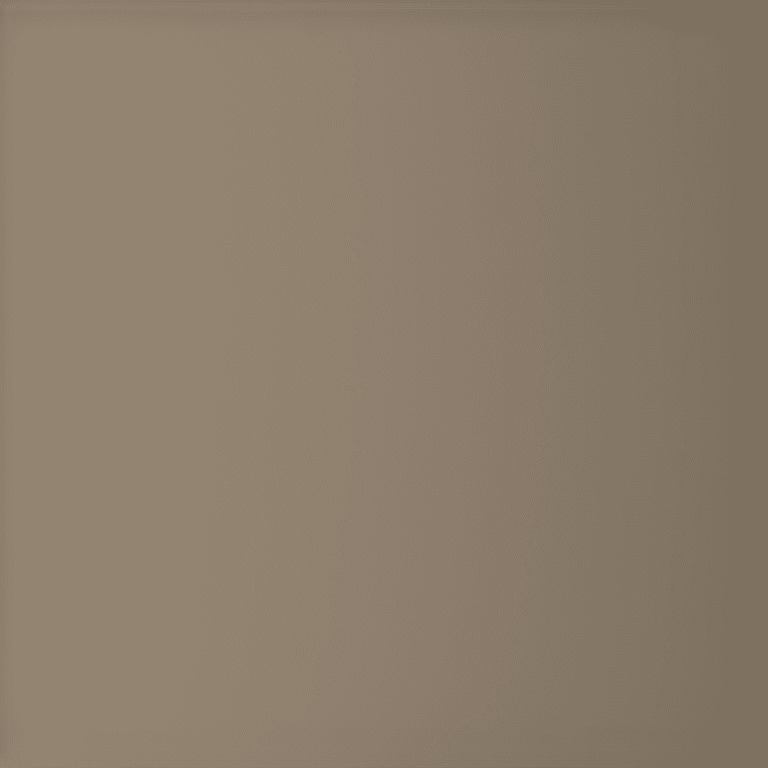}
        &
        \imgcell{15.8}{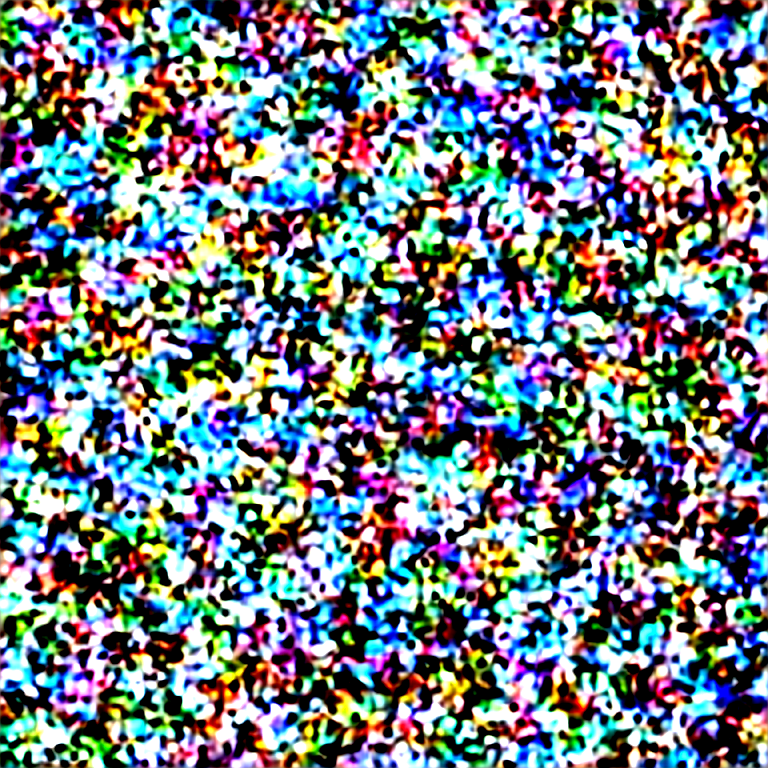}
        &
        \imgcell{16.8}{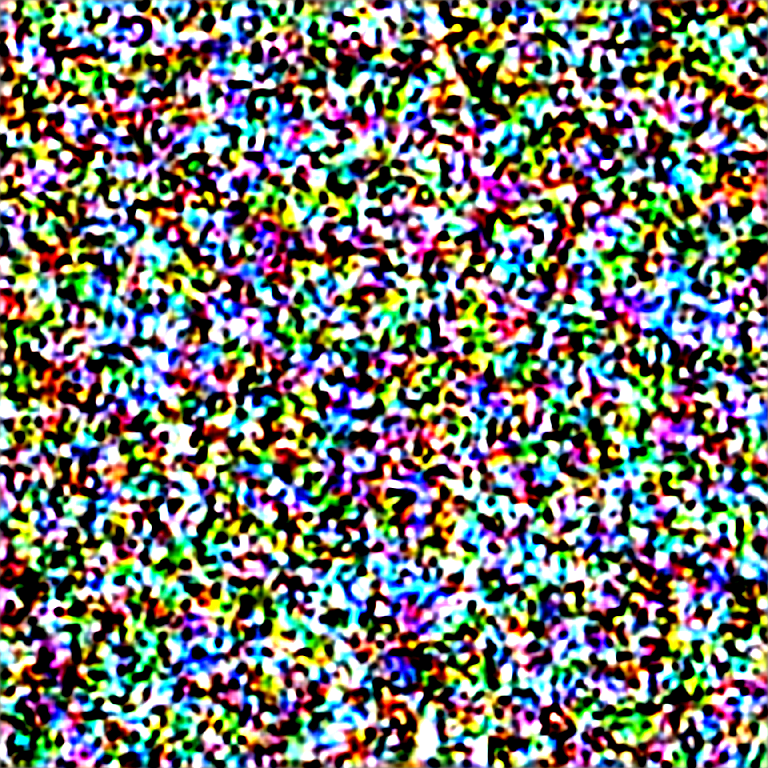}
        \\[2pt]

        \rowlabel{REMBO filtered seed latents (BO)}
        &
        \imgcell{19.7}{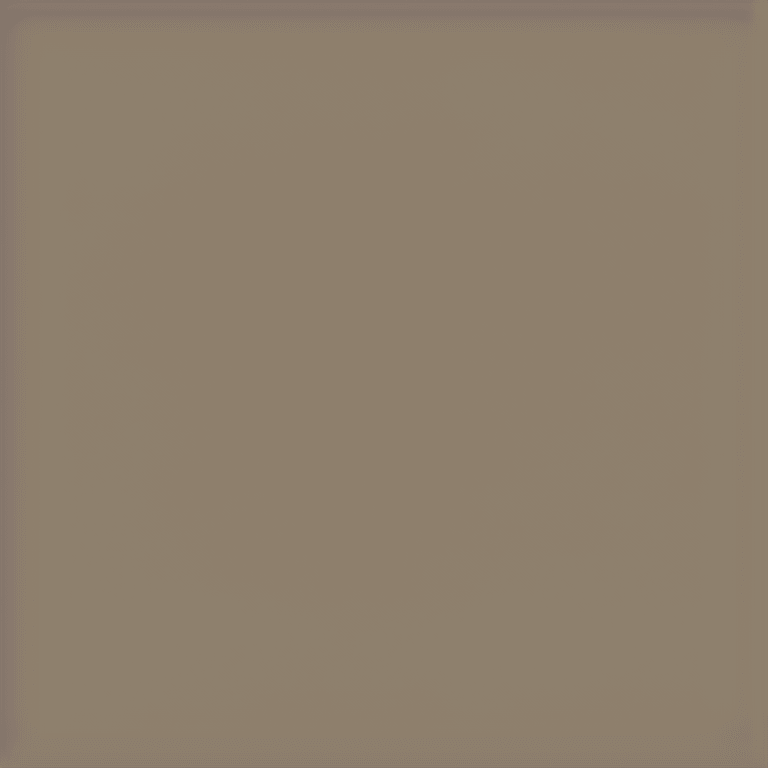}
        &
        \imgcell{16.0}{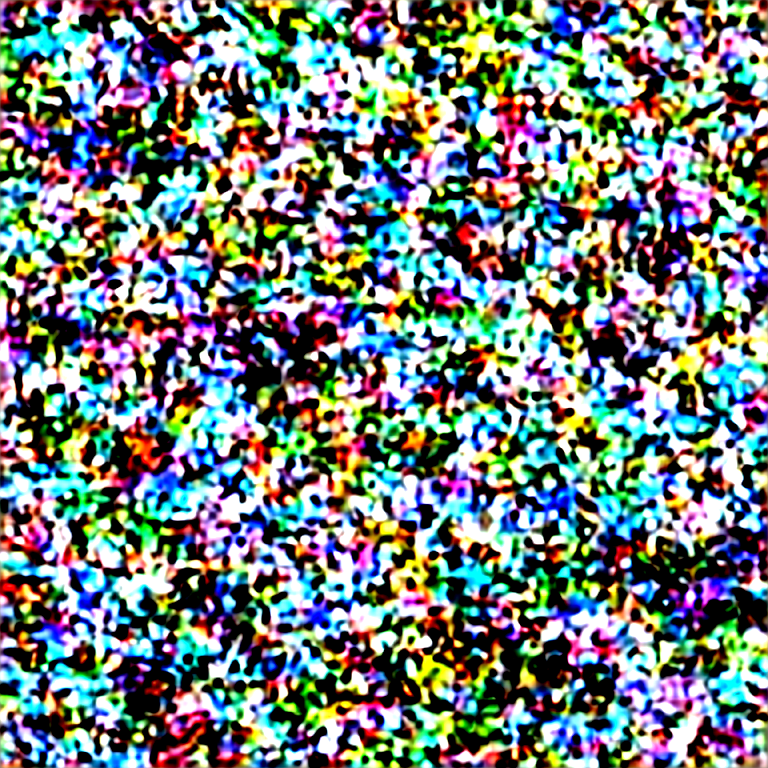}
        &
        \imgcell{16.9}{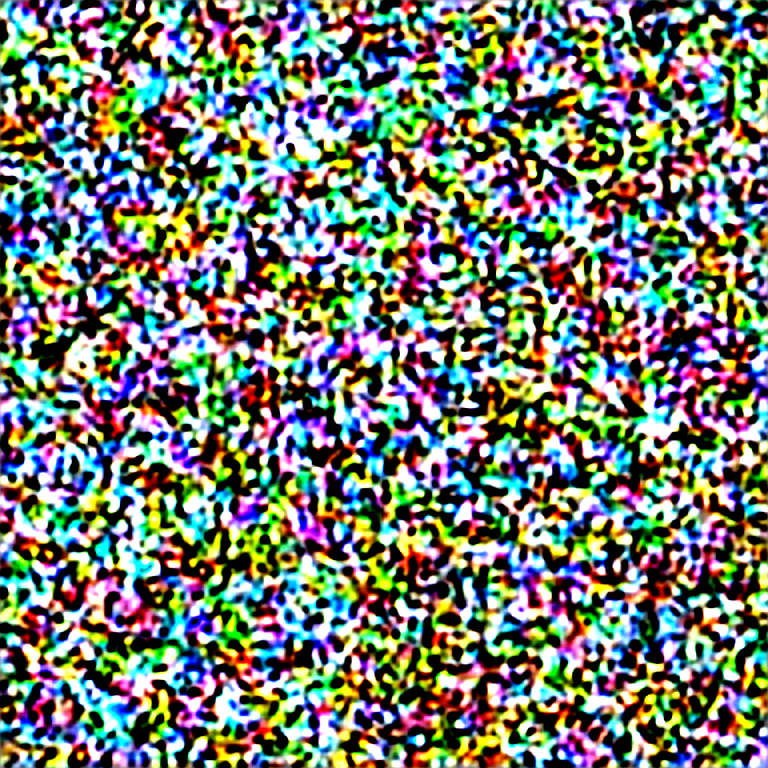}
        \\[2pt]

        \rowlabel{LOL random seed latents (BO)}
        &
        \imgcell{21.4}{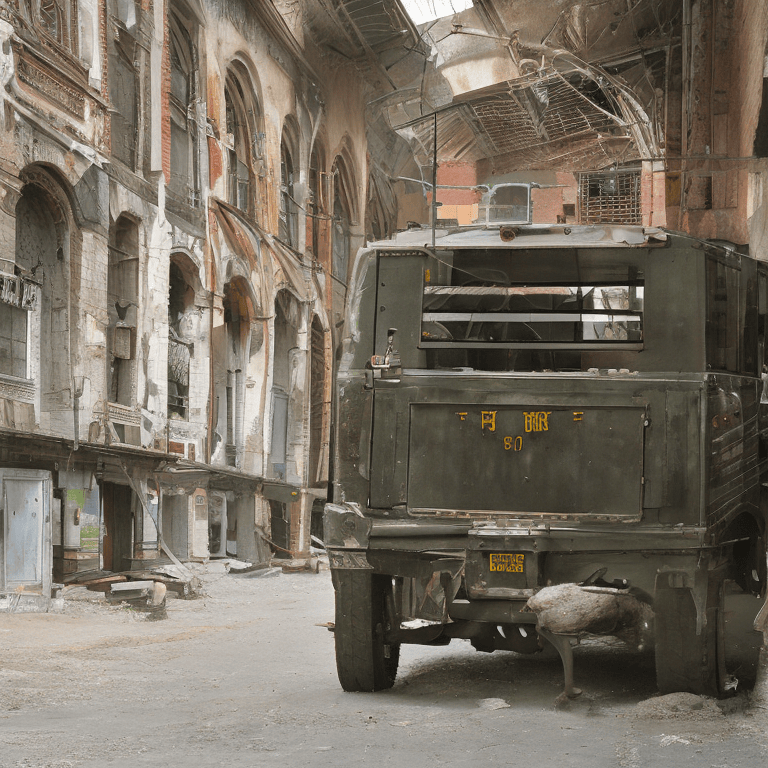}
        &
        \imgcell{21.1}{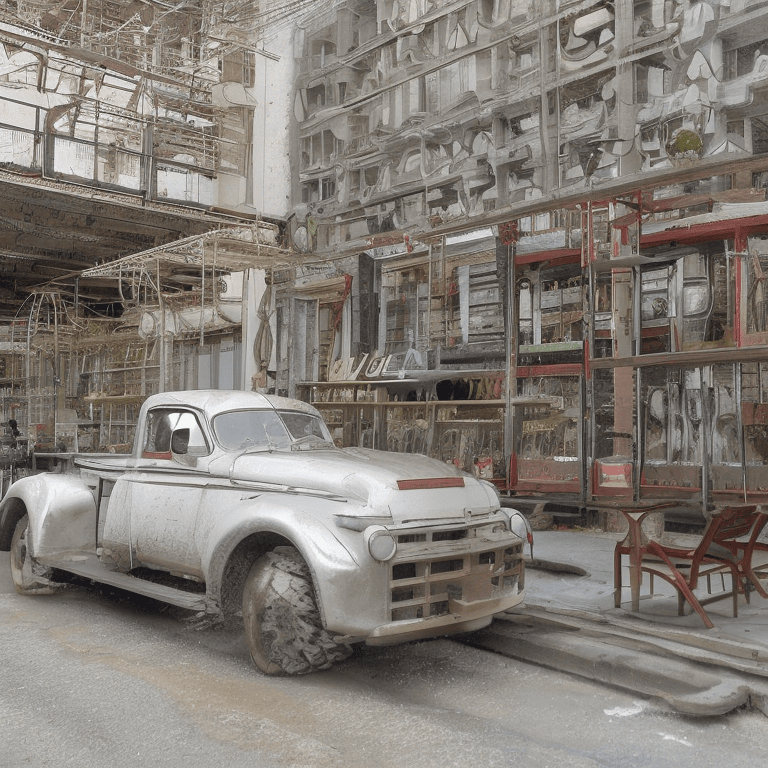}
        &
        \imgcell{22.8}{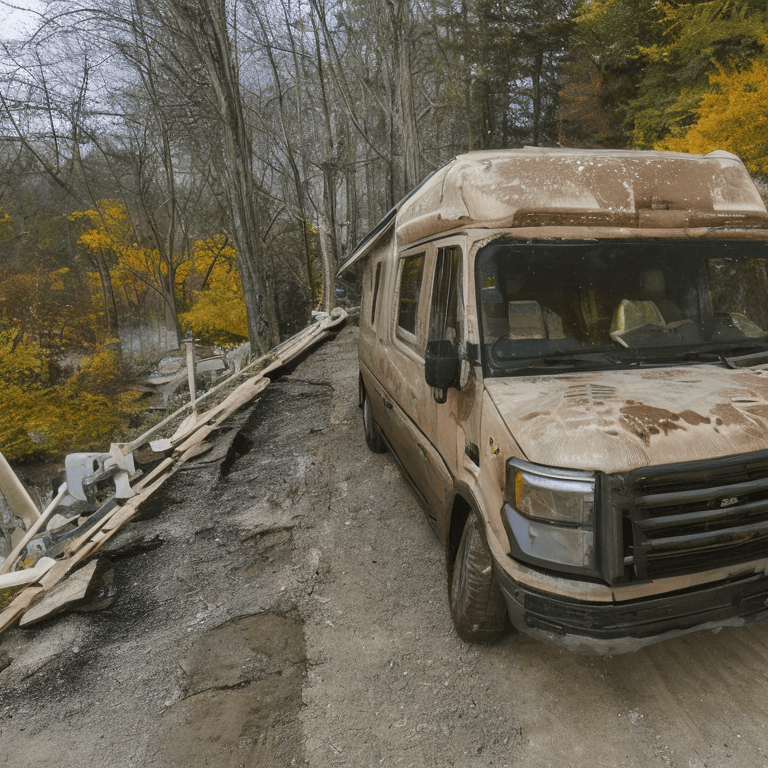}
        \\[2pt]

        \rowlabel{LOL filtered seed latents (BO)}
        &
        \imgcell{20.6}{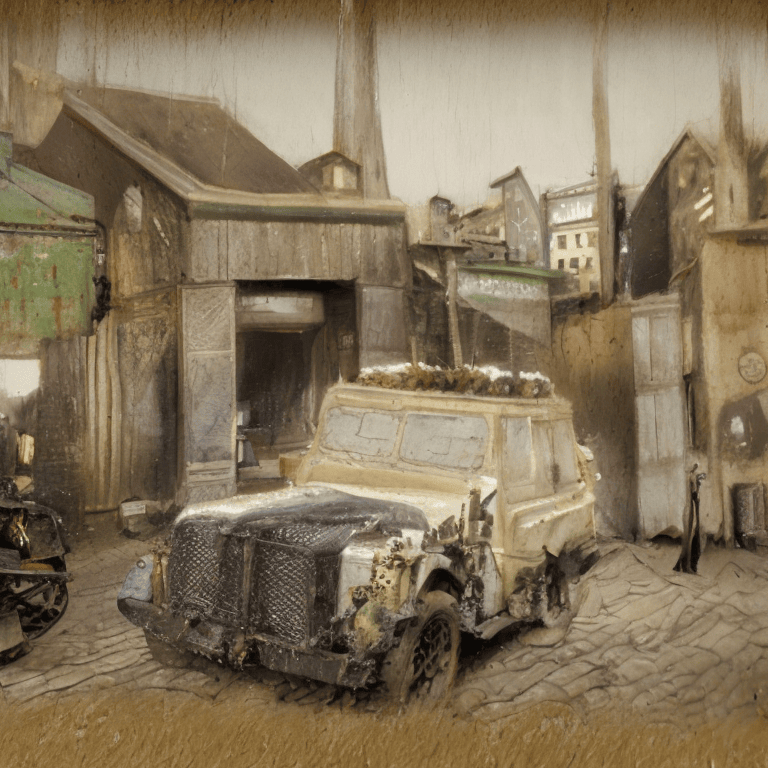}
        &
        \imgcell{21.5}{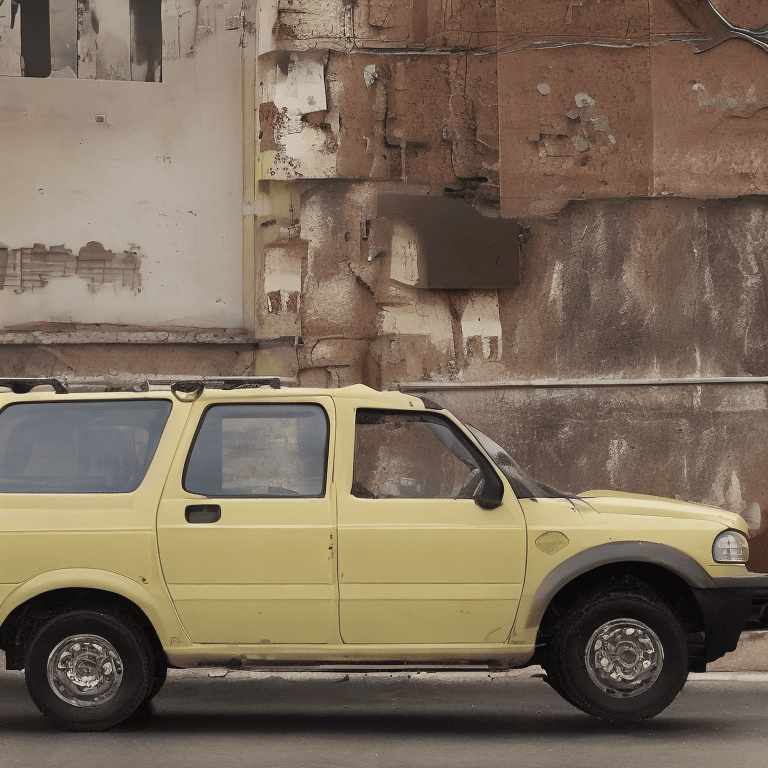}
        &
        \imgcell{23.0}{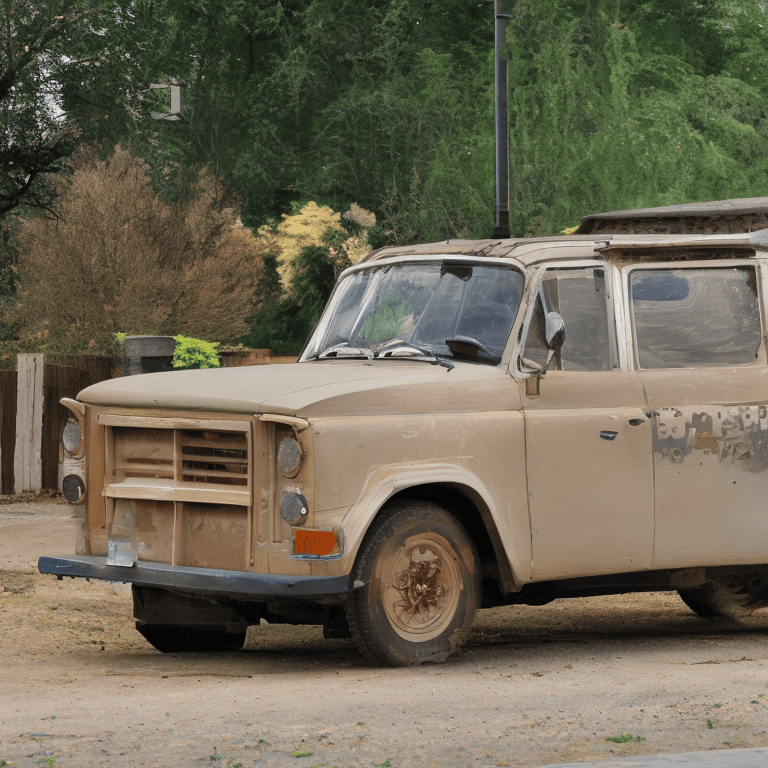}
        \\[2pt]

        \rowlabel{$O^3$ random seed latents (BO)}
        &
        \imgcell{21.1}{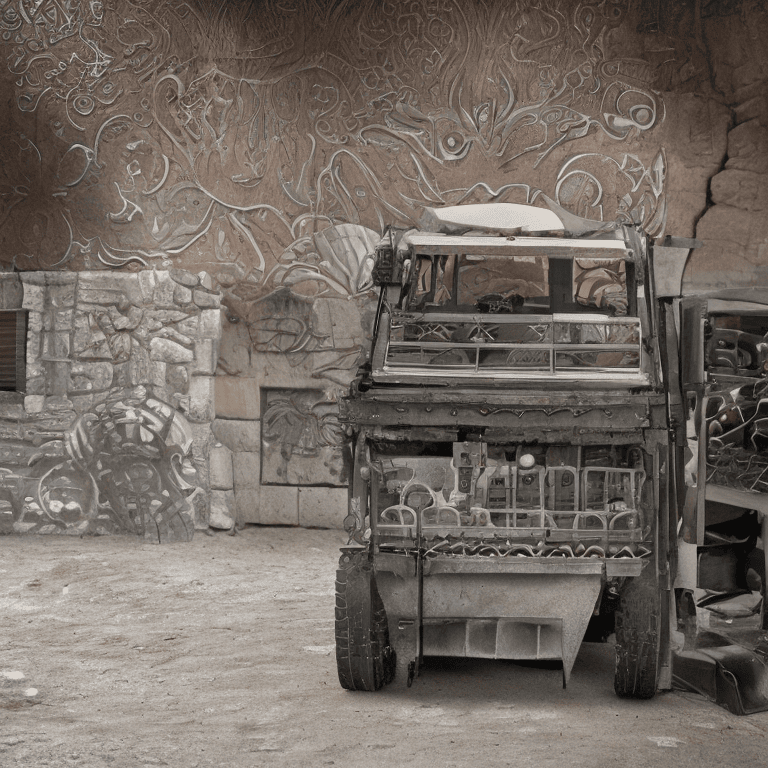}
        &
        \imgcell{21.3}{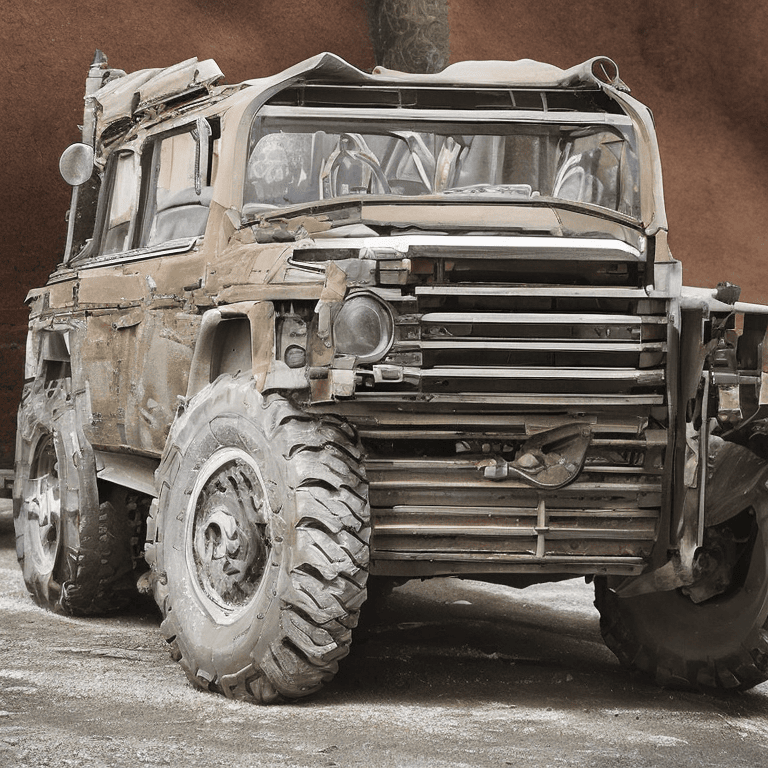}
        &
        \imgcell{22.8}{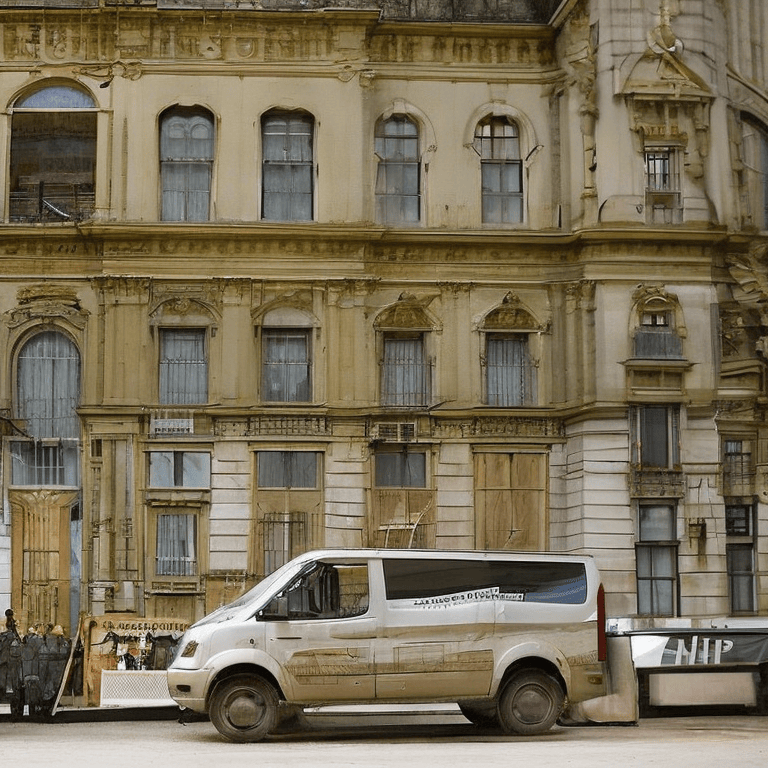}
        \\[2pt]

        \rowlabel{$O^3$ filtered seed latents (BO)}
        &
        \imgcell{20.2}{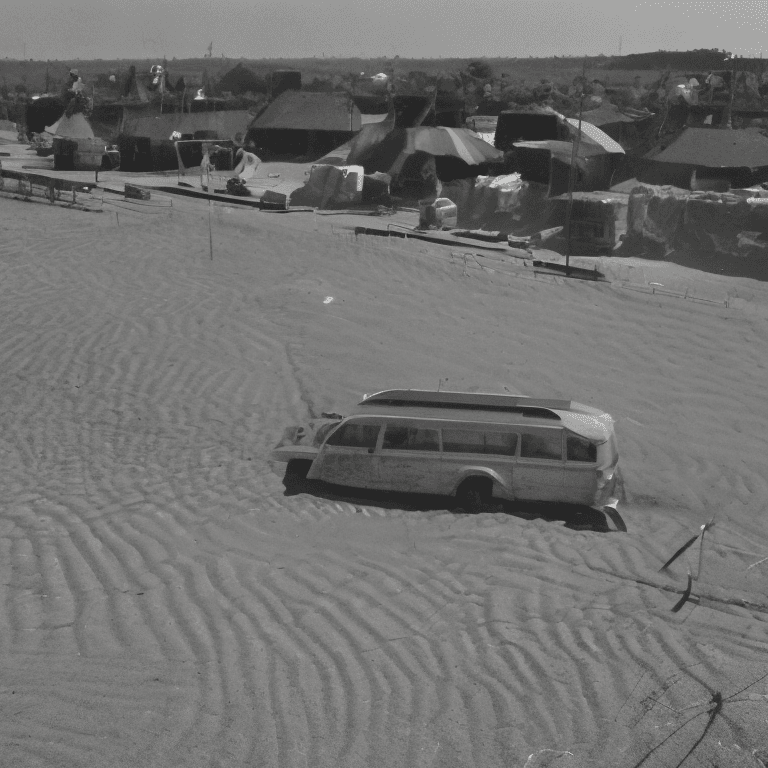}
        &
        \imgcell{21.9}{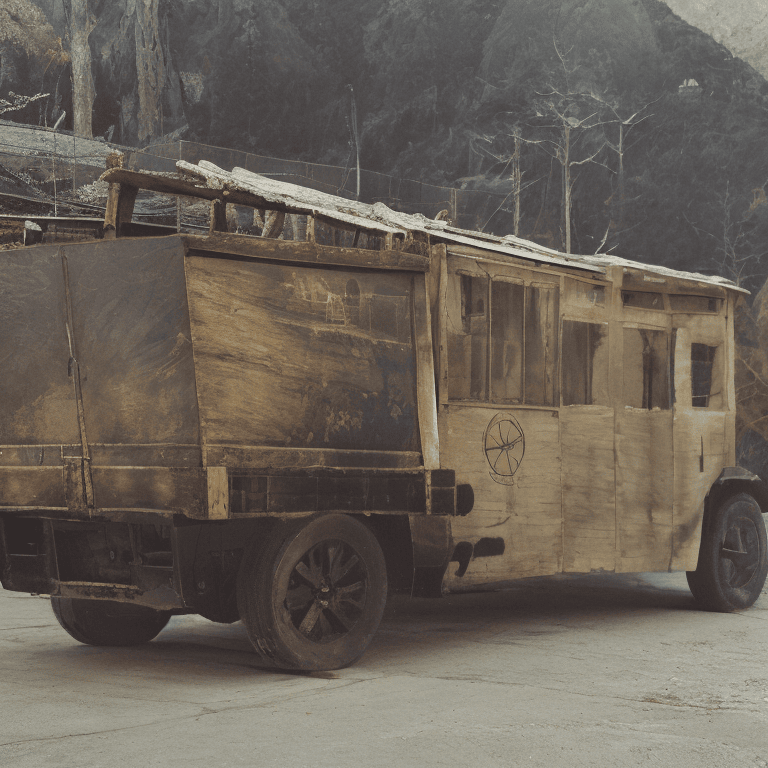}
        &
        \imgcell{23.4}{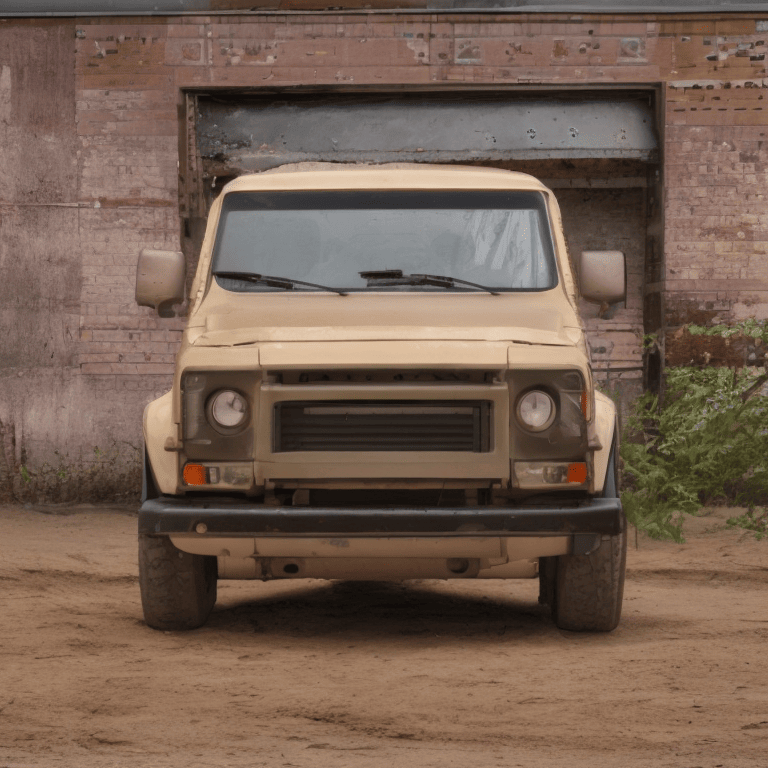}
        \\[2pt]
    \end{tabular}

    \caption{
    \textbf{Best generation found by each search method on the target prompt ``A sand beige, dusty cargo van at a construction site''}, across surrogate-space dimensionalities (columns) and search methods (rows); the number above each image is its PickScore (higher is better).
    }
\label{fig:image_optimisation_best_images}
\end{figure}

\end{document}